\documentclass[twoside,11pt]{article}


%

%
%
%

\usepackage[preprint]{jmlr2eArxiv}
\usepackage{amsmath,amssymb,amsfonts}
\usepackage{graphicx}
\usepackage{textcomp}
\usepackage{xcolor}
\usepackage{hyperref}
\usepackage{amsbsy}
\usepackage{algorithm,algpseudocode}
\usepackage{subcaption}
\usepackage{enumerate}
\usepackage{algpseudocode}
\usepackage{float}

\usepackage{ragged2e} 
\usepackage{tabularx} 
\algnewcommand{\Inputs}[1]{%
	\State \textbf{Inputs:}
	\Statex \hspace*{\algorithmicindent}\parbox[t]{.8\linewidth}{\raggedright #1}
}
\algnewcommand{\Initialize}[1]{%
	\State \textbf{Initialize:}
	\Statex \hspace*{\algorithmicindent}\parbox[t]{.8\linewidth}{\raggedright #1}
}

\providecommand{\abs}[1]{\lvert#1\rvert}
\renewcommand{\b}[1]{\ensuremath{\mathbf{#1}}} 
\newcommand{\E}[1]{\ensuremath{\mathbb{E}\left[#1\right]}}  
\newcommand{\Ex}[1]{\ensuremath{\EE[#1]}}  
\newcommand{\Et}[1]{\ensuremath{\mathbb{E}_t[#1]}}  
\newcommand{\ind}{1\hspace{-1.6mm}1} 
\newcommand{\norm}[1]{\ensuremath{\left\|#1\right\|}} 
\providecommand{\tr}[1]{\mathrm{tr}\left(#1\right)} 
\providecommand{\ip}[2]{\langle #1, #2 \rangle} 
\newcommand{\col}[1]{\textcolor{blue}{#1}} 

\renewcommand{\O}[1]{\mathcal{O}}


\def \x {{\b{x}}}

\def \e {{\b{e}}}

\def \grd {{\text{grad}}}

\def \EE {{\mathbb{E}}}

\def \G {{\mathbf{G}}}
\def \S {{\mathbf{S}}}

\def \cB {{\mathcal{B}}}
\def \cF {{\mathcal{F}}}
\def \O {{\mathcal{O}}}

\def \cL {{\mathcal{L}}}
\def \cE {{\mathcal{E}}}

\def \cI {{\mathcal{I}}}
\def \cN {{\mathcal{N}}}

\def \A {{\mathbf{A}}}
\def \B {{\mathbf{B}}}
\def \C {{\mathbf{C}}}
\def \D {{\mathbf{D}}}
\def \E {{\mathbf{E}}}
\def \F {{\mathbf{F}}}
\def \G {{\mathbf{G}}}
\def \H {{\mathbf{H}}}
\def \I {{\mathbf{I}}}
\def \M {{\mathbf{M}}}
\def \P {{\mathbf{P}}}
\def \Q {{\mathbf{Q}}}
\def \R {{\mathbf{R}}}
\def \S {{\mathbf{S}}}
\def \U {{\mathbf{U}}}
\def \V {{\mathbf{V}}}
\def \W {{\mathbf{W}}}
\def \X {{\mathbf{X}}}
\def \Y {{\mathbf{Y}}}

\newcommand{\Bup}[1]{\B^{\mathrm{up}}_{#1}}

\providecommand{\ipx}[2]{\langle #1, #2 \rangle_{\X}} 

\def \xib {{\boldsymbol{\xi}}}

\def \etab {{\boldsymbol{\eta}}}

\def \T {{\mathsf{T}}}

\def \mub {{\boldsymbol{\mu}}}

\def \Rn {{\mathbb{R}}}
\def \Pn {{\mathbb{P}^n}}

\def \cM {{\mathcal{M}}}

\def \Sig {{\boldsymbol{\Sigma}}}

\def \Expx {{\mathrm{Exp}_\X}}
\def \Expxt {{\mathrm{Exp}_{\X_t}}}
\def \Eb {{\mathbf{E}}}

\def \Rar {{\Rightarrow}}

\newtheorem{assumption}{}

\allowdisplaybreaks



\usepackage{lastpage}


\ShortHeadings{Low-complexity subspace-descent over symmetric positive definite manifold}{Yogesh Darmwal and Ketan Rajawat}
\firstpageno{1}

\begin{document}

\title{Low-complexity subspace-descent over symmetric positive definite manifold}

\author{\name Yogesh Darmwal \email yogeshd@iitk.ac.in \\
       \addr Dept. of Electrical Engineering\\       
       Indian Institute of Technology Kanpur\\
       Uttar Pradesh -208016, India
       \AND
       \name Ketan Rajawat \email ketan@iitk.ac.in \\
       \addr Dept. of Electrical Engineering\\       
       Indian Institute of Technology Kanpur\\
       Uttar Pradesh -208016, India}
\maketitle

\begin{abstract}
This work puts forth low-complexity Riemannian subspace descent algorithms for the minimization of functions over the symmetric positive definite (SPD) manifold. Different from the existing Riemannian gradient descent variants, the proposed approach utilizes carefully chosen subspaces that allow the update to be written as a product of the Cholesky factor of the iterate and a sparse matrix. The resulting updates avoid the costly matrix operations like matrix exponentiation and dense matrix multiplication, which are generally required in almost all other Riemannian optimization algorithms on SPD manifold. We further identify a broad class of functions, arising in diverse applications, such as kernel matrix learning, covariance estimation of Gaussian distributions, maximum likelihood parameter estimation of elliptically contoured distributions, and parameter estimation in Gaussian mixture model problems, over which the Riemannian gradients can be calculated efficiently. The proposed uni-directional and multi-directional Riemannian subspace descent variants incur per-iteration complexities of $\O(n)$ and $\O(n^2)$ respectively, as compared to the $\O(n^3)$ or higher complexity incurred by all existing Riemannian gradient descent variants. The superior runtime and low per-iteration complexity of the proposed algorithms is also demonstrated via numerical tests on large-scale covariance estimation and matrix square root problems.  MATLAB code implementation is publicly available on GitHub : \href{https://github.com/yogeshd-iitk/subspace_descent_over_SPD_manifold}{\col{https://github.com/yogeshd-iitk/subspace\_descent\_over\_SPD\_manifold}}
\end{abstract}

\begin{keywords}
  Subspace-descent algorithm, symmetric positive definite (SPD) manifold, geodesic convexity,  matrix square root, Riemannian adaptive algorithm.
\end{keywords}

\section{Introduction}
We consider the following optimization problem
\begin{align}
	\min_{\X \in \Pn} f(\X)\label{problem}
\end{align}
where $\Pn$ is the set of $n \times n$ real symmetric positive definite (SPD) matrices and $f:\Pn \rightarrow \Rn$ is a geodesically convex and smooth function on $\Pn$ (\cite{zhang16PMLR}); see Definition \ref{paper.def1}. 
Such problems arise in kernel matrix learning (\cite{li2009kernel}), covariance estimation of Gaussian distributions (\cite{Wiesel2012geodesic,wiesel2015structured,zhang2013multivariate}), maximum-likelihood parameter estimation of Elliptically Contoured Distributions (ECD) (\cite{sra2013geometric}), parameter estimation in Gaussian mixture model (\cite{Hosseini2015}),  matrix square root (\cite{higham1997stable}, \cite{jain2017global}, \cite{sra2015matrix}, \cite{gawlik2019zolotarev}, \cite{oviedo2020scaled}) and matrix geometric mean estimation (\cite{Moakher2005Differential,bhatia2006means, bhatia2013riemannian,weber2020riemannian,ZhangNIPS2016,LiuNIPS2017,zhang16PMLR}). We seek to leverage the structure of $f$ and $\Pn$ in order to design computationally efficient manifold optimization algorithms for solving \eqref{problem}.

Standard approaches to solve \eqref{problem}, relying on projections onto $\Pn$ or on the interior point method (\cite{benson2003solving}), become increasingly difficult to implement as $n$ increases. Recent years have witnessed the development of Riemannian optimization methods, which seek to be more efficient and scalable by utilizing the geoemtry of $\Pn$ (\cite{zhang16PMLR,ZhangNIPS2016,LiuNIPS2017}). However, most state-of-the-art Riemannian optimization methods, including the Riemannian gradient descent and its variants, involve matrix exponentiation, matrix inversion, matrix square root, and dense matrix multiplication at each iteration.
Consequently, these algorithms incur a per-iteration complexity of $\O(n^3)$, which might be prohibitive in large-scale settings, such as those encountered in various applications listed earlier. 
The computational and memory limitations of modern computers motivate the need for large-scale optimization algorithms with per-iteration complexity that is quadratic or even linear in $n$. 

Coordinate (subspace) descent methods have been widely applied to solve large-scale problems in the Euclidean space (\cite{bertsekas2016nonlinear, luenberger2015linear}). Coordinate-wise operations are particularly attractive for huge-scale problems, where full-dimensional vector operations have prohibitive computational and memory requirements (\cite{Nesterov2012cd, saha2013nonasymptotic, beck2015cyclic, karimi2016linear, richtarik2014iteration, Fercoq2015acc}). A careful observation reveals that the choice of basis vectors (canonical basis in Euclidean space) and the linear nature of update equations lead to such simplicity of subspace descent algorithms in the Euclidean setting.

Coordinate descent algorithms, when directly adapted to Riemannian manifolds, lose their simplicity. Since manifolds do not possess vector space structure, the non-linear nature of the updates prevents us from partitioning the coordinates into blocks that can be individually updated. Within this context, the work in \cite{gutman2022coordinate} develops a Reimannian coordinate descent algorithm for general manifolds, but the proposed approach does not always lead to computationally simpler algorithms for solving \eqref{problem}.
We propose the Riemannian subspace descent algorithm that achieves a lower per-iteration computational cost by maintaining and directly updating Cholesky factors of the iterates. The different variants of the proposed algorithm allow selecting one or more directions, either randomly or greedily. 

The per-iteration complexity of any Riemannian first order optimization algorithm can be seen as consisting of two components: cost of calculating the Riemannian gradient and the cost of carrying out the update, which may involve complicated operations such as matrix exponentiation. Hence, to achieve a lower per-iteration complexity, we focus on the following class of functions: 
\begin{align}
	\cF=\left\{f\Bigg|f\left(\boldsymbol{\X}\right)=g\left(\begin{array}{ll} \left\{\tr{\C_p\X^{-1}}\right\}_{1\leq p\leq  P},\left\{\tr{\D_q \X}\right\}_{1\leq q\leq Q},\log\det \X,\\
		\left\{\tr{\X\A_r\X\H_r}\right\}_{1\leq r\leq R}, \left\{\tr{\X^{-1}\F_s\X^{-1}\G_s}\right\}_{1\leq s\leq S},
		\\
		\left\{\tr{\P_m\X\Q_m\X^{-1}}\right\}_{1\leq m\leq M}
	\end{array}
	\right)\right\} \label{sc_fun_class}.
\end{align}
 
Interestingly, $\cF$ is rich enough to include a variety of geodesically convex and non-convex functions, and covers all the applications listed earlier except matrix geometric mean problem of $N$ SPD matrices for $N>2$. For $N=2$, the matrix geometric mean of two matrices $\W_1$ and $\W_2$ is equal to the minimizer of the function (see Appendix \ref{MGM_two}) 
\begin{align}
	f(\X)=\sum_{i=1}^{N}\tr{\W_i\X^{-1}+\W_i^{-1}\X}
\end{align}
which lies in the function class $\cF$.

To simplify the update equation, we begin by identifying a ``canonical" set of orthogonal basis vectors for the tangent space at a given point. Subsequently, the update direction is carefully selected so as to avoid dense matrix exponentiation operation in the update equation. The resulting update takes the form of multiplying the Cholesky factor of the iterate with a sparse matrix, which is amenable to efficient implementation. 

The base version of the proposed algorithm is the uni-directional randomized Riemannian subspace descent (RRSD) algorithm that randomly selects a single (out of possible $\O(n^2)$) subspace direction at every iteration, and incurs only $\O(n)$ complexity per-iteration. Its multi-directional variant selects $\O(n)$ dimensional subspace at every iteration, and incurs $O(n^2)$ complexity per-iteration. We also propose greedy subspace selection rules, resulting in the Riemannian greedy subspace descent (RGSD) algorithm, whose uni-directional and multi-directional variants incur per-iteration complexities of $\O(n^2)$ and $\O(n^2\log(n))$, respectively.



The key results are summarized in the Table \ref{table_result_comapre}. In the table, $t$ represents total number of iterates, $\mu$ represents strong-convexity parameter, $L$ represents Lipschitz-smoothness parameter, and $c$ is the constant dependent on the diameter and sectional curvature lower bound (\cite{zhang16PMLR}). In our case, the factor $\left(\frac{1}{n}\right)$ is due to the fact that the proposed algorithms are subspace descent algorithms, which work in an $\O(n)$-dimensional subspace, whereas Riemannian gradient descent (RGD) and Riemannian accelerated gradient descent (RAGD) work in the full $\big(\frac{n(n+1)}{2}\big)$-dimensional tangent space of the manifold $\Pn$.
\begin{table}[h!]
	\begin{center}
		\begin{tabular}{|p{5cm}|p{4cm}|p{2.5cm}|p{2cm}|} 
			\hline
			Algorithm& Convergence rate & Per-iteration complexity& Subspace-dimension \\
			\hline
			RRSD (proposed) &  $\mathcal{O}\left(\left(1-\frac{\mu}{4nL}\right)^t \right)$   \hfill&  \hfil $\O(n^2)$      &     \hfil $\O(n)$  \\ 
			\hline
			RGSD (proposed) &  $\mathcal{O}\left(\left(1-\frac{\mu}{8nL}\right)^t \right)$   \hfill&  \hfil $\O(n^2\log(n))$      &     \hfil $\O(n)$  \\ 
			\hline			
			RGD \cite{ZhangNIPS2016}  & $\mathcal{O}\left(\left(1-\min\left\{\frac{1}{c},\frac{\mu}{L}\right\}\right)^t\right)$    &    \hfil $\O(n^3)$    & \hfil    $\O(n^2)$  \\
			\hline
			RAGD \cite{zhang18estimate} & $\mathcal{O}\left(\left(1-\frac{9}{10}\sqrt{\frac{\mu}{L}}\right)^t\right)$    &   \hfil   $\O(n^3)$    &     \hfil $\O(n^2)$  \\
			\hline	
		\end{tabular}
	\end{center}
	\caption{Comparison of  proposed multi-directional Riemannian subspace-descent algorithms with RGD and RAGD algorithms for the class of strongly convex functions in \eqref{sc_fun_class}.	}\label{table_result_comapre}
\end{table}

\subsection{Related work}
A thorough analysis of geodesically convex functions over the manifold $\Pn$ is available in \cite{Sra2015conic}. Complexity of several first order Riemannian optimization algorithms over Hadamard manifolds was first provided in  \cite{zhang16PMLR}. Of particular importance is the Riemannian stochastic gradient descent (RSGD) algorithm proposed in \cite{zhang16PMLR}, which for problems of the form
\begin{align}
	\min_{\X \in \Pn} \sum_{s=1}^{S}f_p(\X)
\end{align}
uses the updates
\begin{align}
	\X_{t+1}
	&= \mathbf{\X_t}^{1/2}\exp\left(-\alpha_tS\X_t^{-1/2} \grd^R f_s\left(\X_t\right)\X_t^{-1/2}\right)\X_t^{1/2} \label{Riemann_SGD}
\end{align}
where $s$ is uniformly selected index from the index set $\left\{1,2,\dots,S\right\}$ and $\grd^R f_s$ is Riemannian gradient of function $f_s$; see Definition \ref{Riemann_grad}. Although the RSGD algorithm incurs lower complexity than the RGD algorithm, it still uses matrix exponentiation and matrix multiplication at every iteration, and therefore incurs a per-iteration complexity of $\mathcal{O}(n^3)$.

The first accelerated first-order algorithm for geodesically convex functions was proposed in \cite{LiuNIPS2017} by extending Nesterov's acceleration idea to nonlinear spaces. One limitation of the proposed algorithm, as pointed out in \cite{zhang18estimate}, is its dependence on an exact solution to a nonlinear equation at every iteration. To address this issue, \cite{zhang18estimate} proposed a tractable accelerated algorithm for the geodesically strongly convex function class. An extension to broader classes of geodesically convex and weakly-quasi-convex functions is also proposed in \cite{alimisis21momentum}. Finally, stochastic variants of accelerated RGD have likewise also been developed \cite{Bonnabel2013stoch, zhang2018r, ZhangNIPS2016, kasai2016riemannian, tripuraneni2018averaging, MASAGABabanezhad2019, hosseini2020recent, ahn2020nestero}. Due to the non-linearity of manifolds and the fact that the tangent space $T_{\X}\mathcal{M}$ at a point ${\X}$ is different from the tangent space $T_{\mathbf{Y}}\mathcal{M}$ at ${\mathbf{Y}} \neq \X$, all accelerated RGD variants require computationally intensive matrix exponentiation and parallel transport steps \cite{Sra2015conic, zhang18estimate} at every iteration. 

The first attempts to extend the idea of coordinate descent to the Stiefel manifold were made in \cite{celledoni2008descent, shalit2014coordinate, gao2018new}. The first work in the direction of extending the coordinate descent algorithm to general manifolds is \cite{gutman2022coordinate}, which takes its motivation from the Block Coordinate Descent (BCD) algorithm on Euclidean space. As in BCD, the tangent subspace descent (TSD) algorithm of \cite{gutman2022coordinate} works by first choosing a suitable tangent subspace, projecting the gradient onto it, and then taking a descent update step in the resulting direction. The approach in  \cite{gutman2022coordinate} can be computationally expensive because it requires multiple parallel transport operations in the deterministic case and requires the use of an arbitrary subspace decomposition in the randomized case. Furthermore, regardless of the chosen subspace, a matrix exponentiation is still required to complete the update step in the case of the SPD manifold, and hence the per-iteration complexity is still $\O(n^3)$. However, \cite{gutman2022coordinate} extend the complexity-reducing subspace selection approach of  \cite{shalit2014coordinate} for the set of orthogonal matrices to the general Stiefel manifold. Our work is similar to \cite{shalit2014coordinate, gutman2022coordinate} in this regard as we propose a complexity-reducing subspace selection for the SPD manifold. Other works, such as \cite{huang21wasserstein,firouzehtarash2021riemannian}, employ similar ideas to those proposed by \cite{gutman2022coordinate} for product manifolds, updating variables block-wise corresponding to a manifold component of the product manifold at a time. 

\section{Notation and Background}\label{paper.sec.2}
This section specifies the notation employed in the work and discusses the relevant background that is necessary to understand the development of the proposed optimization algorithms. 

\subsection{Notation} We denote vectors (matrices) by boldface lower (upper) case letters. The trace and transpose operations are denoted by $\tr{\cdot}$ and $(\cdot)^\T$, respectively. The indicator function $\ind_{\mathcal{C}}$ takes the value 1 when the condition $\mathcal{C}$ is true, and 0 otherwise. The $(i,j)$-th entry, $i$th row and $i$th column of matrix $\A$ is written as $\left[\A\right]_{ij}$, $\left[\A\right]_{i:}$ and $\left[\A\right]_{:i}$ respectively. We denote the $n \times n$ identity matrix by $\I_n$ or by $\I$ when its size is clear from the context. Similarly, $\I_{ij}$ denotes a square matrix whose $(i,j)$-th entry is 1 and all other entries are 0, with the size of the matrix being inferred from the context. The lower triangular Cholesky factor of the symmetric positive definite matrix $\A$ is denoted by $\cL( \A)$, so that $\A = \cL(\A)\cL(\A)^\T$ and its smallest eigenvalue is denoted by $\lambda_{\min}(\A)$. The gradient of a function $f:\Rn^{n\times n} \rightarrow \Rn$ is denoted by $\grd f(\X)$, while its Riemannian gradient is denoted by $\grd^R f(\X)$. We denote Euclidean and Riemannian Hessians by $Hf(\X)$ and $H^Rf(\X)$ respectively. For a Riemnnian manifold $\mathcal{M}$ with Riemannian connection $\nabla$, the Riemannian Hessian $H^Rf(\X)$ is the linear map $H^Rf(\X):T_{\X}\mathcal{M}\rightarrow T_{\X}\mathcal{M}$ defined as $H^Rf(\X)[\V]=\nabla_{\V}\grd^R  f(\X)$ \cite[p.90]{boumal2023introduction}. The Euclidean and Frobenius norms are denoted by $\norm{\cdot}_2$ and  $\norm{\cdot}_F$, respectively. The tangent space at a point $\X \in \Pn$ is denoted by $T_\X\Pn$, with tangent vectors denoted by boldface Greek lower case letters, e.g., $\xib$. Given two tangent vectors $\xib$, $\etab \in T_\X\Pn$, their inner product is given by $\ip{\xib}{\etab}_\X$ and the corresponding norm is given by $\norm{\xib}_\X := \sqrt{\ip{\xib}{\xib}_\X}$.  
Given arbitrary $\X, \Y \in \Pn$, and the geodesic $\gamma(\lambda)$ joining them so that $\gamma(0)=\X$ and $\gamma(1)=\Y$, the tangent vector at $\X$ is denoted by $\xib_{\X\mathbf{Y}} := \gamma'(0)$. The power and exponential maps of an SPD matrix $\W$ with eigenvalue decomposition $\U\D\U^\T$ are given by 
\begin{align}
	\W^k &= \U\D^k\U^\T & \exp(\W) &= \U\exp(\D)\U^\T = \sum_{k=0}^\infty \frac{\W^k}{k!} \label{matexp}
\end{align}
for $k \in \{0,1,2,\dots\}$, where $[\D^k]_{ii} = [\D]_{ii}^k$ and $[\exp(\D)]_{ii} = \exp([\D]_{ii})$ for all $1\leq i \leq n$. 

\subsection{Background on manifold optimization} This work concerns with Riemannian manifolds, which are manifolds equipped with the Riemannian metric (\cite{lee2018introduction}). For the manifold of SPD matrices considered here, the tangent space $T_\X\Pn$ is naturally isomorphic to the set of symmetric matrices $\mathbb{S}^n$ (\cite{bridson2011metric}), and endowed with the Riemannian metric
\begin{eqnarray}
	\ipx{\xib}{\etab} = \tr{\X^{-1}\xib\X^{-1}\etab}. \label{metric} 
\end{eqnarray}
The choice of metric in \eqref{metric} renders $\Pn$ a Hadamard manifold, i.e., a manifold with non-positive section curvature. Hadamard manifolds are useful when designing optimization algorithms since they have a unique distance minimizing curve (geodesic) between any two points. Specifically, the geodesic $\gamma:[0,1]\rightarrow \Pn$ starting at $\X \in \Pn$ in the direction of the tangent vector $\gamma'(0) = \xib$ is given by \cite{bhatia2007positive}:
\begin{align}
	\gamma(\lambda) &= \X^{1/2}\exp\left(\lambda\X^{-1/2}\xib\X^{-1/2}\right)\X^{1/2}.  \label{geodesicsqrt}
\end{align}
Equivalently, given the Cholesky factorization $\X = \B\B^\T$, the geodesic can also be written as
\begin{align}
	\gamma(\lambda) &= \B\exp\left(\lambda\B^{-1}\xib\B^{-\T}\right)\B^{\T}.  \label{geodesiccholesky}
\end{align}
Finally, $\Expx:T_\X\Pn \rightarrow \Pn$ is the exponential map such that $\Expx(\xib) = \gamma(1)$.

For manifold $\Pn$, the parallel transport of a tangent vector $\etab \in T_{\X}\Pn$ along the geodesic in \eqref{geodesicsqrt} is given by \cite{Sra2015conic}:
\begin{eqnarray}
	P_{\X}(\lambda) &=& \X^{1/2} \exp\left( \lambda\frac{1}{2}\X^{-1/2}\xib \X^{-1/2}\right) \X^{-1/2}\eta\X^{-1/2}\exp\left( \lambda\frac{1}{2}\X^{-1/2}\xib \X^{-1/2}\right)\X^{1/2}. \label{parllel_trsprt}
\end{eqnarray}
We remark that the parallel transport is required to calculate the momentum in accelerated gradient algorithms (\cite{Bonnabel2013stoch,zhang2018r,ZhangNIPS2016,kasai2016riemannian,tripuraneni2018averaging,MASAGABabanezhad2019,hosseini2020recent,ahn2020nestero}) and is the most computationally intensive step of these algorithms. The proposed RRSD and RGSD algorithms will however not use the parallel transport step. Next, we introduce some  important definitions for the Riemannian manifold $\Pn$ with the Riemannian metric as specified in \eqref{metric}.

We first look at the definitions of the directional derivative and the Riemannian gradient $\grd^Rf(\X)$ \cite[p.~40]{absil2007optimization} \cite[p.~24]{boumal2014optimization}.

\begin{definition}[\textbf{Directional derivative}]
	Let $f:\Pn \rightarrow \Rn$ be a smooth function and $\gamma(\lambda):\mathbb{R}\rightarrow \Pn$ be a smooth curve satisfying $\gamma(0)=\X$ and $\gamma'(0)=\xib$.  The directional derivative of $f$  at $\X$ in the direction $\xib \in T_{\X}\Pn$ is the
	scalar \cite[p.~40]{absil2007optimization} \cite[p.~24]{boumal2014optimization}:
	\begin{align}
		Df_{\X}(\xib)= \frac{d}{dt}f(\gamma(\lambda))\Big|_{\lambda=0} 
	\end{align}
\end{definition}
\begin{definition}[\textbf{Riemannian gradient}]\label{Riemann_grad}
	The Riemannian gradient of a differentiable function $f:\Pn\rightarrow \Rn$ at $\X \in \Pn$  is defined as the unique tangent vector  $\grd^R f(\X) \in T_{\X}\Pn$ satisfying \cite[p.~46]{absil2007optimization} \cite[p.~26]{boumal2014optimization}:
	\begin{align}
		Df_{\X}(\xib)&=\langle \grd^R f(\X),\xib\rangle_{\X}
	\end{align}
\end{definition}
The Riemannian and Euclidean gradients for $\Pn$ are related by \cite[p. 722]{Sra2015conic} \cite[p. 506]{ferreira2020first}
\begin{align}
	\grd^R f(\X) = \X\grd f(\X)\X. \label{RiemanEuclideangrad}
\end{align} 

We make the following assumption about the smoothness of $f$. 

\begin{assumption}\label{a-smooth}
	The function $f:\Pn \rightarrow \Rn$ is such that its gradient vector field is $L$-Lipschitz smooth.
\end{assumption}
An implication of Assumption \ref{a-smooth} is that given two arbitrary points $\X$, $\Y \in \Pn$, and connecting  geodesic starting at $\X$ with $\gamma'(0) = \xib_{\X\Y}$, the following inequality holds (\cite{zhang16PMLR,LiuNIPS2017}):
\begin{align}
	f(\Y) \leq f(\X) +\langle \grd^R f(\X), \xib_{\X\Y}\rangle_{\X} + \frac{L}{2}\|\xib_{\X\Y}\|_{\X}^2. \label{Lipschitz_cont_grad}
\end{align}
An example of a function with Lipschitz smooth gradient is (\cite{ferreira2019gradient}) 
\begin{align}
	f(\X) = a \log(\det(\X)^{b_1}+b_2)-c\log(\det\X)\label{lips_sooth_exmpl}
\end{align}
for positive numbers $a$, $b_1$, $b_2$, and $c$ with constant $L < ab_1^2n$. Interestingly, gradient vector fields of the functions $\tr{\C\X^{-1}}$  and $\tr{\C\X}$ are not Lipschitz smooth, unless restricted to a compact set.

Next, we will consider the notion of (strong) convexity of functions over $\Pn$. 

\begin{definition}[\textbf{Geodesically convex functions}]\label{paper.def1}
	A function	$f : \Pn\rightarrow \Rn$ is geodesically  convex (g-convex) if its restrictions to all geodesics are convex \cite[p. 64]{rapcsak1997smooth}.
\end{definition}
Definition \ref{paper.def1} implies that, for $0\leq \lambda\leq 1$, the following inequalities hold for every geodesic $\gamma(\lambda)$ joining the two arbitrary points $\X, \Y \in \Pn$:
\begin{align}
	f(\gamma(\lambda)) &\leq (1-\lambda)f(\X)+\lambda f(\Y).
\end{align}
For instance, both $\tr{\C\X^{-1}}$  and $\tr{\C\X}$ are  g-convex functions in $\Pn$ for $\C\succcurlyeq 0$.

\begin{definition}[\textbf{geodesically	$\mu$-strongly convex function}]
	Function $f : \Pn\rightarrow \mathbb{R}$  is called  geodesically
	$\mu$-strongly convex ($\mu$-strongly g-convex) if for any two arbitrary points $\X$, $\Y \in \Pn$, and  connecting  geodesic starting at $\X$ with $\gamma'(0) = \xib_{\X\Y}$, the following inequality holds (\cite{zhang16PMLR,LiuNIPS2017}):
	\begin{align}
		f(\Y)&\geq  f(\X)+\langle \grd^R f(\X), \xib_{\X\Y}\rangle_{\X} + \frac{\mu}{2}\|\xib_{\X\Y}\|_{\X}^2.  \label{geodesic_strong_convx}
	\end{align}
\end{definition}

Having defined the notion of strong convexity in Riemannian spaces, we state the following assumption. 

\begin{assumption}\label{stronglyconvex}
	The function $f:\Pn \rightarrow \Rn$ is $\mu$-strongly g-convex. 
\end{assumption}

As in the Euclidean case, the square of the distance function in $\Pn$, given by
\begin{align}
	d(\X,\Y)^2&=\|\log(\X^{-\frac{1}{2}}\Y\X^{-\frac{1}{2}})\|_F^2,
\end{align}
is $\mu$-strongly g-convex with $\mu=2$. Another example of  $\mu$-strongly g-convex function that also belongs to the function class $\cF$ defined in \eqref{sc_fun_class} is
\begin{align}
	f(\X)=\tr{\C\X^{-1}+\D\X}
\end{align} 
for $\C \succ 0, \D \succ 0$, and with  $\mu=\min\left(\lambda_{\min}(\C),\lambda_{\min}(\D)\right)$ (see Lemma 10). 

The RGD update equation in Riemannian manifold $\Pn$ in the descent direction $-\grd^R f\left(\X\right)$ is given by 
	\begin{align}
			\X_{t+1}
			&= \B_t\exp\left(-\alpha_t\B_t^{-1}\grd^R f\left(\X\right)\B_t^{-\T}\right)\B_t^\T \label{gdupdate}
	\end{align}
	where, $\alpha_{t}$ is step-size and $\B_t$ is Cholesky factor of $\X_t$.

Both the RGD and update matrices of the proposed algorithms depend on the Riemannian gradient $\grd^R f\left(\X\right)$ through the function $\F(\X)=\B^{-1}\grd^R f\left(\X\right)\B^{-\T}$, as we shall see later in  Sec. \ref{sec_rsd_general}. The RGD depends on the entirety of $\F(\X)$, while the update matrices of the proposed algorithms depend only on a few entries $\{[\F(\X)]_{ij}\}_{(i,j)\in \cE}$ for $|\cE|\leq n$. To keep the discussion general, we will assume that calculating the full matrix $\F(\X)$ incurs a complexity of $\O(M)$ while calculating a single entry $[\F(\X)]_{ij}$ incurs a complexity of $\O(m)$, where $m\leq M \leq \frac{mn(n+1)}{2}$. In general, we note that $M = m = \O(n^3)$, unless the function has a special structure.

\section{The Riemannian  Randomized Subspace Descent Algorithm} \label{sec_rsd_general}
In this section, we detail the proposed RRSD algorithm and discuss its computational advantages over the other Riemannian first-order algorithms. We begin with the observation that unlike the coordinate descent algorithm in Euclidean spaces, it is generally not possible to update a specific entry of the iterate $\X_t$ at low complexity. Instead, we must identify a set of orthonormal basis vectors that span the tangent space at $\X_t$ and carry out the updates along one or more of these bases. Interestingly, the selected subspace and the identified basis vectors will be such that they would allow us to directly compute the Cholesky factorization $\B_{t+1}$ of $\X_{t+1}$ in terms of the Cholesky factor $\B_t$ of $\X_t$. As a result, the proposed algorithm will have updates of the form  $\B_{t+1} = \B_t\Bup{t+1}$, where the update matrix $\Bup{t+1}$ is sparse and can be calculated efficiently.

\subsection{Uni-directional update}\label{sec-uni}
We begin with identifying a canonical basis of the tangent space $T_\X\Pn$  at a given $\X$, and show that the updates along such a basis can be carried out directly in terms of the Cholesky factor $\B$ of $\X$. Hence, by updating along a single basis at every iteration, we obtain the so-called Riemannian version of the coordinate descent, whose per-iteration complexity is $\O(m+n)$. Recall that the complexity of calculating a single entry of $\F(\X) = \B^{-1}\grd^R f(\X)\B^{-\mathsf{T}}$ is taken to be $\O(m)$. 

\subsubsection{Canonical Basis} Observe that the tangent space $T_{\X}\Pn$ at $\X$ is spanned by orthonormal basis vectors 
\begin{align}
	\mathbf{G}_{ij}^R(\X)&=\B\mathbf{E}_{ij}\B^\T \label{gij}
\end{align}
for all $1\leq j\leq i \leq n$, where recall that $\B = \cL(\X)$, and 
\begin{align}\label{eij}
	\Eb_{ij} = (\I_{ij}+\I_{ji})\sqrt{2}^{-1-\ind_{i=j}}
\end{align}
where the indicator $\ind_{i= j}$ is 1 when $i = j$ and 0 otherwise. The definition in \eqref{eij} ensures that $\Eb_{ij}$ contains $1/\sqrt{2}$ at the $(i,j)$-th and $(j,i)$-th locations, and 0 elsewhere for $i \neq j$, while $\Eb_{ii}$ contains 1 at the $(i,i)$-th location, and 0 elsewhere. The orthonormality of the basis vectors can be verified by seeing that
\begin{align}
	\ip{\G_{ij}^R(\X)}{\G_{k\ell}^R(\X)}_{\X} &= \tr{\G_{ij}^R(\X)\X^{-1}\G_{k\ell}^R(\X)\X^{-1}} \\
	&= \tr{\B\Eb_{ij}\B^\T\B^{-
			\T}\B^{-1}\B\Eb_{k\ell}\B^\T\B^{-\T}\B^{-1}} \\
	&= \tr{\Eb_{ij}\Eb_{k\ell}} = \ind_{i=k}\ind_{j=\ell}
\end{align}
for all $1\leq j \leq i \leq n$ and $1 \leq \ell \leq k \leq n$. It follows therefore that the Riemannian gradient of $f$ can be written as
\begin{align}
	\grd^R f(\X)&= \sum_{1\leq j \leq i\leq n} \beta_{ij}(\X) \G_{ij}^R(\X) \label{grad-decomp}
\end{align}
where 
\begin{align}
	\beta_{ij}(\X) &= \ip{\grd^R f(\X)}{\G_{ij}^R(\X)}_{\X} \\
	&= \tr{\grd^R f(\X)\X^{-1} \G_{ij}^R(\X)\X^{-1}} \\
	&= \tr{\B^{-1}\grd^R f(\X)\B^{-\T}\Eb_{ij}} \\
	&= \tr{\F(\X)\Eb_{ij}} \\
	&=\sqrt{2}^{\ind_{i\neq j}}[\F(\X)]_{ij}.\label{grad_proj}
\end{align}
Henceforth, we will drop the argument of the coefficient $\beta_{ij}(\X)$, and simply denote it as $\beta_{ij}$ for all $1\leq j \leq i \leq n$. As per our notation, the complexity of calculating $\beta_{ij}$ for any $1\leq j \leq i \leq n$ is $\O(m)$.

\subsubsection{Updating the Cholesky factors}
Having identified the appropriate canonical basis, we can now write down the Riemannian version of randomized coordinate descent, where we only update $\X_t$ along a single randomly selected basis vector. We note that the update along $\G_{ij}^R(\X_t)$ is given by 
\begin{align}
	\X_{t+1} &= \Expxt \left(-\alpha_t \beta_{ij}\G_{ij}^R(\X_t)\right) 
	\label{update1} \\
	&= \B_t \exp(-\alpha_t\beta_{ij}\Eb_{ij})\B_t^\T. \label{update_CD}
\end{align}
The special form of $\Eb_{ij}$ allows us to calculate the Cholesky factor $\cL(\exp(-\alpha_t\beta_{ij}\Eb_{ij}))$ efficiently.  Let us split the subsequent results into two cases: (a) $i \neq j$ and (b) $i = j$. 

\noindent \textbf{Case $i \neq j$. } Denoting $w = \exp(-\alpha_t\beta_{ij}\sqrt{2}^{-\ind_{i\neq j}})$, we note that
\begin{align*}
	\exp(-\alpha_t\beta_{ij}\Eb_{ij}) = \I + (\I_{ii}+\I_{jj})(\frac{w}{2}+\frac{1}{2w}-1) + {(\I_{ij}+\I_{ji})(\frac{w}{2}-\frac{1}{2w})};
\end{align*}
for $i\neq j$. Therefore, we have that
\begin{align}
	\cL(\exp(-\alpha_t\beta_{ij}\Eb_{ij})) = \I + \I_{jj}(u-1) + \I_{ii}(\frac{1}{u}-1) + \I_{ij}\frac{w-1/w}{2u};
\end{align}
where $u = \sqrt{\frac{w+1/w}{2}}$. In other words, the $\cL(\exp(-\alpha_t\beta_{ij}\Eb_{ij}))$ contains ones along the main diagonal, except for the $(j,j)$-th and $(i,i)$-th entries, where it contains $u$ and $1/u$, respectively, and contains $\frac{w-1/w}{2u}$ at the {$(i,j)$-th location}. 

\noindent \textbf{Case $i =j$. } For this case, we have that $\exp(-\alpha_t\beta_{ii}\Eb_{ii}) = \I + \I_{ii}(w-1)$ so that
\begin{align}
	\cL(\exp(-\alpha_t\beta_{ii}\Eb_{ii})) = \I + \I_{ii}(\sqrt{w}-1).
\end{align}

Combining the two cases, we have that 
\begin{align}
	\cL(\exp(-\alpha_t\beta_{ij}\Eb_{ij})) = \I &+ \ind_{i\neq j}\left[\I_{jj}(u-1) + \I_{ii}(\frac{1}{u}-1) + {\I_{ij}}\frac{w-1/w}{2u}\right] \nonumber\\
	&+ \ind_{i=j}\I_{ii}(\sqrt{w}-1) \label{cholesky1}
\end{align}
so that the update can be carried out as
\begin{align}
	\B_{t+1} = \B_t \Bup{t+1} \label{cd-update}
\end{align}
where $\Bup{t+1} = \cL(\exp(-\alpha_t \beta_{ij}\Eb_{ij}))$ for any $1 \leq j \leq i \leq n$, as given in \eqref{cholesky1}. Observe that since $\beta_{ij}$ can be calculated in $\O(m)$ time and the other calculations in \eqref{cholesky1} require $\O(1)$ time, the overall complexity of carrying out the update in \eqref{cd-update} is $\O(m+n)$. 

\subsection{Multi-directional updates}
In this section, we generalize the coordinate descent idea of Sec. \ref{sec-uni} to allow update along multiple randomly selected basis vectors. Unless care is taken however, such updates need not be efficient. Indeed, updating along multiple basis vectors might render $\Bup{t+1}$ dense in general, which would again require $\O(n^3)$ computations per update, and not yield any significant computational advantage. 

In order to ensure that the Cholesky factors can be directly updated using a sparse update matrix $\Bup{t+1}$, we must update along \emph{non-overlapping} bases. Two basis vectors $\G^R_{ij}(\X)$ and $\G^R_{k\ell} (\X)$ are said to be non-overlapping if $i \neq k$, $i \neq \ell$, $j \neq k$, and $j \neq \ell$, or equivalently, $\{i,j\} \cap \{k,\ell\} = \emptyset$ (while letting $\{i,i\} := \{i\}$). For such pairs of basis vectors, it can be seen that  non-zero entries of corresponding $\E_{ij}$ and $\E_{k\ell}$ lie on different rows and columns. As a result, the updates along each such direction can be applied to $\B_t$ in parallel, and the resulting matrices can be added together to obtain $\B_{t+1}$. 

More precisely, for non-overlapping bases $\G^R_{ij}(\X)$ and $\G^R_{k\ell} (\X)$, it holds that
\begin{align}
	\exp(-\alpha_t\beta_{ij}\Eb_{ij} - \alpha_t \beta_{k\ell}\Eb_{k \ell}) = \exp(-\alpha_t\beta_{ij}\Eb_{ij}) + \exp(- \alpha_t \beta_{k\ell}\Eb_{k \ell}) - \I 
\end{align}
implying that
\begin{align}
	\cL(\exp(-\alpha_t\beta_{ij}\Eb_{ij} - \alpha_t \beta_{k\ell}\Eb_{k \ell})) = \cL(\exp(-\alpha_t\beta_{ij}\Eb_{ij})) + \cL(\exp(- \alpha_t \beta_{k\ell}\Eb_{k \ell})) - \I.
\end{align}
Thus, the cumulative update matrix $\Bup{t+1}$ when updating along non-overlapping bases is the sum of individual update matrices corresponding to each basis vector.

In the general case, let $\cE_t := \{i_t^k,j_t^k\}_{k=1}^{K_t}$ be a set of $1\leq K_t \leq n$ pairs of indices, such that $\{i_t^k, j_t^k\} \cap \{i_t^\ell, j_t^\ell\} = \emptyset$ for all $1\leq k,\ell  \leq K_t$. Then, it follows that each pair   $(\G^R_{i_t^kj_t^k},\G^R_{i_t^\ell j_t^\ell})$ is non-overlapping. At the $t$-th iteration, we consider the direction
\begin{align}
	\S(\X_t) = \sum_{(i,j) \in \cE_t} \beta_{ij}\G^R_{ij}(\X_t) = \B_t\Big(\sum_{(i,j)\in\cE_t} \beta_{ij}\Eb_{ij}\Big)\B_t^\T \label{subspace_direction}
\end{align}
which results in the update
\begin{align}
	\X_{t+1} &= \Expxt(-\alpha_t \S(\X_t)) \\
	&= \B_t\exp\Big(-\alpha_t\sum_{(i,j)\in \cE_t}\beta_{ij}\Eb_{ij}\Big)\B_t^\T\label{multi_SD_update}
\end{align}
similar to \eqref{update_CD}. As mentioned earlier, for non-overlapping bases, we have that
\begin{align}
	\exp\Big(-\alpha_t\sum_{(i,j)\in \cE_t}\beta_{ij}\Eb_{ij}\Big) &= \sum_{(i,j)\in \cE_t} \exp(-\alpha_t\beta_{ij}\Eb_{ij}) - (K_t-1)\I \label{exp_sd}\\
	\Rightarrow \cL\Big(\exp\Big(-\alpha_t\sum_{(i,j)\in \cE_t}\beta_{ij}\Eb_{ij}\Big)\Big) &= \sum_{(i,j)\in \cE_t} \cL(\exp(-\alpha_t\beta_{ij}\Eb_{ij})) - (K_t-1)\I =: \Bup{t+1} \label{btup}
\end{align}
and the required multi-directional update is given by 
\begin{align}
	\B_{t+1} = \B_t \Bup{t+1}.\label{md-update}
\end{align}

We observe that the unidirectional update in \eqref{cd-update} corresponds to $K_t = 1$ in \eqref{md-update}. In general, it is always possible to choose non-overlapping bases such that $K_t \geq \lfloor\frac{n}{2}\rfloor$. On the one extreme, we can select $\cE_t$ such that $i_t^k \neq j_t^k$ for all $1 \leq k \leq K_t$, so that $K_t = \lfloor\frac{n}{2}\rfloor$. On the other extreme, if $\cE_t$ is such that $i_t^k = j_t^k$ for all $1 \leq k \leq K_t$, then we have that $K_t = n$.

The update in \eqref{md-update} requires us to calculate $K_t$ entries of $\F(\X)$, thus incurring a complexity of $\O( mn)$ for $K_t = \O(n)$. Next, since each summand in \eqref{btup} can be calculated in $\O(1)$ time, $\B_{t+1}^\text{up}$ can be calculated in $\O(n)$ time and has $\O(n)$ non-zero entries. Finally, the calculation in \eqref{md-update} requires $\O(n^2)$ time, resulting in the overall complexity of $\O(mn+n^2)$. We note these complexity bounds pertain to the case when the basis vectors are chosen randomly or arbitrarily. However, other choices of basis vectors are also possible, and in particular, a greedy approach will be discussed in the next section. 

\section{Riemannian Subspace Descent Variants for $\cF$}\label{RSD_for_function_class}
As established in Sec. \ref{sec_rsd_general}, the per-iteration complexity of the proposed algorithm is $\O(m+n)$ for the uni-directional updates and $\O(mn+n^2)$ for the multi-directional updates. In this section, we show that for the function class $\cF$ in \eqref{sc_fun_class}, it is possible to maintain intermediate variables, so as to allow us to calculate the entries of  $\F(\X)$ efficiently with $m = \O(n)$. As a result, the worst-case per-iteration complexity of the proposed RRSD algoithm when applied to functions in $\cF$ becomes $\O(n)$ in the uni-directional case and $\O(n^2)$ in the multi-directional case. 

We begin with analyzing some of the properties of functions in $\cF$. For brevity, we assume all matrices to be symmetric in the rest of the paper. The gradient of $f \in \cF$ is given by  
\begin{align}
	\grd f(\X)&=-\sum_{p=1}^{P}\frac{d g}{d g_{1,p}}\X^{-1}\C_p\X^{-1}+ \sum_{q=1}^{Q}\frac{d g}{d g_{2,q}}\D_q  +\frac{d g}{dg_3}\X^{-1} \nonumber\\ 
	&+ \sum_{r=1}^{R}\frac{d g}{d g_{4,r}}\left(\A_r\X\H_r+\H_r\X\A_r\right)
	-\sum_{s=1}^{S}\frac{d g}{d g_{5,s}}\left(\X^{-1}\F_s\X^{-1}\G_s\X^{-1}+\X^{-1}\G_s\X^{-1}\F_s\X^{-1}\right) \nonumber\\
	&+\frac{1}{2}\sum_{m=1}^{M}\frac{d g}{d g_{6,m}}\left(\Q_m\X^{-1}\P_m + \P_m\X^{-1}\Q_m-\X^{-1}\P_m\X\Q_m\X^{-1} -\X^{-1}\Q_m\X\P_m\X^{-1} \right)
\end{align}
where,	$g_{1,p}=\tr{\C_p\X^{-1}}$, $g_{2,q}= \tr{\D_q \X}$, $g_3=\log\det \X$, $g_{4,r}=\tr{\A_r\X\H_r\X}$, $g_{5,s}= \tr{\X^{-1}\F_s\X^{-1}\G_s}$ and $g_{6,m}= \tr{\P_m\X\Q_m\X^{-1}}$. Substituting in the expression for $\beta_{ij}$, we obtain
\begin{align}
	\beta_{ij}&= \sqrt{2}^{\ind_{i\neq j}}[\F(\X)]_{ij}=\tr{\grd f(\X)\B\mathbf{E}_{ij}\B^\T}\\
	&= -\sum_{p=1}^{P}\frac{d g}{d g_{1,p}}\tr{\X^{-1}\C_p\X^{-1}\B\mathbf{E}_{ij}\B^\T }+  \sum_{q=1}^{Q}\frac{d g}{d g_{2,q}}\tr{\D_q\B\mathbf{E}_{ij}\B^\T}+ \frac{d g}{dg_3}\tr{\X^{-1}\B\mathbf{E}_{ij}\B^\T}\nonumber\\
	&+\sum_{r=1}^{R}\frac{d g}{d g_{4,r}}\tr{\left[\A_r\X\H_r+\H_r\X\A_r\right]\B\mathbf{E}_{ij}\B^\T}\nonumber\\
	& - \sum_{s=1}^{S}\frac{d g}{d g_{5,s}}\tr{\left[\X^{-1}\F_s\X^{-1}\G_s\X^{-1}+\X^{-1}\G_s\X^{-1}\F_s\X^{-1}\right]\B\mathbf{E}_{ij}\B^\T}
	\nonumber
	\\
	&+\frac{1}{2}\sum_{m=1}^{M}\frac{d g}{d g_{6,m}}\tr{\left(\begin{array}{ll}\Q_m\X^{-1}\P_m -\X^{-1}\P_m\X\Q_m\X^{-1}\\ + \P_m\X^{-1}\Q_m-\X^{-1}\Q_m\X\P_m\X^{-1}
		\end{array} \right)\B\mathbf{E}_{ij}\B^\T}
	\nonumber
	\\
	&= -\sum_{p=1}^{P}\frac{d g}{d g_{1,p}}\tr{\B^{-1}\C_p\B^{-\T}\mathbf{E}_{ij}}+  \sum_{q=1}^{Q}\frac{d g}{d g_{2,q}}\tr{\B^\T\D_q\B\mathbf{E}_{ij}}+ \frac{d g}{dg_3}\tr{\mathbf{E}_{ij}}
	\nonumber\\
		&+\sum_{r=1}^{R}\frac{d g}{d g_{4,r}}\left(\tr{\B^{\T}\A_r \B\B^{\T}\H_r \B\E_{ij}}+\tr{\B^{\T}\H_r \B\B^{\T}\A_r \B\E_{ij}}\right)
	\nonumber
	\\
	& - \sum_{s=1}^{S}\frac{d g}{d g_{5,s}}\left(\tr{\B^{-1}\F_s\B^{-\T}\B^{-1}\G_s\B^{-\T}\E_{ij}}+\tr{\B^{-1}\G_s\B^{-\T}\B^{-1}\F_s\B^{-\T}\E_{ij}}\right)
	\nonumber
	\\
	&+\frac{1}{2}\sum_{m=1}^{M}\frac{d g}{d g_{6,m}}\left(\begin{array}{ll}\tr{\B^{\T}\Q_m\B^{-\T}\B^{-1}\P_m\B\E_{ij}} -\tr{\B^{-1}\P_m\B\B^{\T}\Q_m\B^{-\T}\E_{ij}}\\ + \tr{\B^{\T}\P_m\B^{-\T}\B^{-1}\Q_m\B\E_{ij}}-\tr{\B^{-1}\Q_m\B\B^{\T}\P_m\B^{-\T}\E_{ij}}
			\end{array}\right)
	\nonumber
	\\
	&= -\sqrt{2}^{\ind_{i\neq j}}\sum_{p=1}^{P}\frac{d g}{d g_{1,p}}[\B^{-1}\C_p\B^{-\T}]_{ij}+  \sqrt{2}^{\ind_{i\neq j}}\sum_{q=1}^{Q}\frac{d g}{d g_{2,q}}[\B^\T\D_q\B]_{ij}+ \frac{d g}{dg_3}\ind_{i= j}
	\nonumber\\
	&+\sqrt{2}^{\ind_{i\neq j}}\sum_{r=1}^{R}\frac{d g}{d g_{4,r}}\left(\left[\B_t^{\T}\A_r\B\B^{\T}\H_r\B\right]_{ij}+\left[\B_t^{\T}\A_r\B\B^{\T}\H_r\B\right]_{ji}\right)
	\nonumber\\
	&-\sqrt{2}^{\ind_{i\neq j}}\sum_{s=1}^{S}\frac{d g}{d g_{5,s}}\left(\left[\B^{-1}\F_s\B^{-\T}\B^{-1}\G_s\B^{-\T}\right]_{ij}+\left[\B^{-1}\F_s\B^{-\T}\B^{-1}\G_s\B^{-\T}\right]_{ji}\right)
	\nonumber\\
	&\frac{1}{\sqrt{2}}\sum_{m=1}^{M}\frac{d g}{d g_{6,m}}\left( \begin{array}{ll}
		\left[\B^{\T}\Q_m\B^{-\T}\B^{-1}\P_m\B\right]_{ij}+\left[\B^{\T}\Q_m\B^{-\T}\B^{-1}\P_m\B\right]_{ji}\\
		-\left[\B^{-1}\P_m\B\B^{\T}\Q_m\B^{-\T}\right]_{ij}-\left[\B^{-1}\P_m\B\B^{\T}\Q_m\B^{-\T}\right]_{ji}
	\end{array}\right)
	\label{beta_in_B}
\end{align}
Here, observe that $\beta_{ij}$ depends on $\X_t$ through 
\begin{subequations}\label{M12}
	\begin{align}
		\M_{1,p}(\X_t)&=\B_t^{-1}\C_p \B_t^{-\T} & \M_{2,q}(\X_t)&=\B_t^{\T}\D_q\B_t\\
		\M_{4,1,r}(\X_t)&= \B_t^{\T}\A_r \B_t &	\M_{4,2,r}(\X_t)&= \B_t^{\T}\H_r \B_t\\
		\M_{5,1,s}(\X_t)&= \B_t^{-1}\F_s\B_t^{-\T} &\M_{5,2,s}(\X_t)&= \B_t^{-1}\G_s\B_t^{-\T}\\
		\M_{6,1,m}(\X)&=\B^{-1}\P_m\B & \M_{6,2,m} (\X)  &= \B^{-1}\Q_m\B
	\end{align} 
\end{subequations}
as well as through $g_{1,p}$, $g_{2,q}$, $g_3$, $g_{4,r}$, $g_{5,s}$ and $g_{6,m}$ which in turn, can be calculated as
\begin{align}
	g_{1,p}(\X_t)&=\tr{\C_p\X_t^{-1}}=\tr{\B_t^{-1}\C_p\B_t^{-\T}}=\tr{\M_{1,p}(\X_t)}\\
	g_{2,q}(\X_t)&=\tr{\D_q\X_t}=\tr{\B_t^{\T}\D_q\B_t}=\tr{\M_{2,q}(\X_t)}\\
	g_3(\X_t)&=\log\det\X_t=2\log\det\left(\B_{t}\right)= 2\log\det\left(\B_{t-1}\right)+	2\log\det\left(\Bup{t}\right)\\
	g_{4,r}(\X_t)&=\tr{\X\A_r\X\H_r}=\tr{\B_t^{\T}\A_r \B_t \B_t^{\T}\H_r \B_t}=\tr{\M_{4,1,r}(\X_t)\M_{4,2,r}(\X_t)}\\
	g_{5,s}(\X_t)&=\tr{\X^{-1}\F_s\X^{-1}\G_s} = \tr{\B_t^{-1}\F_s\B_t^{-\T}\B_t^{-1}\G_s\B_t^{-\T}} = \tr{\M_{5,1,s}(\X_t)\M_{5,2,s}(\X_t)}\\
	g_{6,m}(\X)&= \tr{\P_m\X\Q_m\X^{-1}}=\tr{\B^{-1}\P_m\B\B^{\T}\Q_m\B^{-\T}}=\tr{\M_{6,1,m}(\X)\left[\M_{6,2,m} (\X)\right]^{\T}}
\end{align}
Since $\B$ and $\Bup{t}$ are both lower triangular matrices, their determinants can be calculated in $\O(n)$ time. Further, we can maintain $\{\M_{1,p}(\X_t)\}_{1\leq p\leq P}$,  $\left\{\M_{4,1,r}(\X_t),\;\M_{4,2,r}(\X_t)\right\}_{1\leq r\leq R}$, $\{\M_{2,q}(\X_t)\}_{1\leq q\leq Q}$, $\{\M_{5,1,s}(\X_t),\;\M_{5,2,s}(\X_t)\}_{1\leq s\leq S}$ and $\{\M_{6,1,m}(\X_t),\;\M_{6,2,m}(\X_t)\}_{1\leq m\leq M}$ by observing that
\begin{align}
	\M_{1,p}(\X_{t+1})&=\B_{t+1}^{-1}\C_p\B_{t+1}^{-\T}
	= \left[\Bup{t+1}\right]^{-1}\B_{t}^{-1}\C_p\B_{t}^{-\T}\left[\Bup{t+1}\right]^{-\T}\nonumber\\
	&= \left[\Bup{t+1}\right]^{-1}\M_{1,p}(\X_t)\left[\Bup{t+1}\right]^{-\T} \label{m1_update}\\
	\M_{2,q}(\X_{t+1})&=\B_{t+1}^{\T}\D_q \B_{t+1}
	=\left[\Bup{t+1}\right]^\T\B_{t}^{\T}\D_q \B_{t}\Bup{t+1}= \left[\Bup{t+1}\right]^\T\M_{2,q}(\X_t)\Bup{t+1} \label{m2_update}\\
	\M_{4,1,r}(\X_t)&= \B_{t+1}^{\T}\A_r \B_{t+1}	=\left[\Bup{t+1}\right]^\T\B_{t}^{\T}\A_r \B_{t}\Bup{t+1}= \left[\Bup{t+1}\right]^\T\M_{4,1,r}(\X_t)\Bup{t+1}\\
	\M_{4,2,r}(\X_t)&= \B_{t+1}^{\T}\H_r \B_{t+1}	=\left[\Bup{t+1}\right]^\T\B_{t}^{\T}\H_r \B_{t}\Bup{t+1}= \left[\Bup{t+1}\right]^\T\M_{4,2,r}(\X_t)\Bup{t+1}\\
	\M_{5,1,s}(\X_{t+1})&=\B_{t+1}^{-1}\G_s\B_{t+1}^{-\T}
	= \left[\Bup{t+1}\right]^{-1}\B_{t}^{-1}\G_s\B_{t}^{-\T}\left[\Bup{t+1}\right]^{-\T}\nonumber\\
	&= \left[\Bup{t+1}\right]^{-1}\M_{5,1,s}(\X_t)\left[\Bup{t+1}\right]^{-\T}\\
		\M_{5,2,s}(\X_{t+1})&=\B_{t+1}^{-1}\H_s\B_{t+1}^{-\T}
	= \left[\Bup{t+1}\right]^{-1}\B_{t}^{-1}\H_s\B_{t}^{-\T}\left[\Bup{t+1}\right]^{-\T}\nonumber\\
	&= \left[\Bup{t+1}\right]^{-1}\M_{5,2,s}(\X_t)\left[\Bup{t+1}\right]^{-\T}\\
	\M_{6,1,m}(\X_{t+1})&=\B_{t+1}^{-1}\P_m\B_{t+1}
	= \left[\Bup{t+1}\right]^{-1}\B_{t}^{-1}\P_m\B_{t}\Bup{t+1}= \left[\Bup{t+1}\right]^{-1}\M_{6,1,m}(\X_t)\Bup{t+1}\\
\M_{6,2,m}(\X_{t+1})&=\B_{t+1}^{-1}\Q_m\B_{t+1}
	= \left[\Bup{t+1}\right]^{-1}\B_{t}^{-1}\Q_m\B_{t}\Bup{t+1}= \left[\Bup{t+1}\right]^{-1}\M_{6,1,m}(\X_t)\Bup{t+1}\label{m6_update}
\end{align}
where $\Bup{t+1}$ is a sparse matrix that depends on the basis vectors used for the update. The inverse of $\Bup{t+1}$ can be obtained by inverting the diagonal entries and replacing off-diagonal entries by negative of corresponding values of  $\Bup{t+1}$, i.e.,
\begin{align}
	[[\Bup{t+1}]^{-1}]_{ij}&=\frac{\ind_{i=j}}{[\Bup{t+1}]_{ij}}- \ind_{i\neq j}[\Bup{t+1}]_{ij}
\end{align} 
for all $1\leq j \leq i \leq n$. The overall complexity of the updates depends on the number of update directions $K_t$. 

\subsection{Uni-directional updates}\label{rrsdfuni}
For  uni-directional update case, at most two rows and two columns of intermediate variables $\M_{1,p}(\X_t)$,  $\M_{2,q}(\X_t)$, $\M_{4,1,r}(\X_t)$, $\M_{4,2,r}(\X_t)$,  $\M_{5,1,s}(\X_t)$, $\M_{5,2,s}(\X_t)$, $\M_{6,1,m}(\X_t)$, $\M_{6,2,m}(\X_t)$, and $\B_t$ are updated at each iteration, and hence the complexity of the updates in \eqref{m1_update}-\eqref{m6_update} is $\O(n)$. 
Given the variables $\M_{1,p}(\X_t)$, $\M_{2,q}(\X_t)$, functions $f_{1,p}(\X_t)$, $f_{2,q}(\X_t)$, and $f_3(\X_t)$ can be computed in $O(n)$. An interesting observation is that, given the intermediate variables $\M_{4,1,r}(\X_t)$, $\;\M_{4,2,r}(\X_t)$,  $\M_{5,1,s}(\X_t)$, $\M_{5,2,s}(\X_t)$, $\M_{6,1,m}(\X_t)$ and $\M_{6,2,m}(\X_t)$, the calculations of $g_{4,r}(\X_t)$, $g_{5,s}(\X_t)$ and $g_{6,m}(\X_t)$ can also be accomplished in $O(n)$ by adjusting the contribution of modified $O(n)$ entries in the function calculations. Consequently, it can be deduced that $\beta_{ij}$ can be computed in $\O(n)$ time.
Finally, the update in \eqref{cd-update} requires $\O(n)$ calculations, and hence the overall complexity of uni-directional update for $f \in \cF$ is $\O(n)$, which is significantly better than the $\O(n^3)$ complexity of RGD.

We remark that all the variations of the proposed algorithm as well as that of the state-of-the-art gradient descent algorithm and its variants require a memory complexity of $\O(n^2)$. Interestingly however, in the uni-directional case, only $\O(n)$ entries of $\B_{t+1}$ and other intermediate variables are updated at every iteration. Therefore it is possible for us to carry out a single update for the uni-directional case while using only $\O(n)$ space in the random access memory (RAM). 

\subsection{Multi-directional updates}
For the multi-directional case, it can be seen that $\O(n)$ rows and columns of $\M_{1,p}(\X_t)$,  $\M_{2,q}(\X_t)$, $\M_{4,1,r}(\X_t)$, $\M_{4,2,r}(\X_t)$,  $\M_{5,1,s}(\X_t)$, $\M_{5,2,s}(\X_t)$, $\M_{6,1,m}(\X_t)$, $\M_{6,2,m}(\X_t)$, and $\B_t$ are updated at every iteration. Therefore the complexity of updates in \eqref{m1_update}-\eqref{m6_update} is $\O(n^2)$. Consequently, the computational complexity for calculating $\beta_{ij}$ also becomes $\O(n^2)$. Finally, the complexity of the update calculation in \eqref{md-update} for this case is $\O(n^2)$, resulting in an overall complexity of $\O(n^2)$.

\subsection{Greedy Subspace Descent}\label{subsec_greedy}
Thus far, all the versions of the proposed Riemannian subspace descent framework utilize randomly selected basis vectors, analogous to the randomized coordinate descent algorithms (\cite{Nesterov2012cd}). We remark that it is also possible to select the basis vectors using some criteria or heuristic. For instance, one could select the basis vectors that correspond to the largest values of $\abs{\beta_{ij}}$, which correspond to the directions of steepest descent. However, in order to select the index $(i,j)$ corresponding to the largest value of $\abs{\beta_{ij}}$, one must calculate all the entries of $\F(\X_t)$ for each $t$. Hence, the complexity of a uni-directional update increases from $\O(m+n)$ to $\O(M+n)$, which is significant even when $f \in \cF$ with $M = \O(n^2)$. Therefore, the proposed heuristic does not offer any computational advantages when using uni-directional updates. 

In the multi-directional case, selecting the non-overlapping basis vectors so as to ensure that the largest values of $\abs{\beta_{ij}}$ are selected is even more challenging. To this end, we propose a greedy heuristic, where the basis vectors are selected as follows. To begin with, we sort the entries of the lower triangular part of $\F(\X)$ in decreasing order of their absolute values and store the resulting ordering. At the first step, the largest entry is selected, and the overlapping entries are removed. The process is repeated with the remaining entries till all the entries are exhausted and all the rows and columns have been removed. It can be seen that the greedy heuristic ends up selecting $\lfloor n/2 \rfloor \leq K_t \leq n$ entries of $\F(\X)$ which correspond to non-overlapping basis vectors. 

The computational complexity of the proposed RGSD algorithm is not much higher than that of RRSD. In general, calculating the full matrix $\F(\X)$ requires $\O(M)$ time, the resulting entries can be sorted in terms of their absolute values in $\O(n^2\log(n))$ time, and subsequently, the greedy heuristic for selecting non-overlapping bases requires further $\O(n^2)$ time. Hence, the overall complexity of RGSD is $\O(M+n^2\log(n))$, which becomes $\O(n^2\log(n))$ for the case when $f \in \cF$. 

Table \ref{peritercomplexity} summarizes the per-iteration time complexity of gradient descent, and the proposed subspace descent algorithms. 

\begin{table}[h!]
	\begin{center}
		\begin{tabular}{|p{6cm}|p{2cm}|p{2cm}|p{3.5cm}|}
			\hline
			Algorithm &	$M$&  $m$ & Complexity/Iteration\\
			\hline
			RGD &$\O(n^3)$ & $\O(n^3)$ & $\O(n^3)$\\
			\hline
			RRSD (uni-directional, general)	&$\O(n^3)$ & $\O(n^3)$ & $\O(n^3)$\\
			\hline
			RRSD (multi-directional, general) &$\O(n^3)$ & $\O(n^3)$ & $\O(n^3)$\\
			\hline
			RRSD (uni-directional, $f \in \cF$)	&$\O(n^2)$ & $\O(n)$ & $\O(n)$\\
			\hline
			RGSD (uni-directional, $f \in \cF$)	&$\O(n^2)$ & $\O(n)$ & $\O(n^2)$\\
			\hline
			RRSD (multi-directional, $f \in \cF$) &$\O(n^2)$ & $\O(n)$ & $\O(n^2)$\\				
			\hline	
			RGSD (multi-directional, $f \in \cF$)	&$\O(n^2)$ & $\O(n)$ & $\O(n^2\log n)$\\
			\hline
		\end{tabular}
	\end{center}
	\caption{Comparison of  per-iteration time complexity of algorithms}\label{peritercomplexity}
\end{table}

\subsection{Riemannian Randomized subspace descent for finite sum problems}
Finally we comment on the suitability of proposed RRSD algorithm for finite-sum problems as compared to RSGD algorithm \eqref{Riemann_SGD}. Let us consider following finite sum problem 
\begin{align}
	\min_{\X \in \Pn} f(\X)&:=\sum_{s=1}^{S}f_s(\X); & f_s(\X)\in \cF
\end{align}  
Let us assume that each $f_s$ has following functional form:
\begin{align}
	f_s(\X)=g\left(\begin{array}{ll}\tr{\C_s\X^{-1}},\tr{\D_s \X},\log\det\X,\tr{\X\A_s\X\H_s}, \tr{\X^{-1}\F_s\X^{-1}\G_s},
		\\
		\tr{\P_s\X\Q_s\X^{-1}}
	\end{array}\right)
\end{align} 

For the implementation of RSGD \eqref{Riemann_SGD}, we select  an index  $s$ uniformly at random from the index set $\left\{1,2,\dots,S\right\}$ and take a step along the negative of Riemannian gradient of the function $f_s(\X)$. The Riemannian gradient of $f_s(\X)$ is given by 
\begin{align}
	\grd f(\X)&=-\frac{d g}{d g_{1}}\X^{-1}\C_s\X^{-1}+ \frac{d g}{d g_{2}}\D_s +  +\frac{d g}{dg_3}\X^{-1}+\frac{d g}{d g_{4}}\left(\A_s\X\H_s+\H_s\X\A_s\right)\nonumber\\
	&-\frac{d g}{d g_{5}}\left(\X^{-1}\F_s\X^{-1}\G_s\X^{-1}+\X^{-1}\G_s\X^{-1}\F_s\X^{-1}\right)\nonumber\\
	&+\frac{1}{2}\frac{d g}{d g_{6}}\left(\Q_s\X^{-1}\P_s + \P_s\X^{-1}\Q_s-\X^{-1}\P_s\X\Q_s\X^{-1} -\X^{-1}\Q_s\X\P_s\X^{-1} \right)
\end{align}
where $dg/dg_i$ denotes the derivative of $g$ with respect to its $i$-th argument. Finally the update can be written as 
\begin{align}
	\X_{t+1}&= \B_t\exp\left(\alpha_t\left[\begin{array}{ll}\frac{d g}{d g_1}\B_t^{-1} \C_s\B_t^{-\T}-\frac{d g}{d g_2}\B_t^{\T}\D_s\B_t- \frac{d g}{d g_3}\I
		\\
		-\frac{d g}{d g_{4}}\left(\B_t^{\T}\A_r\B\B^{\T}\H_r\B+\B_t^{\T}\H_r\B\B^{\T}\A_r\B\right)\\
		+\frac{d g}{d g_{5}}\left(\B^{-1}\F_s\B^{-\T}\B^{-1}\G_s\B^{-\T}+\B^{-1}\G_s\B^{-\T}\B^{-1}\F_s\B^{-\T}\right)\\
		-\frac{1}{2}\frac{d g}{d g_{6}}\left(\begin{array}{ll}
			\B^{\T}\Q_s\B^{-\T}\B^{-1}\P_s\B +\B^{\T}\P_s\B^{-\T}\B^{-1}\Q_s\B\\
			-\B^{-1}\P_s\B\B^{\T}\Q_s\B^{-\T}- \B^{-1}\Q_s\B\B^{\T}\P_s\B^{-\T}
		\end{array}\right)
	\end{array}\right]\right)\B_t^\T
\end{align}
which incurs $\O(n^3)$ time complexity and requires the inversion of matrix $\B_t$. 

In contrast, for the RRSD algorithm, we have that
\begin{align}
	\beta_{ij}(\X_t)
	&= \sum_{s=1}^{S}\left(\begin{array}{ll}
		-\sqrt{2}^{\ind_{i\neq j}}\frac{d g}{d g_1}[\B_t^{-1} \C_s\B_t^{-\T}]_{ij}+\sqrt{2}^{\ind_{i\neq j}}\frac{d g}{d g_2}[\B^{\T}\D_s\B]_{ij}+\frac{d g}{d g_3}\ind_{i\neq j}\\
		+ \sqrt{2}^{\ind_{i\neq j}}\frac{d g}{d g_{4,r}}\left(\left[\B_t^{\T}\A_s\B\B^{\T}\H_s\B\right]_{ij}+\left[\B_t^{\T}\A_s\B\B^{\T}\H_s\B\right]_{ji}\right)\\
		-\sqrt{2}^{\ind_{i\neq j}}\frac{d g}{d g_{5}}\left(\left[\B^{-1}\F_s\B^{-\T}\B^{-1}\G_s\B^{-\T}\right]_{ij}+\left[\B^{-1}\F_s\B^{-\T}\B^{-1}\G_s\B^{-\T}\right]_{ji}\right)\\
		+\frac{1}{\sqrt{2}^{\ind_{i\neq j}}}\frac{d g}{d g_{6}}\left(\begin{array}{ll}\left[\B^{\T}\Q_s\B^{-\T}\B^{-1}\P_s\B\right]_{ij}+\left[\B^{\T}\Q_s\B^{-\T}\B^{-1}\P_s\B\right]_{ji}\\
		-\left[\B^{-1}\P_s\B\B^{\T}\Q_s\B^{-\T}\right]_{ij}-\left[\B^{-1}\P_s\B\B^{\T}\Q_s\B^{-\T}\right]_{ji}
		\end{array}\right)
	\end{array}\right) \label{beta_SGD}
\end{align}
Recalling the definitions of $\M_{1,s}(\X_t)$,$\M_{2,s}(\X_t)$,$\M_{4,1,s}(\X_t)$,$\M_{4,2,s}(\X_t)$,$\M_{5,1,s}(\X_t)$,$\M_{5,2,s}(\X_t)$, $\M_{6,1,s}(\X_t)$, and $\M_{6,2,s}(\X_t)$ from \eqref{M12}, we can write down the recursive update rules similar to those in \eqref{m1_update}-\eqref{m6_update}. Further, $\Bup{t+1}$ can be calculated as explained in the Sec. \ref{sec_rsd_general}, resulting in the update $\B_{t+1} = \B_t \Bup{t+1}$. For the uni-directional and multi-directional cases, the total complexities of maintaining these intermediate variables and updating $\B_{t+1}$ is $\O(Sn)$ and $\O(Sn^2)$ respectively, at every iteration. Therefore, compared to RSGD, the proposed uni-directional RRSD incurs a lower per-iteration complexity if $S = o(n^2)$ while multi-directional RRSD incurs a lower complexity for $S = o(n)$. It is further remarked that for the uni-directional updates, only two rows and two columns of $\{\M_{1,s}(\X_t),  \M_{2,s}(\X_t), \newline \M_{4,1,s}(\X_t), \M_{4,2,s}(\X_t),  \M_{5,1,s}(\X_t), \M_{5,2,s}(\X_t), \M_{6,1,s}(\X_t), \M_{6,2,s}(\X_t)\}_{s=1}^S$ are updated at every iteration, and hence the update requires only $\O(Sn)$ space in the RAM. 

\section{Convergence analysis} 
In  this section, we characterize the iteration-complexity of the proposed algorithms required to achieve $\epsilon$-accuracy. In general, $\X_t$ is said to be $\epsilon$-accurate if it satisfies $f(\X_t) \leq f(\X^{\star}) + \epsilon$. However, for the randomized variants of the proposed algorithm, we only require that $\Ex{f(\X_t)}\leq f(\X^{\star}) + \epsilon$. We show that under Assumptions \ref{a-smooth} and \ref{stronglyconvex}, all the proposed algorithms exhibit linear convergence. However, the uni-directional variants of RRSD and RGSD generally require $\O(n)$ times as many iterations as those required by their multi-directional counterparts. The subsequent bounds will depend on the condition number $\kappa := L/\mu$ and the initialization through $D_0:=f(\X_0)-f(\X^{\star})$. For all the algorithms, we set the step size $\alpha = 1/L$. 

\subsection{Uni-directional update}
\begin{theorem}\label{rrsdic}
	Under Assumptions \ref{a-smooth} and \ref{stronglyconvex}, the uni-directional RRSD algorithm has an iteration complexity of $\O(n^2\kappa \log(\frac{D_0}{\epsilon}))$. 
\end{theorem}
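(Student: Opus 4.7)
The plan is to derive a one-step expected descent proportional to $\|\grd^R f(\X_t)\|_{\X_t}^2$ and then convert it into a geometric contraction on the suboptimality gap via a Riemannian Polyak--\L ojasiewicz inequality that follows from $\mu$-strong g-convexity.

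First, I would apply the $L$-smoothness descent lemma \eqref{Lipschitz_cont_grad} along the geodesic joining $\X_t$ and $\X_{t+1}=\Expxt(-\alpha_t\beta_{ij}\G_{ij}^R(\X_t))$, where $(i,j)$ is drawn uniformly from the $n(n+1)/2$ index pairs with $1\le j\le i\le n$. Using $\|\G_{ij}^R(\X_t)\|_{\X_t}=1$ and $\ip{\grd^R f(\X_t)}{\G_{ij}^R(\X_t)}_{\X_t}=\beta_{ij}$ (both immediate from the orthonormality of the canonical basis and the decomposition \eqref{grad-decomp}) and setting $\alpha_t=1/L$ yields the deterministic per-step bound $f(\X_{t+1})\le f(\X_t)-\tfrac{1}{2L}\beta_{ij}^2$.

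Next, I would take the conditional expectation over the uniform random choice of $(i,j)$. Parseval's identity applied to \eqref{grad-decomp} gives $\sum_{1\le j\le i\le n}\beta_{ij}^2=\|\grd^R f(\X_t)\|_{\X_t}^2$, so
\[
\Et{f(\X_{t+1})}\;\le\; f(\X_t)-\frac{1}{L\,n(n+1)}\,\|\grd^R f(\X_t)\|_{\X_t}^2.
\]
Then $\mu$-strong g-convexity \eqref{geodesic_strong_convx}, specialized to the geodesic from $\X_t$ to $\X^\star$ and minimized over the direction $\xib_{\X_t\X^\star}$, yields the Riemannian PL-type inequality $\|\grd^R f(\X_t)\|_{\X_t}^2\ge 2\mu\bigl(f(\X_t)-f(\X^\star)\bigr)$; this is legitimate because $\Pn$ is Hadamard and $\Expx$ is a diffeomorphism, so every $\X^\star$ is reached by a unique geodesic from $\X_t$. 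Substituting and subtracting $f(\X^\star)$ gives
\[
\Et{f(\X_{t+1})-f(\X^\star)}\;\le\;\Big(1-\frac{2\mu}{L\,n(n+1)}\Big)\bigl(f(\X_t)-f(\X^\star)\bigr).
\]

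Finally, taking total expectation, unrolling $t$ steps, and using $1-x\le e^{-x}$ gives $\Ex{f(\X_t)-f(\X^\star)}\le\bigl(1-\tfrac{2\mu}{Ln(n+1)}\bigr)^t D_0$; demanding the right-hand side to be at most $\epsilon$ produces $t\ge \tfrac{Ln(n+1)}{2\mu}\log(D_0/\epsilon)=\O(n^2\kappa\log(D_0/\epsilon))$, which is the claimed complexity. The only steps requiring care are (i) invoking \eqref{Lipschitz_cont_grad} with $\xib_{\X_t\X_{t+1}}=-\alpha_t\beta_{ij}\G_{ij}^R(\X_t)$ rather than with a Euclidean increment, and (ii) the Riemannian PL inequality; everything else is a direct manipulation of the orthonormal decomposition of $\grd^R f(\X_t)$ that was already set up in Section \ref{sec-uni} and the uniform sampling probability $2/(n(n+1))$ of each index pair.
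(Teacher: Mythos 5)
Your proposal is correct and follows the paper's proof almost verbatim: the same $\frac{1}{2L}\beta_{ij}^2$ descent step obtained by substituting $\xib_{\X_t\X_{t+1}}=-\frac{1}{L}\beta_{ij}\G_{ij}^R(\X_t)$ into \eqref{Lipschitz_cont_grad}, the same conditional expectation over the $d=n(n+1)/2$ uniformly sampled directions combined with the Parseval identity for \eqref{grad-decomp}, and the same unrolling to the claimed $\O(n^2\kappa\log(D_0/\epsilon))$ complexity. The only divergence is in the gradient-dominance step: by minimizing the quadratic lower bound from \eqref{geodesic_strong_convx} over the tangent vector (valid since $\Pn$ is Hadamard) you obtain the sharp constant $\|\grd^R f(\X_t)\|_{\X_t}^2\ge 2\mu\left(f(\X_t)-f(\X^\star)\right)$, whereas the paper applies Cauchy--Schwarz twice to reach the weaker bound $\frac{2}{\mu}\|\grd^R f(\X)\|_{\X}^2\ge f(\X)-f(\X^\star)$, so your contraction factor $1-\tfrac{2\mu}{Ln(n+1)}$ improves the paper's $1-\tfrac{\mu}{2Ln(n+1)}$ by a factor of four while leaving the stated iteration-complexity order unchanged.
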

\begin{proof}
	At the $t$-th iteration, $\G^R_{i_tj_t}(\X_t)$  is selected uniformly at random from among $d=\frac{n(n+1)}{2}$ possible canonical basis directions.
	Recalling the definition of $\xib_{\X\Y}$ in Sec. \ref{paper.sec.2}, we see from the update in \eqref{update1} that $\xib_{\X_t\X_{t+1}}= -\frac{1}{L}\beta_{i_tj_t}\G_{i_tj_t}^R(\X_t)$. Therefore, we have from \eqref{Lipschitz_cont_grad} that
	\begin{align}
		f\left(\X_{t+1}\right)&\leq 	f\left(\X_t\right)-\frac{\beta_{i_tj_t}^2}{L}+ \frac{\beta_{i_tj_t}^2}{L^2}\frac{L}{2}
		\;=\;	f\left(\X_t\right)-\frac{\beta_{i_tj_t}^2}{2L} \nonumber
	\end{align}
	Since $(i_t,j_t)$ are independent identically distributed, taking conditional expectation given $\{\X_\tau\}_{\tau=1}^t$, we have that
	\begin{align}
		\Et{f\left(\X_{t+1}\right)} &\leq  f(\X_t) - \frac{1}{2dL}\sum_{1\leq j \leq i \leq n} \beta_{ij}^2(\X_t) \\
		&= f\left(\X_t\right)-\frac{1}{2dL}\|\grd^R f(\X_t)\|_{\X_t}^2\label{exp_at_t}
	\end{align}
	where $\Et{\cdot}$ denotes the expectation with respect to $(i_t,j_t)$, and \eqref{exp_at_t} follows from the orthonormal decomposition of $\grd^R f(\X_t)$ in \eqref{grad-decomp}. Substituting $\Y=\X^{\star}$ in  \eqref{geodesic_strong_convx} and rearranging, we obtain
	\begin{align}
		\langle \grd^R f(\X), -\xib_{\mathbf{XX}^{\star}}\rangle_{\X} &\geq  f(\X)-f(\X^{\star})+ \frac{\mu}{2}\|\xib_{\mathbf{XX}^{\star}}\|_{\X}^2. \label{grad_bound_intermediate}
	\end{align}
	Since $f(\X)-f(\X^{\star})\geq 0$, we have from the Cauchy-Schwarz inequality that
	\begin{align}
		\|\grd^R f(\X)\|_{\X}\|\xib_{\mathbf{XX}^{\star}}\|_{\X} &\geq  \frac{\mu}{2}\|\xib_{\mathbf{XX}^{\star}}\|_{\X}^2
		\nonumber
	\end{align}
	which implies that
	\begin{align}
		\|\grd^R f(\X)\|_{\X} \geq  \frac{\mu}{2}\|\xib_{\mathbf{XX}^{\star}}\|_{\X} \label{grad_bound}
	\end{align}
	since $\X \neq \X^\star$. Again, from \eqref{grad_bound_intermediate}, we have that
	\begin{align}
		\Rightarrow   \|\grd^R f(\X)\|_{\X}\|\xib_{\mathbf{XX}^{\star}}\|_{\X} &\geq f(\X)-f(\X^{\star}) \label{function_bound}
	\end{align}
	which, together with \eqref{grad_bound} yields
	\begin{align}
		{\frac{2}{\mu}\|\grd^R f(\X)\|_{\X}^2 \geq  f(\X)-f(\X^{\star})}\label{grad_dominate}
	\end{align}
	Combining the bounds in \eqref{exp_at_t} and \eqref{grad_dominate}, we have that
	\begin{align}
		\Et{f\left(\X_{t+1}\right)}&\leq  f\left(\X_t\right)-\frac{1}{2dL}\frac{\mu}{2}\left(f(\X_t)-f(\X^{\star})\right) \nonumber
		\\
		\Rightarrow \Et{f\left(\X_{t+1}\right)}-f(\X^{\star})&\leq  f\left(\X_t\right)-f(\X^{\star})-\frac{\mu}{4d L}\left(f(\X_t)-f(\X^{\star})\right) \\
		&=  \left(1-\frac{\mu}{4d L}\right)\big[f(\X_t)-f(\X^{\star})\big].
	\end{align}
	Taking full expectation on both sides and continuing the recursion, we obtain the desired linear convergence bound
	\begin{align}
		\Ex{f\left(\X_{t+1}\right)}-f(\X^{\star})\leq  \left(1-\frac{\mu}{4d L}\right)^{t+1}\big[f(\X_0)-f(\X^{\star})\big]\label{converge_uni}.
	\end{align}
	Equivalently, to ensure that $\Ex{f\left(\X_{T}\right)}-f(\X^{\star}) \leq \epsilon$, we require
	\begin{align}
		T = \O\Big(n^2\kappa \log\Big(\frac{D_0}{\epsilon}\Big)\Big)
	\end{align}
	for $d \gg 1$. 
\end{proof}
Observe that as compared to gradient descent, the proposed uni-directional subspace descent algorithm requires $\O(n^2)$ as many iterations, since at every iteration, we choose to advance along a single direction out of the possible $\O(n^2)$ directions. A similar result can be established for the RGSD algorithm, as stated in the following corollary. 
\begin{corollary}\label{rgsdic}
	Under Assumptions \ref{a-smooth} and \ref{stronglyconvex}, the uni-directional RGSD algorithm has an iteration complexity of $\O(n^2\kappa \log(\frac{D_0}{\epsilon}))$. 
\end{corollary}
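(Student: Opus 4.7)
The plan is to mirror the proof of Theorem \ref{rrsdic}, replacing the randomized expectation argument with a deterministic "max $\geq$ average" inequality that is the hallmark of greedy coordinate selection. Since the step size and canonical basis decomposition are identical to the uni-directional RRSD algorithm, the smoothness bound \eqref{Lipschitz_cont_grad} applied to $\xib_{\X_t\X_{t+1}} = -\frac{1}{L}\beta_{i_tj_t}\G_{i_tj_t}^R(\X_t)$ will again yield
\begin{align*}
    f(\X_{t+1}) \leq f(\X_t) - \frac{\beta_{i_tj_t}^2}{2L},
\end{align*}
with no change, because this inequality depends only on the step itself and not on how the index was chosen.

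Next, I would exploit the greedy selection rule from Section \ref{subsec_greedy}: the uni-directional RGSD picks $(i_t,j_t) \in \argmax_{1\leq j\leq i \leq n}|\beta_{ij}(\X_t)|$. Writing $d=\tfrac{n(n+1)}{2}$, the elementary "max $\geq$ average" inequality gives
\begin{align*}
    \beta_{i_tj_t}^2 \;=\; \max_{1\leq j\leq i\leq n}\beta_{ij}^2(\X_t) \;\geq\; \frac{1}{d}\sum_{1\leq j\leq i\leq n}\beta_{ij}^2(\X_t) \;=\; \frac{1}{d}\,\|\grd^R f(\X_t)\|_{\X_t}^2,
\end{align*}
where the last equality is the Parseval identity for the orthonormal expansion \eqref{grad-decomp}. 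Substituting this lower bound into the descent inequality reproduces exactly the per-step decrement that the RRSD proof obtained in expectation, except now it holds deterministically.

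At this point I would invoke the gradient-dominance inequality \eqref{grad_dominate}, which was derived from $\mu$-strong g-convexity and requires no modification since it is a property of $f$ alone. Combining the descent bound with \eqref{grad_dominate} yields
\begin{align*}
    f(\X_{t+1}) - f(\X^{\star}) \leq \left(1 - \frac{\mu}{4dL}\right)\bigl[f(\X_t)-f(\X^{\star})\bigr],
\end{align*}
and unrolling the recursion gives the linear rate $(1-\mu/(4dL))^t D_0$. Solving for the smallest $T$ such that this quantity is at most $\epsilon$ yields the claimed $\O(n^2\kappa\log(D_0/\epsilon))$ bound, since $d = \Theta(n^2)$.

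There is no real obstacle here: the proof is essentially a corollary because greedy choice dominates random choice in the pointwise sense, and all the analytic tools (smoothness descent, orthonormal expansion of the Riemannian gradient, and the PL-type inequality \eqref{grad_dominate}) have already been established in the proof of Theorem \ref{rrsdic}. The one subtlety worth flagging explicitly is that the conclusion for RGSD holds without taking expectations — it is a pathwise statement — so the final inequality $f(\X_T)-f(\X^{\star})\leq\epsilon$ is deterministic rather than an $\Ex{\cdot}$ bound.
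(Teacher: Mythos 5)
Your proposal is correct and is essentially the paper's own proof: the paper applies the same smoothness descent bound to the greedily selected direction, invokes the identical max-over-average inequality $\beta_{\max}^2 \geq \frac{1}{d}\sum_{1\leq j\leq i\leq n}\beta_{ij}^2 = \frac{1}{d}\|\grd^R f(\X_t)\|_{\X_t}^2$ (cf.\ \eqref{RGSD_uni}), and then reuses the strong-convexity recursion from Theorem \ref{rrsdic} to get the same rate. Your remark that the resulting guarantee is deterministic (pathwise) rather than in expectation is accurate and consistent with the paper, which likewise drops the expectation for RGSD.
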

\begin{proof}
	Recall that at each iteration, RGSD selects $(i,j)$ such that $\beta_{ij}^2$ is maximized. Denoting $\beta_{\max}^2 := \max_{(i,j)}\beta_{ij}^2$, it follows that $\xib_{\X_t\X_{t+1}}= -\frac{1}{L} \beta_{\max}\mathbf{G}_{\max}^R(\X_t)$ and 
	\begin{align}
		f\left(\X_{t+1}\right)&\leq 	f\left(\X_t\right)-\frac{\beta_{\max}^2}{L}+ \frac{\beta_{\max}^2}{L^2}\frac{L}{2}
		\;=\;	f\left(\X_t\right)-\frac{\beta_{\max}^2}{2L} \nonumber\\
		&\leq f\left(\X_t\right)-\frac{1}{2dL}\sum_{1\leq j \leq i \leq n}\beta_{ij}^2 \\
		&= f\left(\X_t\right)-\frac{1}{2dL}\|\grd^R f(\X_t)\|_{\X_t}^2 \label{RGSD_uni}
	\end{align}
	which is similar to \eqref{exp_at_t}. The rest of the proof therefore follows in the same way, and hence yields the same iteration complexity. 
\end{proof}

Recall from Table \ref{peritercomplexity} that the per-iteration complexity of RRSD is $\O(n)$ while that of RGSD is $\O(n^2)$ for $f \in \cF$. In light of Theorem \ref{rrsdic} and Corollary \ref{rgsdic}, the overall computational complexity of uni-directional variants of RRSD and RGSD becomes $\O(n^3)$ and $\O(n^4)$, respectively. Hence, the overall complexity of uni-directional RRSD is the same as that of RGD, while the complexity of uni-directional RGSD is higher than that of RGD. Both uni-directional variants however require only $\O(n)$ memory as compared to the $\O(n^2)$ memory required by RGD. 

\subsection{Multi-dimensional update}	
In the multi-directional RRSD, the proof follows in a similar manner if the selected descent direction is an unbiased estimate of the steepest descent direction, at least approximately. Recalling the definition of $\cE_t$, we will require that 
\begin{align}
	\Et{\sum_{(i,j)\in \cE_t} \beta_{ij}^2} \approx \frac{1}{n}\sum_{1\leq j \leq i \leq n}\beta_{ij}^2
\end{align}
where $\Et{\cdot}$ denotes the expectation with respect to the random set $\cE_t$. In practice, to ensure that the selected direction is approximately unbiased, we randomly permute the set $\left\{1,2,\dots,n\right\}$ and then re-shape it into a $2\times \lfloor n/2\rfloor$ matrix $\E$, while ignoring the last element in case $n$ is odd. Subsequently, basis vectors are selected corresponding to each column of $\E$. For the $i$-th column of $\E$, denoted by $[k,l]^\T$, we choose the directions $\mathbf{G}_{kk}^R(\X)$ and $\mathbf{G}_{ll}^R(\X)$ with probability $1/n$ and the direction $\mathbf{G}_{kl}^R(\X)$ with probability $1-1/n$. For this algorithm, it can be seen that 
\begin{align}
	\Et{\sum_{(i,j)\in \cE_t} \beta_{ij}^2} = \frac{1}{c}\sum_{1\leq j \leq i \leq n}\beta_{ij}^2 \label{approxub}
\end{align}
where $c = n$ for $n$ even and $c = n^2/(n-1)$ for $n$ odd. The following theorem provides the iteration complexity result when \eqref{approxub} holds. 

\begin{theorem}
	Under Assumptions \ref{a-smooth} and \ref{stronglyconvex}, and when \eqref{approxub} holds, the multi-directional RRSD algorithm has an iteration complexity of $\O(n\kappa \log(\frac{D_0}{\epsilon}))$.
\end{theorem}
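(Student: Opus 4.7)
The plan is to mirror the proof of Theorem \ref{rrsdic} almost line-for-line, but with the multi-directional update substituted in and the unbiasedness-type identity \eqref{approxub} playing the role previously played by uniform random selection over all $d = n(n+1)/2$ basis directions. The essential observation is that the multi-directional update in \eqref{md-update} corresponds, via \eqref{subspace_direction}, to the tangent vector
\begin{align}
\xib_{\X_t\X_{t+1}} = -\tfrac{1}{L}\sum_{(i,j)\in \cE_t}\beta_{ij}\G^R_{ij}(\X_t),
\end{align}
and since the $\{\G^R_{ij}(\X_t)\}$ form an orthonormal system with respect to $\ip{\cdot}{\cdot}_{\X_t}$, both $\ip{\grd^R f(\X_t)}{\xib_{\X_t\X_{t+1}}}_{\X_t}$ and $\|\xib_{\X_t\X_{t+1}}\|_{\X_t}^2$ collapse into sums of $\beta_{ij}^2$ over $\cE_t$.

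First I would apply the descent inequality \eqref{Lipschitz_cont_grad} with the above $\xib_{\X_t\X_{t+1}}$ to get
\begin{align}
f(\X_{t+1}) \leq f(\X_t) - \tfrac{1}{L}\sum_{(i,j)\in \cE_t}\beta_{ij}^2 + \tfrac{1}{2L}\sum_{(i,j)\in \cE_t}\beta_{ij}^2 = f(\X_t) - \tfrac{1}{2L}\sum_{(i,j)\in \cE_t}\beta_{ij}^2.
\end{align}
Next I would take the conditional expectation $\Et{\cdot}$ over $\cE_t$ and invoke \eqref{approxub}, which together with the orthonormal decomposition \eqref{grad-decomp} and identity $\sum_{1 \le j \le i \le n}\beta_{ij}^2 = \|\grd^R f(\X_t)\|_{\X_t}^2$ gives
\begin{align}
\Et{f(\X_{t+1})} \leq f(\X_t) - \tfrac{1}{2cL}\|\grd^R f(\X_t)\|_{\X_t}^2.
\end{align}

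At this point the argument is identical to that of Theorem \ref{rrsdic}: plugging in the gradient-domination inequality \eqref{grad_dominate}, subtracting $f(\X^\star)$, and taking full expectation yields the recursion
\begin{align}
\Ex{f(\X_{t+1})} - f(\X^\star) \leq \Bigl(1 - \tfrac{\mu}{4cL}\Bigr)\bigl(\Ex{f(\X_t)} - f(\X^\star)\bigr),
\end{align}
which, unrolled from $t=0$, produces the linear rate $(1 - \mu/(4cL))^t D_0$. Since $c = n$ when $n$ is even and $c = n^2/(n-1) = \O(n)$ when $n$ is odd, solving $(1 - \mu/(4cL))^T D_0 \leq \epsilon$ gives $T = \O(n\kappa\log(D_0/\epsilon))$ as claimed.

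I do not foresee any genuine obstacle: the only subtle point is recognizing that \eqref{approxub} is exactly the ``one-sample'' unbiasedness property, up to the factor $c$, that the uni-directional proof used implicitly through uniform sampling. The gain of a factor $n$ over Theorem \ref{rrsdic} comes purely from $c = \O(n)$ versus $d = \O(n^2)$; once that is noted, the rest of the derivation is routine and exactly parallels the uni-directional case, so no new analytic machinery is required beyond the orthonormality of $\{\G^R_{ij}(\X_t)\}$ and the strong-convexity bound \eqref{grad_dominate} already established.
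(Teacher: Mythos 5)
Your proposal is correct and follows essentially the same route as the paper's own proof: apply the smoothness inequality \eqref{Lipschitz_cont_grad} with $\xib_{\X_t\X_{t+1}} = -\frac{1}{L}\S(\X_t)$ to obtain the per-step decrease $\frac{1}{2L}\sum_{(i,j)\in\cE_t}\beta_{ij}^2$, take expectation using \eqref{approxub} to recover $\frac{1}{2cL}\|\grd^R f(\X_t)\|_{\X_t}^2$, and then reuse the gradient-domination bound \eqref{grad_dominate} from Theorem \ref{rrsdic} to get the contraction factor $1-\frac{\mu}{4cL}$ with $c = \O(n)$. Your closing remark correctly identifies the source of the factor-$n$ improvement ($c=\O(n)$ versus $d=\O(n^2)$), which is exactly the paper's reasoning.
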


\begin{proof}
	Let the chosen descent direction be denoted by 
	\begin{align}
		\S(\X_t) &= \sum_{(i,j) \in \cE_t} \beta_{ij}\G^R_{ij}(\X_t) = \B_t\Big(\sum_{(i,j)\in\cE_t} \beta_{ij}\Eb_{ij}\Big)\B_t^\T.
	\end{align}
	Substituting  $\xib_{\X_t\X_{t+1}}= -\frac{1}{L} \S(\X_t)$, in \eqref{Lipschitz_cont_grad}, we get
	\begin{align}
		f\left(\X_{t+1}\right)&\leq	f\left(\X_t\right)-\sum_{(i,j)\in\cE_t}\frac{\beta_{ij}^2}{L}+ \sum_{(i,j)\in\cE_t}\frac{\beta_{ij}^2}{L^2}\frac{L}{2}
		\nonumber
		\\
		&=	f\left(\X_t\right)-\sum_{(i,j)\in\cE_t}\frac{\beta_{ij}^2}{2L}. \label{multi_dir_decrse}
	\end{align}
	Taking expectation with respect to the random indices in $\cE_t$, and using \eqref{approxub}, we obtain
	\begin{align}
		\Et{f\left(\X_{t+1}\right)}&\leq  f\left(\X_t\right)-\frac{1}{2cL}\|\grd^R f(\X_t)\|_{\X_t}^2 \label{exp_at_t_subspace} 
	\end{align}
	which is similar to the bound in \eqref{exp_at_t}. Proceeding as earlier, we obtain the desired bound
	\begin{align}
		\E{f\left(\X_{t+1}\right)}-f(\X^{\star})\leq  \left(1-\frac{\mu}{4cL}\right)^{t+1}\big[f(\X_0)-f(\X^{\star})\big] 
	\end{align}
	which translates to an iteration complexity of $\O(n\kappa \log(D_0/\epsilon))$. 
\end{proof}

As compared to the uni-directional case, the iteration complexity of mutli-directional RRSD is $\O(n)$ times better, since we choose to advance along $\O(n)$ directions out of the possible $\O(n^2)$ directions. A similar result again holds for the multi-directional RGSD algorithm, whose proof requires the following preliminary lemma. 
\begin{lemma}\label{greedy_avg}
	Let $\cE$ be the set of pair of indices $\{(i,j)\}$ representing basis directions $\{\G_{ij}^R(\X)\}$ selected by  mutli-directional RGSD algorithm, then the following inequality holds
	\begin{align}
		\sum_{(i,j)\in\cE}\beta_{ij}^2 &\geq   \frac{1}{2n}\|\grd^R f(\X)\|_{\X}^2
	\end{align}
	for all $\X \in \Pn$. 
\end{lemma}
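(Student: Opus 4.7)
The plan is to exploit the greedy selection rule to partition the $d = n(n+1)/2$ lower-triangular index pairs into groups, one per selected basis, and then use the greedy maximality within each group to bound the total squared-coefficient mass. By the orthonormal decomposition \eqref{grad-decomp}, we already know that $\|\grd^R f(\X)\|_\X^2 = \sum_{1\leq j \leq i\leq n} \beta_{ij}^2$, so the task reduces to comparing $\sum_{(i,j)\in\cE}\beta_{ij}^2$ against this total sum.

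List the selections of RGSD in order as $(i_1,j_1),\dots,(i_K,j_K)$. At step $s$, call a pair $(k,\ell)$ \emph{available} if it has neither been chosen nor been blocked by a previously chosen pair. The greedy rule picks $(i_s,j_s)$ to maximize $\beta_{k\ell}^2$ over available $(k,\ell)$. I would then define the group $G_s$ to be $(i_s,j_s)$ together with every newly blocked pair at step $s$, i.e.\ every previously available pair whose index set overlaps with $\{i_s,j_s\}$. By construction the $G_s$ form a disjoint partition of the full lower-triangular index set.

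Next I would bound $|G_s|$. Every pair in $G_s$ shares at least one index with $\{i_s,j_s\}$. For any fixed index $i$, the number of lower-triangular pairs containing $i$ is exactly $n$ (namely $(i,1),\dots,(i,i)$ together with $(i{+}1,i),\dots,(n,i)$). Thus at most $2n-1$ pairs overlap with $\{i_s,j_s\}$ when $i_s\neq j_s$ (subtracting the once-overcounted pair $(i_s,j_s)$), and at most $n$ pairs when $i_s=j_s$; in either case $|G_s|\leq 2n-1$. By greedy maximality every $(k,\ell)\in G_s$ was available at step $s$, so $\beta_{i_sj_s}^2\geq \beta_{k\ell}^2$, giving $\sum_{(k,\ell)\in G_s}\beta_{k\ell}^2 \leq (2n-1)\beta_{i_sj_s}^2$. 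Summing over $s$ and using the partition yields
\begin{align}
\|\grd^R f(\X)\|_\X^2 \;=\; \sum_{s=1}^K \sum_{(k,\ell)\in G_s} \beta_{k\ell}^2 \;\leq\; (2n-1)\sum_{(i,j)\in \cE}\beta_{ij}^2,
\end{align}
which rearranges to $\sum_{(i,j)\in\cE}\beta_{ij}^2 \geq \tfrac{1}{2n-1}\|\grd^R f(\X)\|_\X^2 \geq \tfrac{1}{2n}\|\grd^R f(\X)\|_\X^2$.

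The only subtle point is setting up the partition correctly: one must make sure every lower-triangular pair is assigned to some $G_s$ (which holds because a pair missing from $\cE$ was blocked at some first step $s^\star$, placing it into $G_{s^\star}$) and that the greedy comparison is only invoked against pairs still available at the step in question. Counting the $2n-1$ pairs that can contain either $i_s$ or $j_s$ is elementary, and the final inequality $\frac{1}{2n-1}\geq \frac{1}{2n}$ absorbs the slack needed to match the statement.
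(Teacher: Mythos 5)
Your proof is correct and follows essentially the same route as the paper: the groups $G_s$ you build (selected pair plus newly blocked pairs) are exactly the paper's sets $\cM(m_i)$, the greedy-maximality comparison within each group and the counting bound on group size are identical, and the final summation over the partition matches the paper's argument. The only cosmetic difference is that you carry the slightly tighter count $|G_s|\leq 2n-1$ throughout before relaxing to $\tfrac{1}{2n}$, whereas the paper relaxes to $2n$ immediately.
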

\begin{proof}
	With some abuse of notation, let us denote $\{\beta_{ij}\}_{1 \leq j \leq i \leq n}$ by the linearly indexed terms $\{\beta_m\}_{m\in \cI}$ for $\cI=\{1,2,\cdots,n(n+1)/2\}$. Likewise, let $\cM$ collect the linear indices corresponding to the basis vectors selected by the greedy algorithm, so that $\sum_{(i,j)\in\cE}\beta_{ij}^2 = \sum_{m\in \cM}\beta_{m}^2$. Without loss of generality, let us assume that $\beta_{m_1}^2\geq \beta_{m_2}^2\geq\cdots\geq  \beta_{m_K}^2$ for $m_i \in \cM$, where, $K=\abs{\cM}$.
	
	At the first step, RGSD selects $m_1 = \arg\max_{m\in\cI} \beta_m^2$. Let $\cM(m_1)$ be the set of overlapping basis vectors including $m_1$, so that $m_1 = \arg\max_{m \in \cM(m_1)}\beta_m^2$. For a basis vector $\G_{ij}^R(\X)$ such that $i\neq j$, we have that $|\cM(m_1)|=2n-1$, while for $\G_{ii}^R(\X)$, we have that $|\cM(m_1)|=n$. Therefore, by definition of $\beta_{m_1}$, we have the inequality:
	\begin{align} \label{betam1}
		\beta_{m_1}^2 \geq \frac{\sum_{m\in \cM(m_1)}\beta_m^2}{\abs{\cM_{m_1}}} \geq \frac{\sum_{m\in \cM(m_1)}\beta_m^2}{2n}
	\end{align}
	In the same way, at the $i$-th step, RGSD selects $m_i = \arg\max_{m\in\cM(m_i)} \beta_m^2$ where $\cM(m_i)$ is the set of indices that overlap with $m_i$ (including $m_i$), taken from the set $\cI \setminus \cup_{j=1}^{i-1}\cM(m_j)$. Further, it can be seen that $\abs{\cM(m_i)} \leq 2n$, so that
	\begin{align}\label{betami}
		\beta_{m_i}^2 \geq \frac{\sum_{m\in \cM(m_i)}\beta_m^2}{\abs{\cM_{m_i}}} \geq \frac{\sum_{m\in \cM(m_i)}\beta_m^2}{2n}
	\end{align}
	Hence, summing \eqref{betami} over all $i = 1, \ldots, K$, we have that
	\begin{align}
		\sum_{m\in\cM} \beta_m^2 &\geq \frac{1}{2n}\sum_{m \in \cup_j \cM(m_j)} \beta_m^2 = \frac{1}{2n} \sum_{m\in\cI}\beta_m^2 \label{rgsdstop}\\
		&= \frac{1}{2n}\|\grd^R f(\X)\|_{\X}^2
	\end{align}
	which is the required inequality. The last equality in \eqref{rgsdstop} follows from the stopping criteria of RGSD.  
\end{proof}

\begin{corollary}
	Under Assumptions \ref{a-smooth} and \ref{stronglyconvex}, the multi-directional RGSD algorithm has an iteration complexity of $\O(n\kappa \log(\frac{D_0}{\epsilon}))$.
\end{corollary}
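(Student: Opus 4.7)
The plan is to mirror the multi-directional RRSD proof, but replace the expectation-based bound \eqref{approxub} by the deterministic inequality furnished by Lemma \ref{greedy_avg}. Since RGSD is deterministic, no expectations will appear in the recursion.

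First I would set $\S(\X_t) = \sum_{(i,j)\in\cE_t}\beta_{ij}\G^R_{ij}(\X_t)$ as in the multi-directional RRSD analysis, where $\cE_t$ is now the index set selected by the greedy heuristic of Sec. \ref{subsec_greedy}. Plugging $\xib_{\X_t\X_{t+1}} = -\frac{1}{L}\S(\X_t)$ into the smoothness inequality \eqref{Lipschitz_cont_grad} and using orthonormality of the $\{\G^R_{ij}(\X_t)\}_{(i,j)\in\cE_t}$ produces exactly \eqref{multi_dir_decrse}:
\begin{align}
f(\X_{t+1}) \leq f(\X_t) - \frac{1}{2L}\sum_{(i,j)\in\cE_t}\beta_{ij}^2. \nonumber
\end{align}
This step is identical to the RRSD derivation and requires no new work.

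Next I would apply Lemma \ref{greedy_avg} to the sum on the right-hand side to obtain
\begin{align}
f(\X_{t+1}) \leq f(\X_t) - \frac{1}{4nL}\|\grd^R f(\X_t)\|_{\X_t}^2, \nonumber
\end{align}
which is the deterministic analogue of \eqref{exp_at_t_subspace} with $c$ replaced by $2n$. Combining this with the Polyak--{\L}ojasiewicz-type bound \eqref{grad_dominate}, namely $\|\grd^R f(\X_t)\|_{\X_t}^2 \geq \frac{\mu}{2}(f(\X_t)-f(\X^\star))$, and subtracting $f(\X^\star)$ from both sides yields the linear contraction
\begin{align}
f(\X_{t+1}) - f(\X^\star) \leq \Bigl(1-\frac{\mu}{8nL}\Bigr)\bigl[f(\X_t) - f(\X^\star)\bigr]. \nonumber
\end{align}
Iterating from $t=0$ and solving $\bigl(1-\frac{\mu}{8nL}\bigr)^T D_0 \leq \epsilon$ delivers the claimed complexity $\O(n\kappa \log(D_0/\epsilon))$.

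There is really no serious obstacle here; the heavy lifting was done in Lemma \ref{greedy_avg}, which converts the greedy selection rule into a deterministic $\frac{1}{2n}$-fraction guarantee on the captured gradient energy. The only mild subtlety worth noting in the writeup is that the constant degrades by a factor of two relative to the randomized multi-directional bound (hence $\frac{\mu}{8nL}$ instead of $\frac{\mu}{4nL}$ or $\frac{\mu}{4cL}$), which disappears inside the $\O(\cdot)$ notation but is consistent with the rate $(1-\tfrac{\mu}{8nL})^t$ listed for RGSD in Table \ref{table_result_comapre}.
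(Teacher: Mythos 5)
Your proof is correct and follows essentially the same route as the paper: it derives the per-step decrease \eqref{multi_dir_decrse} from smoothness, invokes Lemma \ref{greedy_avg} to lower-bound the captured gradient energy by $\frac{1}{2n}\|\grd^R f(\X_t)\|_{\X_t}^2$, and combines with \eqref{grad_dominate} to obtain the deterministic contraction $\left(1-\frac{\mu}{8nL}\right)$ and hence the $\O\left(n\kappa\log\left(\frac{D_0}{\epsilon}\right)\right)$ iteration complexity. Your observation about the factor-of-two degradation relative to the randomized bound is accurate and consistent with Table \ref{table_result_comapre}.
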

\begin{proof}
	From \eqref{multi_dir_decrse} we have that
	\begin{align}
		f\left(\X_{t+1}\right)&\leq f\left(\X_t\right)-\sum_{(i,j)\in\cE_t}\frac{\beta_{ij}^2}{2L} \leq  f\left(\X_t\right)-\frac{1}{4nL}\|\grd^R f(\X_t)\|_{\X_t}^2 \label{multi_greedy_decrse}
	\end{align}
	where the last inequality in \eqref{multi_greedy_decrse} follows from Lemma \ref{greedy_avg}. Combining \eqref{grad_dominate} and \eqref{multi_greedy_decrse}, we obtain
	\begin{align}
		\Rightarrow f\left(\X_{t+1}\right)-f(\X^{\star})\leq  \left(1-\frac{\mu}{8nL}\right)^{t+1}\big[f(\X_0)-f(\X^{\star})\big] 
		\nonumber
	\end{align}
	which translates to an iteration complexity of $\O\Big(n\kappa \log\Big(\frac{D_0}{\epsilon}\Big)\Big)$.
\end{proof}

In summary, the multi-directional RRSD and RGSD algorithms have an improved iteration complexity of $\O\big(n\kappa\log\big(\frac{D_0}{\epsilon}\big)\big)$ as compared to their uni-directional variants. However, from Table \ref{peritercomplexity}, we can see that the overall computational complexity of multi-direction RRSD and RGSD algorithms for $f \in \cF$ is still $\O(n^3\kappa \log(D_0/\epsilon))$ and $\O(n^3\log(n)\kappa \log(D_0/\epsilon))$, respectively. In other words and as is also the case with the coordinate descent algorithms in the Euclidean case, the overall computational complexity of every subspace descent variants is roughly the same as that of RGD. However, the subspace descent algorithms incur a significantly lower per-iteration computational complexity and hence can be applied to large-scale settings where carrying out even a single iteration of RGD may be prohibitive.

\section{Adaptive step-size selection for function class $\cF$}\label{adapt_step}
In this section, we provide an adaptive step-size selection rule that can simplify the hyperparameter tuning associated with running the proposed algorithm on large-scale problems. While adaptive step-sizes have been used in the context of Riemannian optimization, the present work focuses on achieving the adaptation without sacrificing the computational efficiency of the subspace approach. The need for low-complexity adaptive approaches rules out the use of techniques such as inexact line search (\cite{ferreira2019gradient}) and adaptive gradient methods (\cite{roy2018geometry}, \cite{becigneul2018riemannian}, \cite{kasai2019riemannian}). In this work, we utilize a modified version of the step-size selection strategy proposed for the Euclidean case in \cite{fountoulakis2018flexible}. Specifically, we construct a separate quadratic approximation of the objective along each of the basis directions within the subspace selected as per Sec. \ref{RSD_for_function_class}. Subsequently, the step-sizes along each of these directions is chosen to minimize the corresponding quadratic approximations. The Taylor series expansion of $f$ along a geodesic $\gamma(\lambda)$ with $\gamma'(0) = \V$ can be written as \cite[p. 80]{boumal2023introduction} :
\begin{align}
	f(\gamma\left(\lambda\right))&\approxeq f(\X)+\lambda\left\langle\grd^R \; f(\X),\V\right\rangle_{\X}+\frac{\lambda^2}{2}\left\langle H^Rf(\X)[\V],\V\right\rangle_{\X}\nonumber\\
	&=f(\X)+\lambda\tr{\grd \; f(\X)\V}+\frac{\lambda^2}{2}\left\langle H^Rf(\X)[\V],\V\right\rangle_{\X}\label{second_ordr_approx}.
\end{align}
Recall that $\grd \; f(\X)$ and  $\grd^R \; f(\X)$ denote Euclidean and Riemannian gradients respectively, while $H^Rf(\X)$ denotes the Riemannian Hessian of function $f$. At time instant $t$, the optimal value of $\lambda_{t,\V}$ which minimizes the second order approximation \eqref{second_ordr_approx} for  geodesically convex function $f$ along the  geodesic with initial direction $\V$ is given by
\begin{align}
	\lambda_{t,\V}^{\star} &= -\frac{\tr{\grd \; f(\X)\V}}{\left\langle H^Rf(\X)[\V],\V\right\rangle_{\X}}.
\end{align}
For each $\G_{ij}^R$, the corresponding $\lambda_{t,\G_{ij}^R}^{\star}$ takes on the role of $-\alpha_t\beta_{ij}$ in \eqref{multi_SD_update}. It is noteworthy that the computation of $\beta_{ij}$ is inherently part of the calculation of $\lambda_{t,\G_{ij}^R}^{\star}$. Therefore, in the $t$-th iteration, the descent direction is given by
	\begin{align}
		\S(\X_t) &= \sum_{(i,j) \in \cE_t} \lambda_{t,\G_{ij}^R}^{\star}\G^R_{ij}(\X_t) = \B_t\Bigg(\sum_{(i,j)\in\cE_t} \lambda_{t,\G_{ij}^R}^{\star}\Eb_{ij}\Bigg)\B_t^\T 
	\end{align}
Since the basis vectors in each subspace are non-overlapping, it follows from \eqref{exp_sd} that the corresponding optimal step-sizes along each of the basis directions can be selected independently. Hence, the update equation becomes
\begin{align}
	\X_{t+1} &= \Expxt(\S(\X_t))= \B_t\exp\Bigg(\sum_{(i,j)\in \cE_t}\lambda_{t,\G_{ij}^R}^{\star}\Eb_{ij}\Bigg)\B_t^\T
\end{align}
Notably, computing the optimal $\lambda_{t,\G_{ij}^R}^{\star}$ for the function class $\cF$ has a computational complexity of $O(n)$ for each $\G_{ij}^R$. Detailed calculations are provided in Appendix \ref{apndx_step_size_fun_cls}. In case we expect $f$ to grow more rapidly than its quadratic approximation, it may be necessary to scale down $\lambda_{t,\G_{ij}^R}^{\star}$ by dividing it by a scaling factor to achieve a reduction in the function value. The scaling factor must be tuned for a given problem.

It is remarked that for Riemannian gradient descent, the step-size selection based on quadratic approximation is computationally demanding; see also Appendix \ref{apndx_step_size_fun_cls}. Therefore, the proposed step-size selection approach is not practical for Riemannian gradient descent.

\section{Experimental results}
In this section, we test the performance of the proposed subspace descent variants over strongly g-convex functions in $\cF$. The performance of the proposed algorithms is compared with that of the RGD algorithm. Of particular interest are the large scale settings, where accelerated and second-order methods have prohibitively high complexity and are impractical. We also confirm numerically that the per-iteration complexity of RGD grows as $\O(n^3)$ while that of multi-directional RRSD and RGSD grows as $\O(n^2)$, and that of uni-directional RRSD grows as $\O(n)$. All simulations are performed in MATLAB on a system with 512 GB RAM. To ensure sufficient memory for large-scale settings, we do not pre-allocate the intermediate variables and clear them after every use. Care is taken to ensure that the key steps of different algorithms are implemented similarly, so that their run times reflect their computational complexity and can be compared directly. 

Comparison with the coordinate descent approach proposed in \cite{gutman2022coordinate} is not included, as no specific low-complexity algorithm for handling the function minimization over SPD manifold (cf. \eqref{problem}) was provided. We note however that the general approach provided in \cite{gutman2022coordinate} can be seen as including the uni-directional RRSD algorithm as a special case, but not the other variants.

Consider the function $f \in \cF$ given by
\begin{align}
	f\left(\X\right)&=\tr{\mathbf{CX}^{-1}+\mathbf{DX}}+k \log\det\left(\X\right)\nonumber\\
	&= \tr{\B^{-1}\C\B^{-\T}}+\tr{\B^{\T}\D \B}+2k \log\det\left(\B\right) \label{fsimulations}
\end{align}
where $\B = \cL(\X)$, $\C, \D \succ 0$, and $k\in \Rn$. Further, $f$ is  $\mu$-strongly g-convex with $\mu = \min\{\lambda_{\min}(\C),\lambda_{\min}(\D)\}$ (see Lemma \ref{lemma_sc}) and has the gradient
\begin{align}
	\grd f\left(\X\right)&= \D -\X^{-1}\C\X^{-1}+k\X^{-1}\\
	\grd^R f\left(\X\right)&= \X\D \X-\C+k\X
\end{align}
In general, gradient field of $f$ may not be Lipschitz smooth, unless we restrict $\X$ to a norm ball of radius $R$. For example, gradient vector field of function  $\tr{\X^{-1}+\X}$ is not Lipschitz smooth but when $\X$ is confined to a norm ball of radius $R$, its gradient vector field is $(e^{R}+e^{-R})$-Lipschitz smooth (see Lemma \ref{lemma_l_smooth}). For a descent algorithm, assuming $\mu$-strong g-convexity, it can be seen that all iterates would lie in norm ball of radius $\sqrt{2(f(\X_0)-f(\X^{\star}))/\mu}$.

The function form in \eqref{fsimulations} is motivated from its use in maximum a-priori (MAP) estimation of the covariance matrix of an $n$-variate Gaussian random variable with Wishart prior. Specifically, the likelihood function for independent and identically distributed data points $\x_1, \ldots, \x_N \sim \cN(\mub, \Sig)$ is given by 
\begin{align}
	L(\mub, \Sig, \{\x_i\}_{i=1}^N) = c_1 \det\left(\Sig\right)^{-n/2}\exp\left(-\frac{1}{2}\tr{\C_d\Sig^{-1}}\right)
\end{align}
where, 
\begin{align}
	\mathbf{C}_d&= \sum_{i=1}^{N}\left(\x_i-\boldsymbol{\mu}\right)\left(\x_i-\boldsymbol{\mu}\right)^\T
\end{align}
and $\Sig$ is Wishart with parameters $(n,p,\S)$ so that \cite[p.87]{gupta1999matrix}
\begin{align}
	p\left(\Sig\big|n,p,\S\right) &= c_2\det\left(\Sig\right)^{\frac{1}{2}\left(n-p-1\right)}\exp\left(-\frac{1}{2}\tr{\S^{-1}\Sig}\right)
\end{align}
for $\S\succ\mathbf{0}$ and $n\geq p$. Hence, the MAP estimator of $\Sig$ can be obtained by solving 
\begin{align}
	\hat{\Sig}_{\text{MAP}} &= \min_{\Sig}\left\{\tr{\mathbf{C}_d\Sig^{-1}}+\tr{\mathbf{S}^{-1}\Sig}+c_3\log\det\left(\Sig\right)\right\} \label{Gauss_map}
\end{align}
We remark that it is common to instead associate an inverse Wishart prior when dealing with MAP estimation of the covariance matrix. Since inverse Wishart is the conjugate prior of the covariance matrix of a Gaussian distribution, it yields closed form MAP estimates. The formulation in \eqref{Gauss_map} however allows for a Wishart prior, and can similarly be modified to allow for other priors, including inverse Wishart and normal-Wishart priors. 

We also note that for some specific choices of $\C$,  $\D$, and $k$, the solution can be found in closed-form: 
\begin{itemize}
	\item  for $\D =\I$ (Identity matrix) and $k=0$, we have that $\grd^R f\left(\X^{\star}\right)= \X^{\star}\X^{\star}-\C=0$, which yields $\X^{\star} = \C^{1/2}$, and
	\item For $\C=\mathbf{0}$ and $k=-1$, we have that $\grd^R f\left(\X^{\star}\right)= \X^{\star}\D \X^{\star}-\X^{\star}=0$, which yields $\X^{\star}=\D ^{-1}$.
\end{itemize}
For $f$ in \eqref{fsimulations}, the RGD updates take the following form:
\begin{align}
	\X_{t+1}
	&= \B_t\exp\left(-\alpha_t\B_t^{-1}\grd^R f\left(\X_t\right)\B_t^{-\T}\right)\B_t^\T\nonumber\\
	&= \B_t\exp\left(\alpha_t\left(\B_t^{-1}\C\B_t^{-\T}-\B_t^{\T}\D \B_t -k\cdot\I \right)\right)\B_t^\T.
\end{align}
For RRSD and RGSD algorithms, the coefficient $\beta_{ij}$ is given by 
\begin{align}
	\beta_{ij}&=  \tr{\left[\B_t^{\T}\D \B_t-\B_t^{-1}\C\B_t^{-\T}+k\cdot\I\right]\mathbf{E}_{ij}}.
\end{align}
As discussed in earlier section, $f(\X_t)$ and $\beta_{ij}$ can be written using the following
intermediate variables:
\begin{align}
	\M_{1}(\X_t)&=\B_t^{\T}\D \B_t\\
	\M_{2}(\X_t)&=\B_t^{-1}\C\B_t^{-\T}
\end{align} 
so that
\begin{align}
	f\left(\X_t\right)&=\tr{\M_{1}(\X_t)}+\tr{\M_{2}(\X_t)}+k\cdot \log\det\left(\X_t\right)\nonumber\\
	\beta_{ij}&=  \sqrt{2}^{\I\left(i\neq j\right)} \left[\M_{1}(\X_t)-\M_{2}(\X_t)+k\cdot\I\right]_{ij}\label{beta_in_subclass}
\end{align}
The intermediate matrices $\M_{1}(\X_t)$ and $\M_{2}(\X_t)$ can be updated in recursive manner as specified in \eqref{m1_update}-\eqref{m2_update}.

We consider two choices of $k$, namely $-1$ and $0$, which correspond to the functions:
\begin{align}
	f_1\left(\X\right)&=\tr{\C\X^{-1}+\D\X}-\log\det\left(\X\right)\\
	f_2(\X)&=\tr{\C\X^{-1}+\D\X}
\end{align}
respectively. We remark that $\log\det(\X)$ is geodesically linear in the SPD manifold, and hence both $f_1$ and $f_2$ have the same condition number. However, the performance of gradient-based methods often depends on the local condition number in the vicinity of $\X^\star$, which could be different for $f_1$ and $f_2$ (see Appendix \ref{apndx_codn_number}). For instance, if we take 
\begin{align}
	\C=\begin{bmatrix}
		5.6667 & 10.0000&5.8889\\
		10.0000&26.2222& 17.5556\\
		5.8889& 17.5556 & 12.1111
	\end{bmatrix}
\end{align}
and $\D=\I$, then, it can be numerically verified that the condition numbers of $f_1$ and $f_2$ are $12.87$ and $104.88$ at their respective optima. Indeed, as we shall see in the simulations as well, $f_1$ is relatively well-behaved and various algorithms converged faster on $f_1$ than on $f_2$. 

\subsection{Per-iteration complexity}
We begin with empirically verifying the per-iteration time complexity of the proposed subspace descent algorithms. As we are only concerned about the per-iteration complexity and its evolution with $n$, we take $\C$ and $\D$ to be symmetric matrices that may not necessarily be positive definite in order to save on the simulation time. Fig. \ref{time_per_iter}(a) shows the time taken to run each iteration of RGD and RRSD algorithms. It can be seen from the figure that, as expected, the per iteration time complexity of multi-directional RRSD grows as $\O(n^2)$ while that of RGD grows as $\O(n^3)$. 
\begin{figure}[H]
	\includegraphics[width=\textwidth]{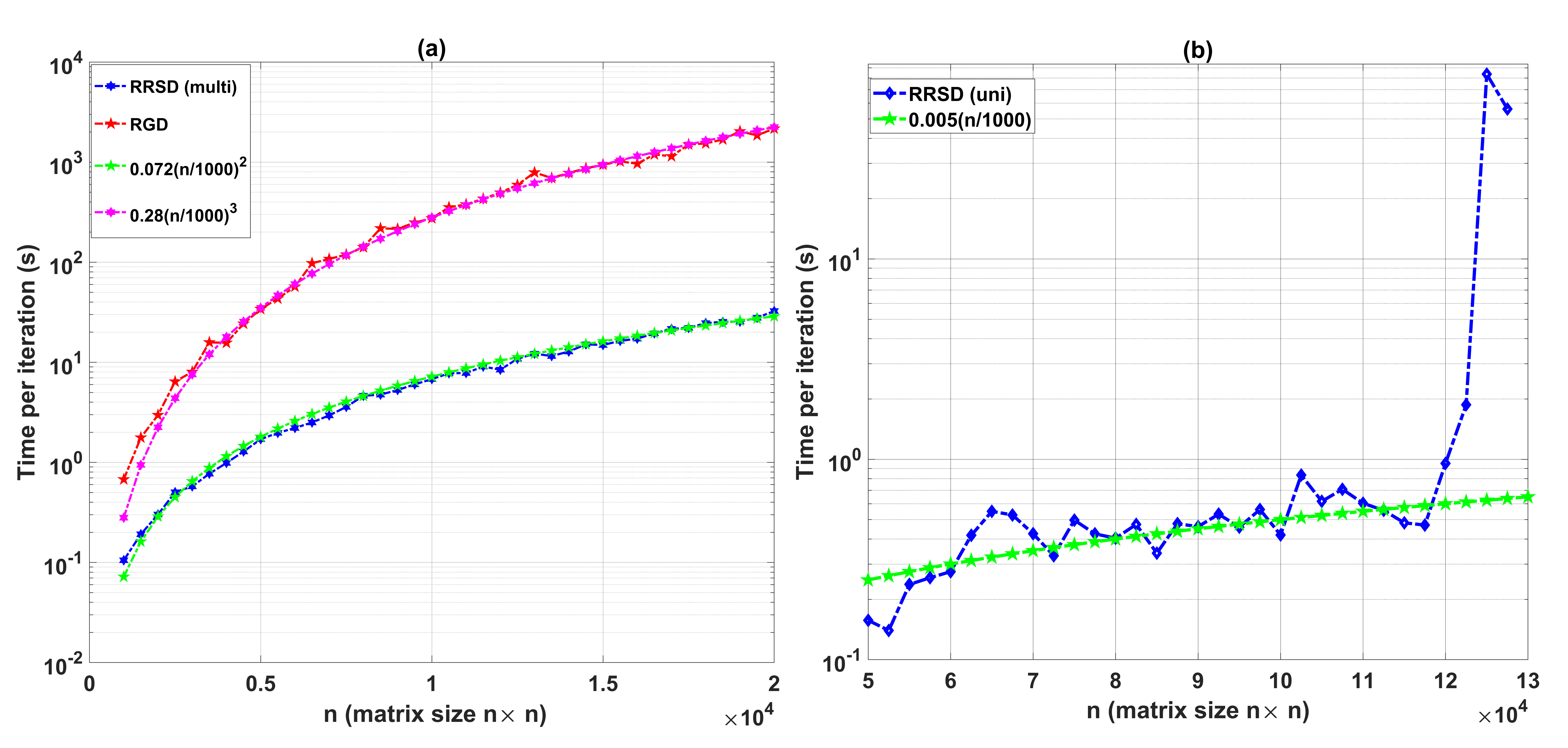}
	\caption{Comparison of  time per iteration of RGD, uni-directional RRSD  (RRSD (uni)) and multi-directional RRSD (RRSD (multi)) methods}
	\label{time_per_iter}
\end{figure}
We also consider larger-scale settings where even $\O(n^2)$ complexity is impractical, and it becomes necessary to implement the uni-directional RRSD whose per-iteration complexity grows only linearly with $n$. Fig. \ref{time_per_iter}(b) shows the time per-iteration of the RRSD algorithm. The performance of RGD is not shown, as we observed that it ran out-of-memory for $n=60000$. It can be seen that the per-iteration time complexity of RRSD grows almost linearly for $n \leq 122500$.  For larger $n$ however, the time-complexity increase abruptly due to the use of swap memory, and the system runs out of memory for  $n=130000$. We comment that it may be possible to go beyond this limit by reading/writing only parts of different variables directly from the disk, as explained in \ref{rrsdfuni}. 

\subsection{Performance comparison}
Next, we compare the empirical performance of various proposed algorithms and that of RGD.  To assess performance of the proposed algorithm, we simulated SPD matrix $\C\in \mathbb{R}^{n\times n}$ for $n\in \{100,500,1000\}$ as $\C=\C_{temp}\C_{temp}^{\T}/n^2$, 
	where,  $\C_{temp}$ is $n\times n$ matrix of pseudorandom integers drawn from the discrete uniform distribution on the interval [1,10]. and $\D = \I$. All algorithms are initialized with $\X_0 = \I$. The step-sizes for RRSD and RGSD are calculated by adaptive step-size selection method described in Sec. \ref{adapt_step}  while the step-size for RGD is set to $0.1$, which is the largest value for which RGD converges. For the problems at hand, it is possible to calculate the optimal values in closed-form. Specifically, if the eigenvalue decomposition of $\C$ is given by $\C=\U\Sigma_{\C}\U^{\T}$, then the minimizer of $f_1$ is 
$\X_1^{\star}=\U\Sigma_{\X_1^{\star}}\U^{\T}$, where $[\Sigma_{\X_1^{\star}}]_{ii}=\frac{1+\sqrt{1+4[\Sigma_{\C}]_{ii}}}{2}$ and minimizer of $f_2$ is $\C^{1/2}$.

Recall that the theoretical performance of various algorithms was characterized in terms of the computational complexity. In practice, there are different ways to benchmark the performance. Here, we use the following metrics for performance comparison: 
\begin{enumerate}[1)]
	\item Number of basis directions, $\sum_t K_t$,
	\item Number of entries of $\F(\X)$ that must be calculated over all iterations,
	\item Total number of floating point operations. 
\end{enumerate}
Optimality gaps for different values of $n$ and different metrics are shown in Figs. \ref{fig_logdet} and \ref{fig_withoutlogdet} for $f_1$ and $f_2$, respectively.

\begin{figure}[H]
	\centering
	\includegraphics[width=\textwidth]{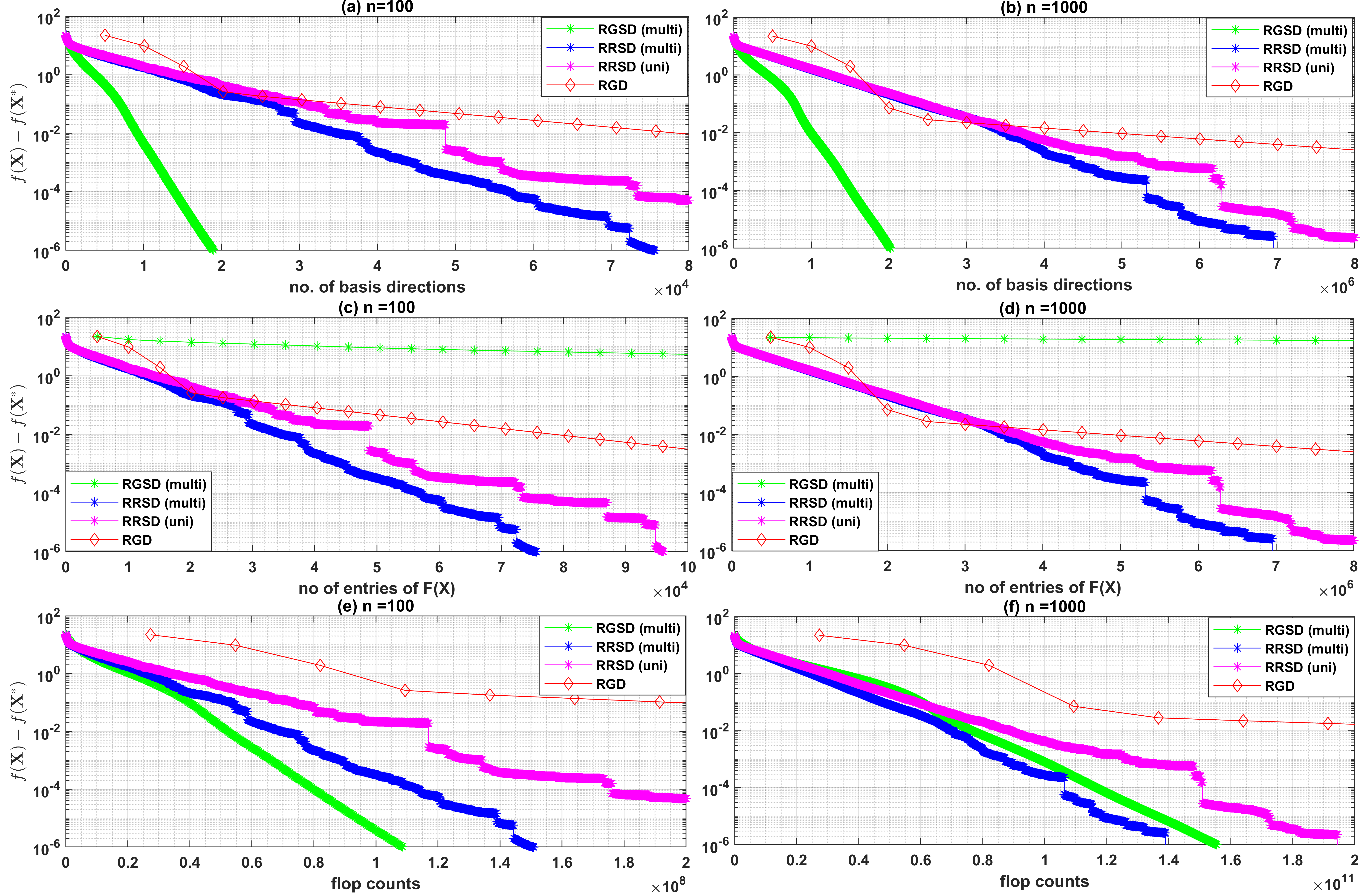}
	\caption{Comparison of  RGD and subspace-descent methods for function $f_1(\X)$; RGSD (multi) :  multi-directional RGSD, RRSD (multi) : multi-directional RRSD, RRSD (uni) : uni-directional RRSD.}
	\label{fig_logdet}
\end{figure}
\begin{figure}[H]
	\centering
	\includegraphics[width=\textwidth]{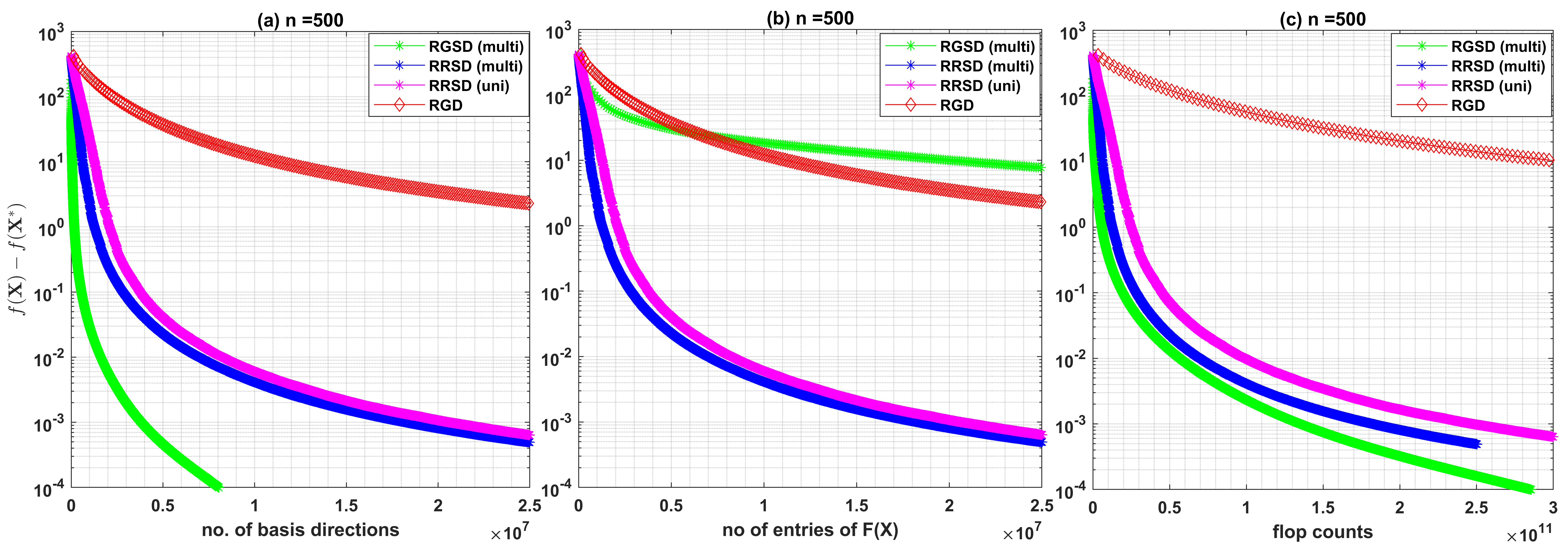}
	\caption{Comparison of  gradient descent and subspace descent methods for function $f_2(\X)$}
	\label{fig_withoutlogdet}
\end{figure}

Recall that RRSD  and RGSD algorithms use at most $n$ basis directions per-iteration while RGD uses all $\frac{n(n+1)}{2}$ directions at every iteration. From Figs. \ref{fig_logdet}(a), \ref{fig_logdet}(b) and \ref{fig_withoutlogdet}(a), we can see that both RRSD and RGSD algorithms outperform RGD. Interestingly, for this metric, the performance of mutli-directional RGSD algorithm is significantly better than that of all other algorithms, since it seeks to choose the best possible directions at every iteration.

Recall that both the RRSD algorithm variants need at most $n$ entries of $\F(\X)$ per iteration. In contrast, although RGSD algorithm also utilizes at most $n$ entries of $\F(\X)$ at every iteration, it still requires all the entries of $\F(\X)$ in order to select the greedy directions. The observation is confirmed from Figs. \ref{fig_logdet}(c), \ref{fig_logdet}(d) and \ref{fig_withoutlogdet}(b), which show RGSD as the worst performing among all the algorithms. 

Finally, we compare the different algorithms on the basis of their flop count, which should reflect the actual run-time and be largely independent of the system and implementation. Figs. \ref{fig_logdet}(e), \ref{fig_logdet}(f) and \ref{fig_withoutlogdet}(c) show that multi-directional RRSD and RGSD are both superior to uni-directional RRSD, which is again much better than the RGD algorithm. Of these, RGSD is better for small $n$ only, as the cost of calculating and sorting the entries of $\F(\X)$ starts to dominate for large $n$.

\section{Conclusion and future work}
Minimization problems over the symmetric positive definite (SPD) manifold arise in a number of areas, such as kernel matrix learning, covariance estimation of Gaussian distributions, maximum likelihood parameter estimation of elliptically contoured distributions and parameter estimation in Gaussian mixture model problems. Recent years have seen the development of Riemannian optimization algorithms that seek to be more efficient at solving these problems by exploiting the structure of the SPD manifold. However, the Riemannian gradient descent and other related algorithms generally require costly matrix operations like matrix exponentiation and dense matrix multiplication  at every iteration, and hence incur a complexity of at least $\O(n^3)$ at every iteration. Motivated by coordinate descent algorithms popular in the Euclidean case, we put forth Riemannian subspace descent algorithms that are able to achieve lower per-iteration complexities of $\O(n)$ and $\O(n^2)$. The proposed algorithms achieve this reduction by identifying special subspaces which allow low-complexity update of the Cholesky factors of the iterates and by restricting to a class of functions over which the Riemannian gradients can be efficiently calculated. The performance of the proposed algorithms is evaluated for large-scale covariance estimation problems, where they are shown to be faster than the Riemannian gradient descent algorithm.

The proposed algorithms can be viewed as Riemannian counterparts of the classical block coordinate descent (BCD) algorithm. Naturally, it remains to see if the advances in BCD carry over to the Riemannian setting. In the Euclidean case for instance, a large number of variable selection or block selection strategies have been explored, and similar strategies can be explored in the Riemannian case also. Likewise, iteration complexity analysis of the proposed algorithms for other interesting cases, such as when the function is non-convex and/or satisfies the Polyak-Łojasiewicz (PL) inequality, would be of great importance. Finally, it remains to see if faster versions of the proposed algorithms, possibly using momentum or variance-reduction techniques, can be developed for the finite sum problem, briefly discussed in Sec. \ref{RSD_for_function_class}. 


\appendix
\section{}

\begin{lemma}\label{lemma_sc}
The function $f(\X)=\tr{\C\X^{-1}+\D\X}$ is $\mu$-strongly g-convex for $\C \succ 0$ and $\D \succ 0$ with $
	\mu=\min\left(\lambda_{\min}(\C),\lambda_{\min}(\D)\right)$. 
\end{lemma}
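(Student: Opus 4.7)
The plan is to prove $\mu$-strong g-convexity by establishing the differential inequality $(f\circ\gamma)''(t) \geq \mu\,\|\gamma'(t)\|_{\gamma(t)}^2$ along every geodesic, and then integrating twice to recover \eqref{geodesic_strong_convx}. Fix $\X \in \Pn$ and a tangent vector $\xib \in T_\X\Pn$, and write the geodesic as $\gamma(t)=\X^{1/2}\exp(t\A)\X^{1/2}$ with $\A=\X^{-1/2}\xib\X^{-1/2}$ symmetric. Then $\|\gamma'(t)\|_{\gamma(t)}^2=\tr{\A^2}$ is constant along $\gamma$, so it suffices to bound $(f\circ\gamma)''(t)$ by $\mu\,\tr{\A^2}$ pointwise in $t$.

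First, I would diagonalize $\A=\U\Lambda\U^\T$ with eigenvalues $\lambda_1,\dots,\lambda_n$ and columns $\u_1,\dots,\u_n$ of $\U$, and use the cyclic property of the trace to write
\begin{align*}
f(\gamma(t)) = \sum_{i=1}^n \left( a_i\, e^{-t\lambda_i} + b_i\, e^{t\lambda_i}\right),
\end{align*}
with $a_i:=\u_i^\T\X^{-1/2}\C\X^{-1/2}\u_i>0$ and $b_i:=\u_i^\T\X^{1/2}\D\X^{1/2}\u_i>0$. Differentiating twice gives $(f\circ\gamma)''(t)=\sum_i\lambda_i^2(a_ie^{-t\lambda_i}+b_ie^{t\lambda_i})$, so the target reduces to showing $a_ie^{-t\lambda_i}+b_ie^{t\lambda_i}\geq\mu$ for every $i$, uniformly in $t$ and in the base point $\X$.

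The core step is a $\X$-independent lower bound on $a_ib_i$. The AM--GM inequality yields $a_ie^{-t\lambda_i}+b_ie^{t\lambda_i}\geq 2\sqrt{a_ib_i}$ for all $t$. Using $\lambda_{\min}$ bounds, $a_i\geq\lambda_{\min}(\C)\,\u_i^\T\X^{-1}\u_i$ and $b_i\geq\lambda_{\min}(\D)\,\u_i^\T\X\u_i$. The Kantorovich-type Cauchy--Schwarz inequality
\begin{align*}
(\u_i^\T\X\u_i)(\u_i^\T\X^{-1}\u_i) \geq \bigl((\X^{1/2}\u_i)^\T(\X^{-1/2}\u_i)\bigr)^2 = (\u_i^\T\u_i)^2 = 1
\end{align*}
then gives $a_ib_i\geq\lambda_{\min}(\C)\lambda_{\min}(\D)$. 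Combined with the elementary inequality $2\sqrt{xy}\geq\min(x,y)$ for $x,y>0$, this yields $a_ie^{-t\lambda_i}+b_ie^{t\lambda_i}\geq\mu$ with $\mu=\min(\lambda_{\min}(\C),\lambda_{\min}(\D))$.

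Putting the pieces together, $(f\circ\gamma)''(t)\geq\mu\sum_i\lambda_i^2=\mu\,\|\gamma'(t)\|_{\gamma(t)}^2$ for all $t$, which is the Riemannian Hessian lower bound. Applying this along the geodesic from $\X=\gamma(0)$ to an arbitrary $\Y=\gamma(1)$ and integrating twice, with $\xib_{\X\Y}=\gamma'(0)$, yields \eqref{geodesic_strong_convx} and completes the proof. The main obstacle is the uniform bound on $a_ib_i$: the factors $a_i$ and $b_i$ individually scale with $\X$ in opposite directions, so the argument crucially relies on the Cauchy--Schwarz identity $\u_i^\T\u_i=(\X^{1/2}\u_i)^\T(\X^{-1/2}\u_i)$ to cancel the $\X$-dependence in the product.
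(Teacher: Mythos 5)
Your proof is correct, and it takes a genuinely different route from the paper's. The paper works at the level of the Riemannian Hessian: it computes the Euclidean Hessian of $f$, converts it via the known relation between Euclidean and Riemannian Hessians on $\Pn$ (cf.\ \eqref{hess_eqn_fun}), reduces $\mu$-strong g-convexity to the Loewner condition $\D+\X^{-1}\C\X^{-1}-\mu\X^{-1}\succcurlyeq 0$, and then verifies this by a case split on the eigenvalues of $\X$ ($\lambda_i(\X)\leq 1$ versus $\lambda_i(\X)>1$), dropping the $\D$ term in one case and the $\X^{-1}\C\X^{-1}$ term in the other. You instead bypass all Hessian machinery: restricting $f$ to the geodesic and diagonalizing $\A=\X^{-1/2}\xib\X^{-1/2}$ reduces everything to the scalar functions $a_ie^{-t\lambda_i}+b_ie^{t\lambda_i}$, and the AM--GM bound combined with the Cauchy--Schwarz cancellation $(\u_i^\T\X\u_i)(\u_i^\T\X^{-1}\u_i)\geq(\u_i^\T\u_i)^2=1$ kills the $\X$-dependence exactly where the paper needs its case analysis. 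Each step checks out (including the degenerate terms with $\lambda_i=0$, and the double integration recovering \eqref{geodesic_strong_convx} since $\|\gamma'(t)\|_{\gamma(t)}^2=\tr{\A^2}$ is constant). Two observations on what each approach buys. First, your argument actually proves the stronger modulus $2\sqrt{\lambda_{\min}(\C)\lambda_{\min}(\D)}$, which dominates $\min(\lambda_{\min}(\C),\lambda_{\min}(\D))$ (indeed by a factor of at least $2$) and is tight already for $n=1$, $f(x)=c/x+dx$; so you have strengthened the lemma, whereas the paper's eigenvalue case split only certifies the $\min$. Second, your route is cleaner on a technical point: the paper's case analysis compares eigenvalues of distinct non-commuting matrices under a common index $i$, which requires care, while your per-eigenvector scalar bound avoids any such comparison. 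What the paper's Hessian route buys in exchange is the exact second-order characterization \eqref{hess_eqn_fun}, which it reuses verbatim for the Lipschitz-smoothness analysis (Lemma \ref{lemma_l_smooth}) and the condition-number computations in Appendix \ref{apndx_codn_number}; your geodesic-restriction technique would need to be rerun separately for those upper bounds, though it would work there too.
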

\begin{proof}
	We begin with noting that
	\begin{align}
		df(\X)&=-\tr{\X^{-1}\C\X^{-1}d\X}+\tr{\D d\X}\\
		\Rightarrow \grd f(\X)&= \D-\X^{-1}\C\X^{-1}.
	\end{align}
The Euclidean hessian can be evaluated as follows 
	\begin{align}
		\G(\X)&=\grd f(\X)=\D-\X^{-1}\C\X^{-1}\\
		d\G(\X)&=\X^{-1}d\X\X^{-1}\C\X^{-1}+\X^{-1}\C\X^{-1}d\X\X^{-1}\\
\Rightarrow		Hf(\X)[\V]&= \X^{-1}\V\X^{-1}\C\X^{-1}+\X^{-1}\C\X^{-1}\V\X^{-1}
	\end{align}
For the Riemannian manifold $\Pn$, Euclidean and Riemannian Hessians are related as \cite{ferreira2019gradient}: 
\begin{align}
H^Rf(\X)[\V]&=\X	Hf(\X)[\V]\X+\frac{1}{2}\big[\V\grd f(\X)\X+\X\grd f(\X)\V\big]\\
\Rightarrow		\left\langle H^Rf(\X)[\V],\V\right\rangle_{\X}&=\tr{\D\V\X^{-1}\V}+\tr{\V\X^{-1}\V\X^{-1}\C\X^{-1}}.\label{hess_eqn_fun}
	\end{align}
Suppose that $f(\X)$ is $\mu$-strongly g-convex. Then we must have that
	\begin{align}
		&\left\langle H f(\X)[\V],\V\right\rangle_{\X}\geq \mu\tr{\V\X^{-1}\V\X^{-1}}\\
	\Leftrightarrow 	&\tr{\D\V\X^{-1}\V}+\tr{\V\X^{-1}\V\X^{-1}\C\X^{-1}}-\mu\tr{\V\X^{-1}\V\X^{-1}}\geq0\\
	\Leftrightarrow		&\tr{\V\X^{-1}\V\left(\D+\X^{-1}\C\X^{-1}-\mu\X^{-1}\right)}\geq0 \label{sc_condition_original}
	\end{align}
which holds if  and only if
	\begin{align}
		\D+\X^{-1}\C\X^{-1}-\mu\X^{-1}\succcurlyeq0
	\end{align}
or equivalently
\begin{align}
	\lambda_i(\D+\X^{-1}\C\X^{-1}-\mu\X^{-1})&\geq 0\; & 1 \leq &i \leq n. \label{strong_convx_cond}
\end{align}
To derive condition on $\mu$ such that \eqref{strong_convx_cond} is satisfied let us divide it into two cases (a) $\lambda_i(\X)\leq 1$ and (b) $\lambda_i(\X) > 1$. 

\noindent \textbf{Case $\lambda_i(\X)\leq 1$: }For $\D \succ 0$ we have that \cite[p.117]{george2008matrix}
\begin{align}
\D+\X^{-1}\C\X^{-1}-\mu\X^{-1}&\succeq  \X^{-1}\C\X^{-1}-\mu\X^{-1}=\X^{-\frac{1}{2}}\left(\X^{-\frac{1}{2}}\C\X^{-\frac{1}{2}}-\mu\I\right)\X^{-\frac{1}{2}}. \label{case_eig_Xleq1}
\end{align}
For two symmetric matrices $\A$ and $\B$, it is known that $\A\succeq\B\Leftrightarrow$ $\R^{\T}\A\R\succeq \R^{\T}\B\R$ for non-singular $\R$ \cite[p.227]{george2008matrix}. Therefore, \eqref{case_eig_Xleq1} is equivalent to
\begin{align}
	\X^{\frac{1}{2}}\left(\D+\X^{-1}\C\X^{-1}-\mu\X^{-1}\right)\X^{-\frac{1}{2}}&\succeq \X^{-\frac{1}{2}}\C\X^{-\frac{1}{2}}-\mu\I
\end{align}
which implies that  \cite[p.228]{george2008matrix}
\begin{align}
	\lambda_i\left(\X^{\frac{1}{2}}\left(\D+\X^{-1}\C\X^{-1}-\mu\X^{-1}\right)\X^{-\frac{1}{2}}\right)\geq \lambda_i\left(\X^{-\frac{1}{2}}\C\X^{-\frac{1}{2}}-\mu\I\right). \label{lambda_i_case_1}
\end{align}
If $ \mu$ is selected such that $\lambda_i\left(\X^{-\frac{1}{2}}\C\X^{-\frac{1}{2}}-\mu\I\right)\geq 0$	then the relation \eqref{lambda_i_case_1} ensures that \eqref{strong_convx_cond} is satisfied for that index $i$ for which $\lambda_i(\X)\leq 1$. This will now allow us to derive the condition on $\mu$. Note  that
		\begin{align}
			\lambda_i\left[\X^{-\frac{1}{2}}\C\X^{-\frac{1}{2}}-\mu\I\right]= \lambda_i\left[\X^{-\frac{1}{2}}\C\X^{-\frac{1}{2}}\right]-\mu=\lambda_i\left(\X^{-1}\C\right)-\mu
		\end{align}
	where the last equality follows from the fact that for two symmetric positive definite matrices $\A$ and $\B$, the eigenvalues of $\A\B$ and $\B\A$ are the same. We further have that $\lambda_i(\A\B)\geq \lambda_i(\A)\lambda_{\min}(\B)$ \cite[p.119]{george2008matrix}, so that
	\begin{align}
			\lambda_i\left(\X^{-1}\C\right)-\mu&\geq \lambda_i\left(\X^{-1}\right)\lambda_{\min}(\C)-\mu.
		\end{align}
	Hence, since $\lambda_i(\X)\leq 1$, we have that	
\begin{align}
	\lambda_i\left(\X^{-1}\right)\lambda_{\min}(\C)-\mu\geq 0\text{ if } \lambda_{\min}(\C)\geq \mu
\end{align}.

\noindent \textbf{Case $\lambda_i(\X)>1$: } Similar to previous case, for $\C\succ 0$ we have that $\X^{-1}\C\X^{-1}\succ 0$, which implies that
		\begin{align}
			\D+\X^{-1}\C\X^{-1}-\mu\X^{-1}&\succeq  \D-\mu\X^{-1}
		\end{align}
If $\mu$ is chosen such that inequality
\begin{align}
	\lambda_i\left(\D-\mu\X^{-1}\right)\geq 0 \Leftrightarrow \lambda_i(\D)\geq \mu\lambda_i(\X^{-1}) \label{eig_X_geq1}
\end{align}
 is satisfied, it ensures that \eqref{strong_convx_cond} is also satisfied for the index $i$ where $\lambda_i(\X)>1$. 
The inequality \eqref{eig_X_geq1} is satisfied if $\lambda_{\min}(\D)\geq \mu$.

Combining the two cases, it follows that $\tr{\C\X^{-1}+\D\X}$ is g-strongly $\mu$-convex with 
	\begin{align}
		\mu=\min\left(\lambda_{\min}(\C),\lambda_{\min}(\D)\right)
	\end{align}
\end{proof}
\begin{lemma}\label{lemma_l_smooth}
The gradient vector field of the funcion $f(\X)=\tr{ \X^{-1}+ \X} $ is not Lipschitz smooth in general, but is Lipschitz smooth when $\X$ lies in a compact set. 
	\end{lemma}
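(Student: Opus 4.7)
The plan is to work entirely with the Riemannian Hessian quadratic form, since for the affine-invariant geometry on $\Pn$ a uniform bound of the form $\langle H^R f(\X)[\V],\V\rangle_\X \leq L\|\V\|_\X^2$ implies the quadratic upper bound \eqref{Lipschitz_cont_grad} via second-order Taylor expansion along geodesics. Specializing the Hessian formula \eqref{hess_eqn_fun} derived in the proof of Lemma~\ref{lemma_sc} to $\C=\D=\I$ gives
\begin{align}
\langle H^R f(\X)[\V],\V\rangle_\X = \tr{\V\X^{-1}\V} + \tr{\V\X^{-1}\V\X^{-2}}.
\end{align}
The change of variables $\W=\X^{-1/2}\V\X^{-1/2}$ (symmetric) turns this, by cyclic invariance of the trace, into $\tr{(\X+\X^{-1})\W^2}$, while $\|\V\|_\X^2$ becomes $\tr{\W^2}$. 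Thus the Lipschitz question reduces to bounding $\tr{(\X+\X^{-1})\W^2}/\tr{\W^2}$.

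For the first claim, I would exhibit a family of points and directions along which this ratio blows up. Taking $\X=t\I$ and $\V=\I$ for $t>0$, one gets $\W=t^{-1}\I$, $\|\V\|_\X^2=n/t^2$, and $\langle H^Rf(\X)[\V],\V\rangle_\X = n(t+1/t)/t^2$, so the ratio equals $t+1/t$, which diverges as $t\to 0^+$ or $t\to\infty$. Any finite $L$ valid over all of $\Pn$ would upper bound this ratio, which is impossible, so the gradient field is not globally Lipschitz.

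For the second claim, I would take the compact set to be a Riemannian ball around $\I$, namely $\{\X\in\Pn:\|\log\X\|_F\leq R\}$, which forces every eigenvalue of $\X$ into $[e^{-R},e^R]$. Using the eigendecomposition $\X=\sum_i \lambda_i\v_i\v_i^\T$, one writes
\begin{align}
\tr{(\X+\X^{-1})\W^2} = \sum_{i=1}^n (\lambda_i+\lambda_i^{-1})\|\W\v_i\|^2 \leq (e^R+e^{-R})\sum_{i=1}^n \|\W\v_i\|^2 = (e^R+e^{-R})\tr{\W^2},
\end{align}
where the middle inequality uses the fact that $\lambda\mapsto\lambda+1/\lambda$ is maximized over $[e^{-R},e^R]$ at the two endpoints, both yielding $e^R+e^{-R}$. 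This gives the uniform Hessian bound with $L=e^R+e^{-R}$, which in turn yields \eqref{Lipschitz_cont_grad} on the ball after integrating along geodesics (noting that the ball is geodesically convex on the Hadamard manifold $\Pn$).

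The main obstacle, aside from the bookkeeping of the change of variables $\W=\X^{-1/2}\V\X^{-1/2}$, is the standard but not entirely trivial step of turning the pointwise Hessian bound into the Lipschitz-type inequality \eqref{Lipschitz_cont_grad}; for $\Pn$ this follows by applying the second-order mean-value theorem to $t\mapsto f(\gamma(t))$ on the geodesic $\gamma$ from $\X$ to $\Y$, using that the whole geodesic stays in the (geodesically convex) ball.
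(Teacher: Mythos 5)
Your proposal is correct and follows essentially the same route as the paper: both start from the Riemannian Hessian formula \eqref{hess_eqn_fun} specialized to $\C=\D=\I$, reduce $L$-smoothness to the scalar condition $\lambda_i(\X)+\lambda_i(\X)^{-1}\leq L$ for all eigenvalues (the paper writes this equivalently as $\lambda_i^2(\X)+1-L\lambda_i(\X)\leq 0$), and verify it on the ball $\left\{\X : \|\log\X\|_F\leq R\right\}$ with the identical constant $L=e^{R}+e^{-R}$. Your substitution $\W=\X^{-1/2}\V\X^{-1/2}$, the explicit blow-up family $\X=t\I$ for the negative claim, and your remark about integrating the Hessian bound along geodesics within the geodesically convex ball are presentational refinements of the paper's argument (which leaves that last step implicit), not a different method.
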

\begin{proof}
For   $f(\X)=\tr{\X^{-1}+\X}$, we have the following relation from \eqref{hess_eqn_fun}
	\begin{align}
		\left\langle H^Rf(\X)[\V],\V\right\rangle_{\X}&=\tr{\V\X^{-1}\V}+\tr{\V\X^{-1}\V\X^{-2}}
	\end{align}
For $L$-smoothness of the gradient vector field, following inequality must be satisfied 
\begin{align}
	\left\langle H\;f(\X)[\V],\V\right\rangle_{\X}=\tr{\V\X^{-1}\V}+\tr{\V\X^{-1}\V\X^{-2}}&\leq L\|\V\|_{\X}^2\\
	\Leftrightarrow \tr{\V\X^{-1}\V\left(\I+\X^{-2}-L\X^{-1}\right)}&\leq 0 \label{ls_condition_original}
\end{align}
which is true if and only if
\begin{align}
	\I+\X^{-2}-L\X^{-1}&\preceq 0\\
	\Leftrightarrow\lambda_i^2(\X)+1-L\lambda_i(\X)&\leq 0; \;\;    1\leq i\leq n \label{ls_condition_derived_2} \\
	\Leftrightarrow \frac{L-\sqrt{L^2-4}}{2}\leq \lambda_i(\X)&\leq  \frac{L+\sqrt{L^2-4}}{2} ; \;\;    1\leq i\leq n  \label{ls_condition_derived_3}
\end{align}
which may not necessarily hold in general. The condition \eqref{ls_condition_derived_2} is not true for $\X$ that does not satisfy \eqref{ls_condition_derived_3}. Therefore the gradient vector field of the  function $f(\X)=\tr{\X^{-1}+\X}$ is not Lipschitz smooth.

Next, let us consider the case when $\X$ lies in a compact set. Without loss of generality, we can assume that the set is contained within a norm ball of radius $R$ around $\X=\I$ given by 
\begin{align}
	\cB=\left\{\X:d(\X,\I)\leq R\right\}
\end{align}
where,
\begin{align}
	d(\X,\I)^2&=\|\log(\I^{-1/2}\X\I^{-1/2})\|_F^2\\
	&=\|\log(\X)\|_F^2 =\sum_{i=1}^{n}(\log(\lambda_i(\X)))^2
\end{align}
which implies that
\begin{align}
	-R&\leq \log(\lambda_i(\X))\leq R\Leftrightarrow e^{-R}\leq \lambda_i(\X)\leq  e^{R}. \label{norm_ball_constraint}
\end{align}
We note that $L=e^{-R}+e^{R}$ satisfies \eqref{ls_condition_derived_3} and \eqref{norm_ball_constraint}. Therefore, over a norm ball of radius $R$ around the optimum point $\X=\I$, the function $f(\X)=\tr{\X^{-1}+\X}$ has $(e^{-R}+e^{R})$-Lipschitz smooth gradient vector field.
\end{proof}
\subsection{Effect of geodesically linear function on the condition-number around the optimum point}\label{apndx_codn_number}
First consider the function
\begin{align}
	f_2(\X)&=\tr{\C\X^{-1}+\X}.
\end{align}
For $\mu$-strongly g-convex function we have that
\begin{align}
	&\left\langle Hf_2(\X)[\V],\V\right\rangle_{\X}=\tr{\V\X^{-1}\V}+\tr{\V\X^{-1}\V\X^{-1}\C\X^{-1}}\geq  \mu\|\V\|_{\X}^2\\
	\Leftrightarrow &\tr{(\V\X^{-1}\V(\I+\X^{-1}\C\X^{-1}-\mu\X^{-1}))}\geq 0\\
	\Leftrightarrow  &\I+\X^{-1}\C\X^{-1}-\mu\X^{-1}\succcurlyeq  0. \label{strong_conx_f_2}
\end{align}
At the optimum point $\X^{\star}=\C^{1/2}$ the strong convexity condition \eqref{strong_conx_f_2} simplifies to
\begin{align}
	\I+\I-\mu\C^{-1/2}&\succcurlyeq 0\Rightarrow  \mu =2\sqrt{\lambda_{\min}(\C)}.
\end{align}
Similarly for  $L$-smoothness of $f_2$, following condition must be satisfied
\begin{align}
	&\left\langle H^Rf_2(\X)[\V],\V\right\rangle_{\X}=\tr{\V\X^{-1}\V}+\tr{\V\X^{-1}\V\X^{-1}\C\X^{-1}}\leq  L\|\V\|_{\X}^2\\
	\Leftrightarrow &\tr{(\V\X^{-1}\V(\I+\X^{-1}\C\X^{-1}-L\X^{-1}))}\leq 0\\
	\Leftrightarrow  &\I+\X^{-1}\C\X^{-1}-L\X^{-1}\preccurlyeq  0. \label{lips_smooth_f_2}
\end{align}
At the optimum point $\X^{\star}=\C^{1/2}$ the $L$-Lipschitz smooth condition \eqref{lips_smooth_f_2} simplifies to
\begin{align}
	\I+\I-L\C^{-1/2}&\preccurlyeq 0 \Rightarrow  L =2\sqrt{\lambda_{\max}(\C)}.
\end{align}
Therefore, the condition number of the function $f_2$ at the optimum point $\X^{\star}=\C^{1/2}$ is 
\begin{align}
	\kappa_2=\frac{2\sqrt{\lambda_{\max}(\C)}}{2\sqrt{\lambda_{\min}(\C)}}=\sqrt{\frac{\lambda_{\max}(\C)}{\lambda_{\min}(\C)}}.
\end{align}
Now let us consider the function 
\begin{align}
	f_1(\X)&=\tr{\C\X^{-1}+\X}-\log\det(\X)\\
	\grd^Rf_1(\X)&=\X^2-\X-\C.
\end{align}
The minimizer of $f_1$ satisfies 
\begin{align}
	\X_{\star}^2-\X_{\star}-\C&=0\Leftrightarrow\I-\X_{\star}^{-1}=\X_{\star}^{-1}\C\X_{\star}^{-1}.\label{opt_f_1}
\end{align}
If we decompose $\C=\U\Sigma_{\C}\U^{\T}$, then 
\begin{align}
	\X_{\star}&=\U\Sigma_{\X_{\star}}\U^{\T}\\
	[\Sigma_{\X_{\star}}]_{ii}&=\frac{1+\sqrt{1+4[\Sigma_{\C}]_{ii}}}{2}.
\end{align}
For $f_1$ to be $\mu$-strongly g-convex, we require that
\begin{align}
	&\I+\X^{-1}\C\X^{-1}-\mu\X^{-1}\succcurlyeq  0 \label{strong_conx_f_1}
\end{align}
We observe that at the optimum point $\X_{\star}$ satisfying \eqref{opt_f_1}, the strong convexity condition \eqref{strong_conx_f_1} simplifies to
\begin{align}
	\I+\I-\X_{\star}^{-1}-\mu\X_{\star}^{-1}&\succcurlyeq 0 \Leftrightarrow  \X_{\star}  \succcurlyeq\frac{\mu+1}{2}\I\\
	\mu&=\sqrt{1+4\lambda_{\min}(\C)}.
\end{align}
In the same way, the $L$-smoothness condition is given by
\begin{align}
	&\I+\X^{-1}\C\X^{-1}-L\X^{-1}\preccurlyeq  0 \label{lips_smooth_f_1}
\end{align}
At the optimum point, \eqref{lips_smooth_f_1} simplifies to
\begin{align}
	\I+\I-\X_{\star}^{-1}-L\X_{\star}^{-1}&\preccurlyeq  0 \Leftrightarrow \frac{L+1}{2}\I  \succcurlyeq \X_{\star}\\
	L&=\sqrt{1+4\lambda_{\max}(\C)}.
\end{align}
Therefore, the condition number at the optimum point $\X_{\star}$ is 
\begin{align}
	\kappa_1=\frac{\sqrt{1+4\lambda_{\max}(\C)}}{\sqrt{1+4\lambda_{\min}(\C)}}=\sqrt{\frac{1+4\lambda_{\max}(\C)}{1+4\lambda_{\min}(\C)}}.
\end{align}
If the matrix $\C$ has the minimum eigenvalue $\lambda_{\min}(\C) \ll1$, then adding $-\log\det(\X)$ to the function $f_2(\X)$ improves the condition number from $\kappa_2=\sqrt{\frac{\lambda_{\max}(\C)}{\lambda_{\min}(\C)}}$ to $\kappa_1\approx\sqrt{1+4\lambda_{\max}(\C)}$ at $\X^\star$, though the overall condition number remains the same. 

\section{Step-size selection for function class} \label{apndx_step_size_fun_cls}
The function $D\left(f\right)_{\X}(\V)$ denotes the directional derivative of function $f(\X)$ in the direction of $\V$ at point $\X$ in the Euclidean geometry.

The directional derivatives of component functions $g_{1,p}, g_{2,q}, g_3, g_{4,r}, g_{5,s}$ and $g_{6,m}$ are provided as follows:
\begin{align}
	D\left(g_{1,p}\right)_{\X}(\V)&= -\tr{\C_p\X^{-1}\V\X^{-1}}=-\tr{\X^{-1}\C_p\X^{-1}\V}
\\
	D\left(g_{2,q}\right)_{\X}(\V)&= \tr{\D_q \V}
\\
	D\left(g_3\right)_{\X}(\V)&= \tr{\X^{-1}\V}
\\
	D\left(g_{4,r}\right)_{\X}(\V)&=\tr{\A_r\V\H_r\X}+\tr{\A_r\X\H_r\V}= 2\tr{\A_r\X\H_r\V}
\\
	D\left(g_{5,s}\right)_{\X}(\V)
	&=-2\tr{\X^{-1}\F_s\X^{-1}\G_s\X^{-1}\V} 
\\
	D\left(g_{6,m}\right)_{\X}(\V)
	&= \tr{\Q_m\X^{-1}\P_m\V} - \tr{\X^{-1}\P_m\X\Q_m\X^{-1}\V}
\end{align}

Recall that, Riemannian hessian $H^Rf(\X)$ is related to Euclidean Hessian $ Hf(\X)$ through following relation
\begin{align}
	H^Rf(\X)[\V]&=\X Hf(\X)[\V]\X+\frac{1}{2}\big[\V\grd f(\X)\X+\X\grd f(\X)\V\big]
\end{align}
therefore,
\begin{align}
	\left\langle H^Rf(\X)[\V],\V\right\rangle_{\X}&= \tr{Hf(\X)[\V]\V}+\tr{\V\grd \;f(\X)\V\X^{-1}}
\end{align}
The optimal $\lambda_{\V}$ which minimizes the second order approximation \eqref{second_ordr_approx} for geodesically convex function  requires the quantities $\tr{\grd \; f(\X)\V}$, $\tr{Hf(\X)[\V]\V}$ and $\tr{\V\grd \;f(\X)\V\X^{-1}}$.

Let's define the following variables to simplify the equations:
\begin{align}
	h_{1,p}(\X)&=\frac{d g}{d g_{1,p}}(\X);\quad 1\leq p\leq P \label{h1p}\\
	h_{2,q}(\X)&=\frac{d g}{d g_{2,q}}(\X); \quad 1\leq q \leq Q\\
	h_3(\X)&=\frac{d g}{dg_3}(\X)\\
	h_{4,r}(\X)&=\frac{d g}{d g_{4,r}}(\X); \quad 1\leq r\leq R\\
	h_{5,s}(\X)&=\frac{d g}{d g_{5,s}}(\X); \quad 1\leq s\leq S\\
	h_{6,m}(\X)&=\frac{d g}{d g_{6,m}}(\X); \quad 1\leq m \leq M\label{h6m}
\end{align}
\begin{align}
	\F_{1,p}(\X)&=h_{1,p}(\X)\X^{-1}\C_p\X^{-1} \label{f1,p}\\
	\F_{2,q}(\X)&=h_{2,q}(\X)\D_q \\
	\F_3(\X)& =	h_3(\X)\X^{-1} \\
	\F_{4,r}(\X)& =h_{4,r}(\X)\left[\A_r\X\H_r+\H_r\X\A_r\right] \\
	\F_{5,s}(\X)& =	h_{5,s}(\X)\Big[\X^{-1}\F_s\X^{-1}\G_s\X^{-1}+\X^{-1}\G_s\X^{-1}\F_s\X^{-1}\Big] \\
	\F_{6,m}(\X)&=h_{6,m}(\X)\left[\Q_m\X^{-1}\P_m + \P_m\X^{-1}\Q_m-\X^{-1}\P_m\X\Q_m\X^{-1} -\X^{-1}\Q_m\X\P_m\X^{-1} \right] \label{f6,m}
\end{align}

The functions $h_{1,p}(\X)$, $h_{2,q}(\X)$, $h_3(\X) $, $h_{4,r}(\X)$, $h_{5,s}(\X)$, $h_{6,m}(\X)$ are the functions of  $g_{1,p}(\X)$, $g_{2,q}(\X)$, $g_3(\X)$, $g_{4,r}(\X)$, $g_{5,s}(\X)$, $g_{6,m}(\X)$. By the definition of total derivation, directional derivatives of variables defined in \eqref{h1p}-\eqref{h6m} are 
\begin{align}
	D\left(h_{1,p}\right)_{\X}(\V)&=\sum_{p^{'}=1}^{P}\left[\frac{d h_{1,p}}{d g_{1,p^{'}}}(\X)\right]\left[D\left(g_{1,p^{'}}\right)_{\X}(\V)\right]+ \sum_{q=1}^{Q} \left[\frac{d h_{1,p}}{d g_{2,q}}(\X)\right] \left[D\left(g_{2,q}\right)_{\X}(\V)\right] \nonumber\\
	&  + \left[\frac{d h_{1,p}}{d g_{3}}(\X)\right]\left[D\left(g_{3}\right)_{\X}(\V)\right]+ \sum_{r=1}^{R}\left[\frac{d h_{1,p}}{d g_{4,r}}(\X)\right]\left[D\left(g_{4,r}\right)_{\X}(\V)\right] \nonumber\\
	& + \sum_{s=1}^{S}\left[\frac{d h_{1,p}}{d g_{5,s}}(\X)\right]\left[D\left(g_{5,s}\right)_{\X}(\V)\right] + \sum_{m=1}^{M}\left[\frac{d h_{1,p}}{d g_{6,m}}(\X)\right]\left[D\left(g_{6,m}\right)_{\X}(\V)\right]
	\\
	D\left(h_{2,q}\right)_{\X}(\V)&=\sum_{p=1}^{P}\left[\frac{d h_{2,q}}{d g_{1,p}}(\X)\right]\left[D\left(g_{1,p}\right)_{\X}(\V)\right]+ \sum_{q^{'}=1}^{Q} \left[\frac{d h_{2,q}}{d g_{2,q^{'}}}(\X)\right] \left[D\left(g_{2,q^{'}}\right)_{\X}(\V)\right] \nonumber\\
	&  + \left[\frac{d h_{2,q}}{d g_{3}}(\X)\right]\left[D\left(g_{3}\right)_{\X}(\V)\right]+ \sum_{r=1}^{R}\left[\frac{d h_{2,q}}{d g_{4,r}}(\X)\right]\left[D\left(g_{4,r}\right)_{\X}(\V)\right] \nonumber\\
	& + \sum_{s=1}^{S}\left[\frac{d h_{2,q}}{d g_{5,s}}(\X)\right]\left[D\left(g_{5,s}\right)_{\X}(\V)\right] + \sum_{m=1}^{M}\left[\frac{d h_{2,q}}{d g_{6,m}}(\X)\right]\left[D\left(g_{6,m}\right)_{\X}(\V)\right]
	\\
	D\left(h_3\right)_{\X}(\V)&=\sum_{p=1}^{P}\left[\frac{d h_3}{d g_{1,p}}(\X)\right]\left[D\left(g_{1,p}\right)_{\X}(\V)\right]+ \sum_{q=1}^{Q} \left[\frac{d h_3}{d g_{2,q}}(\X)\right] \left[D\left(g_{2,q}\right)_{\X}(\V)\right] \nonumber\\
	&  + \left[\frac{d h_3}{d g_{3}}(\X)\right]\left[D\left(g_{3}\right)_{\X}(\V)\right]+ \sum_{r=1}^{R}\left[\frac{d h_3}{d g_{4,r}}(\X)\right]\left[D\left(g_{4,r}\right)_{\X}(\V)\right] \nonumber\\
	& + \sum_{s=1}^{S}\left[\frac{d h_3}{d g_{5,s}}(\X)\right]\left[D\left(g_{5,s}\right)_{\X}(\V)\right] + \sum_{m=1}^{M}\left[\frac{d h_3}{d g_{6,m}}(\X)\right]\left[D\left(g_{6,m}\right)_{\X}(\V)\right]
	\\
	D\left(h_{4,r}\right)_{\X}(\V)&=\sum_{p=1}^{P}\left[\frac{d h_{4,r}}{d g_{1,p}}(\X)\right]\left[D\left(g_{1,p}\right)_{\X}(\V)\right]+ \sum_{q=1}^{Q} \left[\frac{d h_{4,r}}{d g_{2,q}}(\X)\right] \left[D\left(g_{2,q}\right)_{\X}(\V)\right] \nonumber\\
	&  + \left[\frac{d h_{4,r}}{d g_{3}}(\X)\right]\left[D\left(g_{3}\right)_{\X}(\V)\right]+ \sum_{r^{'}=1}^{R}\left[\frac{d h_{4,r}}{d g_{4,r^{'}}}(\X)\right]\left[D\left(g_{4,r^{'}}\right)_{\X}(\V)\right] \nonumber\\
	& + \sum_{s=1}^{S}\left[\frac{d h_{4,r}}{d g_{5,s}}(\X)\right]\left[D\left(g_{5,s}\right)_{\X}(\V)\right] + \sum_{m=1}^{M}\left[\frac{d h_{4,r}}{d g_{6,m}}(\X)\right]\left[D\left(g_{6,m}\right)_{\X}(\V)\right]
	\\
	D\left(h_{5,s}\right)_{\X}(\V)&=\sum_{p=1}^{P}\left[\frac{d h_{5,s}}{d g_{1,p}}(\X)\right]\left[D\left(g_{1,p}\right)_{\X}(\V)\right]+ \sum_{q=1}^{Q} \left[\frac{d h_{5,s}}{d g_{2,q}}(\X)\right] \left[D\left(g_{2,q}\right)_{\X}(\V)\right] \nonumber\\
	&  + \left[\frac{d h_{5,s}}{d g_{3}}(\X)\right]\left[D\left(g_{3}\right)_{\X}(\V)\right]+ \sum_{r=1}^{R}\left[\frac{d h_{5,s}}{d g_{4,r}}(\X)\right]\left[D\left(g_{4,r}\right)_{\X}(\V)\right] \nonumber\\
	& + \sum_{s^{'}=1}^{S}\left[\frac{d h_{5,s}}{d g_{5,s^{'}}}(\X)\right]\left[D\left(g_{5,s^{'}}\right)_{\X}(\V)\right] + \sum_{m=1}^{M}\left[\frac{d h_{5,s}}{d g_{6,m}}(\X)\right]\left[D\left(g_{6,m}\right)_{\X}(\V)\right]
	\\
	D\left(h_{6,m}\right)_{\X}(\V)&=\sum_{p=1}^{P}\left[\frac{d h_{6,m}}{d g_{1,p}}(\X)\right]\left[D\left(g_{1,p}\right)_{\X}(\V)\right]+ \sum_{q=1}^{Q} \left[\frac{d h_{6,m}}{d g_{2,q}}(\X)\right] \left[D\left(g_{2,q}\right)_{\X}(\V)\right] \nonumber\\
	&  + \left[\frac{d h_{6,m}}{d g_{3}}(\X)\right]\left[D\left(g_{3}\right)_{\X}(\V)\right]+ \sum_{r=1}^{R}\left[\frac{d h_{6,m}}{d g_{4,r}}(\X)\right]\left[D\left(g_{4,r}\right)_{\X}(\V)\right] \nonumber\\
	& + \sum_{s=1}^{S}\left[\frac{d h_{6,m}}{d g_{5,s}}(\X)\right]\left[D\left(g_{5,s}\right)_{\X}(\V)\right] + \sum_{m^{'}=1}^{M}\left[\frac{d h_{6,m}}{d g_{6,m^{'}}}(\X)\right]\left[D\left(g_{6,m^{'}}\right)_{\X}(\V)\right]
\end{align}
The gradient vector field $\F(\X)$ can be written in terms of $h_{1,p}(\X)$, $h_{2,q}(\X)$, $h_3(\X)$, $h_{4,r}(\X)$, $h_{5,s}(\X)$ and  $h_{6,m}(\X)$ as follows:
\begin{align}
	\F(\X)&=-\sum_{p=1}^{P}h_{1,p}(\X)\X^{-1}\C_p\X^{-1}+ \sum_{q=1}^{Q}h_{2,q}(\X)\D_q+\sum_{r=1}^{R}h_{4,r}(\X)\left(\A_r\X\H_r+\H_r\X\A_r\right)\nonumber\\
	&-\sum_{s=1}^{S}h_{5,s}(\X)\left(\X^{-1}\F_s\X^{-1}\G_s\X^{-1}+\X^{-1}\G_s\X^{-1}\F_s\X^{-1}\right) +h_3(\X)\X^{-1}\nonumber\\
	&+\frac{1}{2}\sum_{m=1}^{M}h_{6,m}(\X)\left(\Q_m\X^{-1}\P_m + \P_m\X^{-1}\Q_m-\X^{-1}\P_m\X\Q_m\X^{-1} -\X^{-1}\Q_m\X\P_m\X^{-1} \right)
\end{align} 
using relations \eqref{h1p}-\eqref{f6,m} we have that
\begin{align}
	\F(\X)&= -\sum_{p=1}^{P}\F_{1,p}(\X)+ \sum_{q=1}^{Q}\F_{2,q}(\X)+\F_3(\X)+\sum_{r=1}^{R}\F_{4,r}(\X)-\sum_{s=1}^{S}\F_{5,s}(\X)+\frac{1}{2}\sum_{m=1}^{M}\F_{6,m}(\X)
\end{align}

Let us denote covariant  derivative of vector field $\F(\X)$ with respect to vector field $\V$ in Euclidean geometry as $\nabla_{\V}^E\F(\X)$.  We will use Leibniz rule for differentiation to calculate rate of change along a particular direction $\V$. Therefore, the Euclidean Hessian can be written as
\begin{align}
	Hf(\X)[\V]&=-\sum_{p=1}^{P}	\nabla_{\V}^E\F_{1,p}(\X)+ \sum_{q=1}^{Q}\nabla_{\V}^E\F_{2,q}(\X)+\nabla_{\V}^E\F_3(\X)+\sum_{r=1}^{R}\nabla_{\V}^E\F_{4,r}(\X)\nonumber\\
	&-\sum_{s=1}^{S}\nabla_{\V}^E\F_{5,s}(\X)+ \frac{1}{2}\sum_{m=1}^{M}\nabla_{\V}^E\F_{6,m}(\X)\label{hess_rel}
\end{align}
where,

\begin{align}
	\nabla_{\V}^E\F_{1,p}(\X)&=\Big[D\left(h_{1,p}\right)_{\X}(\V)\Big] \X^{-1}\C_p\X^{-1} - h_{1,p}(\X)\left[\begin{array}{ll}
		\X^{-1}\V\X^{-1}\C_p\X^{-1} \\+\X^{-1}\C_p\X^{-1}\V\X^{-1}
	\end{array}\right]
	\\
	\nabla_{\V}^E\F_{2,q}(\X)&= \Big[D\left(h_{2,q}\right)_{\X}(\V)\Big] \D_q
	\\
	\nabla_{\V}^E\F_3(\X)&= \Big[D\left(h_3\right)_{\X}(\V)\Big]\X^{-1}- h_3(\X)\Big(\X^{-1}\V\X^{-1}\Big)
	\\
	\nabla_{\V}^E\F_{4,r}(\X)&= \Big[D\left(h_{4,r}\right)_{\X}(\V)\Big]\left(\A_r\X\H_r+\H_r\X\A_r\right) + h_{4,r}(\X)\Big(\A_r\V\H_r+\H_r\V\A_r\Big)
	\\
	\nabla_{\V}^E\F_{5,s}(\X)&= \Big[D\left(h_{5,s}\right)_{\X}(\V)\Big]\Big[\X^{-1}\F_s\X^{-1}\G_s\X^{-1}+\X^{-1}\G_s\X^{-1}\F_s\X^{-1}\Big]\nonumber\\
	&- h_{5,s}(\X) \left[\begin{array}{ll}
		\X^{-1}\V\X^{-1}\F_s\X^{-1}\G_s\X^{-1}+\X^{-1}\F_s\X^{-1}\V\X^{-1}\G_s\X^{-1}\\
		+\X^{-1}\F_s\X^{-1}\G_s\X^{-1}\V\X^{-1}+\X^{-1}\V\X^{-1}\G_s\X^{-1}\F_s\X^{-1}\\
		+\X^{-1}\G_s\X^{-1}\V\X^{-1}\F_s\X^{-1} +\X^{-1}\G_s\X^{-1}\F_s\X^{-1}\V\X^{-1}
	\end{array}\right]
	\\
	\nabla_{\V}^E\F_{6,m}(\X) &=\Big[D\left(h_{6,m}\right)_{\X}(\V)\Big]\left[\begin{array}{ll}\Q_m\X^{-1}\P_m + \P_m\X^{-1}\Q_m-\X^{-1}\P_m\X\Q_m\X^{-1} \\ -\X^{-1}\Q_m\X\P_m\X^{-1} 
\end{array}\right]\nonumber\\
	&+h_{6,m}(\X)\left[\begin{array}{ll}
		-\Q_m\X^{-1}\V\X^{-1}\P_m- \P_m\X^{-1}\V\X^{-1}\Q_m\\
		+\X^{-1}\V\X^{-1}\P_m\X\Q_m\X^{-1}-\X^{-1}\P_m \V\Q_m\X^{-1}\\+ \X^{-1}\P_m\X\Q_m\X^{-1}\V\X^{-1}+\X^{-1}\V\X^{-1}\Q_m\X\P_m\X^{-1}\\
		-\X^{-1}\Q_m \V\P_m\X^{-1}+\X^{-1}\Q_m\X\P_m\X^{-1}\V\X^{-1}
	\end{array}\right]
\end{align}

Let us first define few intermediate variables
\begin{align}
	T_{11p}&=\tr{\X^{-1}\C_p\X^{-1}\V};&
	T_{12p}&=\tr{\X^{-1}\C_p\X^{-1}\V\X^{-1}\V};
	\\
	T_{21q}&=\tr{\D_q\V};&
	T_{22q}&=\tr{\D_q\V\X^{-1}\V };
	\\
	T_{31}&=\tr{\X^{-1}\V};&
	T_{32}&=\tr{\X^{-1}\V\X^{-1}\V};
	\\
	T_{41r}&=\tr{\A_r\X\H_r\V};&
	T_{42r}&=\tr{\A_r\V\H_r\V};\\
	T_{43r}&=\tr{\A_r\X\H_r\V\X^{-1}\V};
	\\
	T_{51s}&=\tr{\X^{-1}\F_s\X^{-1}\G_s\X^{-1}\V};&
	T_{52s}&=\tr{\X^{-1}\F_s\X^{-1}\G_s\X^{-1}\V\X^{-1}\V};\\
	T_{53s}&=\tr{\X^{-1}\F_s\X^{-1}\V\X^{-1}\G_s\X^{-1}\V};
	\\
	T_{61m}&=\tr{\Q_m\X^{-1}\P_m\V};&
	T_{62m}&=\tr{\P_m\X\Q_m\X^{-1}\V\X^{-1}};\\
	T_{63m}&=\tr{\P_m\V\Q_m\X^{-1}\V\X^{-1}};&
	T_{64m}&=\tr{\X^{-1}\P_m\X\Q_m\X^{-1}\V\X^{-1}\V};\\
	T_{65m}&=\tr{\Q_m\X^{-1}\P_m\V\X^{-1}\V};
\end{align}
which implies
\begin{align}
	D\left(g_{1,p}\right)_{\X}(\V)&= -\tr{\X^{-1}\C_p\X^{-1}\V} = -T_{11p}\\
	D\left(g_{2,q}\right)_{\X}(\V)&= \tr{\D_q \V} = T_{21q}\\
	D\left(g_3\right)_{\X}(\V)&= \tr{\X^{-1}\V} = T_{31} \\
	D\left(g_{4,r}\right)_{\X}(\V)&= 2\tr{\A_r\X\H_r\V} = 2T_{41r}\\
	D\left(g_{5,s}\right)_{\X}(\V)&=-2\tr{\X^{-1}\F_s\X^{-1}\G_s\X^{-1}\V}=-2T_{51s}\\
	D\left(g_{6,m}\right)_{\X}(\V)&= \tr{\Q_m\X^{-1}\P_m\V} - \tr{\X^{-1}\P_m\X\Q_m\X^{-1}\V}=T_{61m}-T_{62m}
\end{align}

From \eqref{hess_rel} we have that
\begin{align}
	\Rar	\tr{Hf(\X)[\V]\V}&=-\sum_{p=1}^{P}\tr{\nabla_{\V}^E\F_{1,p}(\X)\V}+ \sum_{q=1}^{Q}\tr{\nabla_{\V}^E\F_{2,q}(\X)\V}+\tr{\nabla_{\V}^E\F_3(\X)\V}\nonumber\\
	&+\sum_{r=1}^{R}\tr{\nabla_{\V}^E\F_{4,r}(\X)\V}-\sum_{s=1}^{S}\tr{\nabla_{\V}^E\F_{5,s}(\X)\V} \nonumber\\
	&+ \frac{1}{2}\sum_{m=1}^{M}\tr{\nabla_{\V}^E\F_{6,m}(\X)\V}\label{hess_relation}
\end{align}
Each component of equations \eqref{hess_relation} can be written as
\begin{align}
	\tr{\nabla_{\V}^E\F_{1,p}(\X)\V}
	&=\Big[D\left(h_{1,p}\right)_{\X}(\V)\Big]\tr{\X^{-1}\C_p\X^{-1}\V}-2 h_{1,p}(\X)\tr{\X^{-1}\C_p\X^{-1}\V\X^{-1}\V}\label{hess_1}
\\
	\tr{\nabla_{\V}^E\F_{2,q}(\X)\V}&=\Big[D\left(h_{2,q}\right)_{\X}(\V)\Big]\tr{\D_q\V}
\\
	\tr{\nabla_{\V}^E\F_3(\X)\V}&= \Big[D\left(h_3\right)_{\X}(\V)\Big]\tr{\X^{-1}\V}- h_3(\X)\tr{\X^{-1}\V\X^{-1}\V}
\\
	\tr{\nabla_{\V}^E\F_{4,r}(\X)\V}
	&= 2\left[D\left(h_{4,r}\right)_{\X}(\V)\right]\tr{\A_r\X\H_r\V}+ 2h_{4,r}(\X)\tr{\A_r\V\H_r\V}
\\
	\tr{\nabla_{\V}^E\F_{5,s}(\X)\V} 
	&= 2\Big[D\left(h_{5,s}\right)_{\X}(\V)\Big]\tr{\X^{-1}\F_s\X^{-1}\G_s\X^{-1}\V}\nonumber\\
	&- h_{5,s}(\X) \left[\begin{array}{ll}
	4\tr{\X^{-1}\F_s\X^{-1}\G_s\X^{-1}\V\X^{-1}\V}\\+2\tr{\X^{-1}\F_s\X^{-1}\V\X^{-1}\G_s\X^{-1}\V}
	\end{array}\right] 
\\
	\tr{\nabla_{\V}^E\F_{6,m}(\X)\V} 
	&=2\Big[D\left(h_{6,m}\right)_{\X}(\V)\Big]\left[
	\tr{\Q_m\X^{-1}\P_m\V} -\tr{\X^{-1}\P_m\X\Q_m\X^{-1}\V}\right]\nonumber\\
	&+4h_{6,m}(\X)\left[\begin{array}{ll}
	-\tr{\P_m\V\Q_m\X^{-1}\V\X^{-1}}\\
	+\tr{\X^{-1}\P_m\X\Q_m\X^{-1}\V\X^{-1}\V}
	\end{array}
	\right]\label{hess_6}
\end{align}
combining \eqref{hess_1}-\eqref{hess_6}, we have that
\begin{align}
	&\tr{Hf(\X)[\V]\V}	\nonumber\\
	&=		-\sum_{p=1}^{P}\tr{\nabla_{\V}^E\F_{1,p}(\X)\V}+ \sum_{q=1}^{Q}\tr{\nabla_{\V}^E\F_{2,q}(\X)\V}+\tr{\nabla_{\V}^E\F_3(\X)\V}\nonumber\\
	&+\sum_{r=1}^{R}\tr{\nabla_{\V}^E\F_{4,r}(\X)\V}-\sum_{s=1}^{S}\tr{\nabla_{\V}^E\F_{5,s}(\X)\V} + \frac{1}{2}\sum_{m=1}^{M}\tr{\nabla_{\V}^E\F_{6,m}(\X)\V}
	\nonumber\\
	\nonumber\\
	&=\left[\begin{array}{ll}
		-\sum_{p=1}^{P}\left[\Big[D\left(h_{1,p}\right)_{\X}(\V)\Big]\tr{\X^{-1}\C_p\X^{-1}\V}-2 h_{1,p}(\X)\tr{\X^{-1}\C_p\X^{-1}\V\X^{-1}\V}\right]\\
		+\sum_{q=1}^{Q}\left[\Big[D\left(h_{2,q}\right)_{\X}(\V)\Big]\tr{\D_q\V}\right]\\
		+\left[\Big[D\left(h_3\right)_{\X}(\V)\Big]\tr{\X^{-1}\V}- h_3(\X)\tr{\X^{-1}\V\X^{-1}\V}\right]\\
		+\sum_{r=1}^{R}\left[2\left[D\left(h_{4,r}\right)_{\X}(\V)\right]\tr{\A_r\X\H_r\V}+ 2h_{4,r}(\X)\tr{\A_r\V\H_r\V}\right]\\
		-\sum_{s=1}^{S}\left[\begin{array}{ll}
			2\Big[D\left(h_{5,s}\right)_{\X}(\V)\Big]\tr{\X^{-1}\F_s\X^{-1}\G_s\X^{-1}\V}\\
			- h_{5,s}(\X) \left[\begin{array}{ll}
				4\tr{\X^{-1}\F_s\X^{-1}\G_s\X^{-1}\V\X^{-1}\V}\\
				+2\tr{\X^{-1}\F_s\X^{-1}\V\X^{-1}\G_s\X^{-1}\V}
			\end{array}\right]
		\end{array}\right] \\
		+ \sum_{m=1}^{M}\left[\begin{array}{ll}
			\Big[D\left(h_{6,m}\right)_{\X}(\V)\Big]\left[
			\tr{\Q_m\X^{-1}\P_m\V} -\tr{\P_m\X\Q_m\X^{-1}\V\X^{-1}}\right]\\
			+2h_{6,m}(\X)\left[\begin{array}{ll} 
				-\tr{\P_m\V\Q_m\X^{-1}\V\X^{-1}}\\
				+\tr{\X^{-1}\P_m\X\Q_m\X^{-1}\V\X^{-1}\V}
			\end{array}
			\right]
		\end{array}\right]
	\end{array}\right]
	\nonumber\\
	\nonumber\\
	&=\left[\begin{array}{ll}
		-\sum_{p=1}^{P}\left[\Big[D\left(h_{1,p}\right)_{\X}(\V)\Big]T_{11p}-2 h_{1,p}(\X)T_{12p}\right]
		+\sum_{q=1}^{Q}\left[\Big[D\left(h_{2,q}\right)_{\X}(\V)\Big]T_{21q}\right]\\
		+\left[\Big[D\left(h_3\right)_{\X}(\V)\Big]T_{31}- h_3(\X)T_{32}\right]
		+\sum_{r=1}^{R}\left[2\left[D\left(h_{4,r}\right)_{\X}(\V)\right]T_{41r}+ 2h_{4,r}(\X)T_{42r}\right]\\
		-\sum_{s=1}^{S}\left[
		2\Big[D\left(h_{5,s}\right)_{\X}(\V)\Big]T_{51s}
		- h_{5,s}(\X) \left[4T_{52s}+2T_{53s}\right]
		\right] \\
		+ \sum_{m=1}^{M}\left[
		\Big[D\left(h_{6,m}\right)_{\X}(\V)\Big]\left[
		T_{61m} -T_{62m}\right] +2h_{6,m}(\X)\left[-T_{63m}+T_{64m}\right]
		\right]
	\end{array}\right]
\end{align}
Next we calculate 
\begin{align}
	\tr{\V\grd \;f(\X)\V\X^{-1}}&= -\sum_{p=1}^{P}\tr{\V\F_{1,p}(\X)\V\X^{-1}}+ \sum_{q=1}^{Q}\tr{\V\F_{2,q}(\X)\V\X^{-1}}\nonumber\\
	&+\tr{\V\F_3(\X)\V\X^{-1}}
	+\sum_{r=1}^{R}\tr{\V\F_{4,r}(\X)\V\X^{-1}}\nonumber\\
	&-\sum_{s=1}^{S}\tr{\V\F_{5,s}(\X)\V\X^{-1}}+\sum_{m=1}^{M}\tr{\V\F_{6,m}(\X)\V\X^{-1}} \label{grd_relation}
\end{align}
each of its components can be simplified as 
\begin{align}
	\tr{\V\F_{1,p}(\X)\V\X^{-1}}
	&=  h_{1,p}(\X)\tr{\X^{-1}\C_p\X^{-1}\V\X^{-1}\V} \label{vgrdvxin_1}\\
	\tr{\V\F_{2,q}(\X)\V\X^{-1}}
	&= h_{2,q}(\X)\tr{\D_q\V\X^{-1}\V }\\
	\tr{\V\F_3(\X)\V\X^{-1}}
	&= h_3(\X) \tr{\V \X^{-1}\V\X^{-1}}\\
	\tr{\V\F_{4,r}(\X)\V\X^{-1}}
	&=  2h_{4,r}(\X) \tr{\A_r\X\H_r\V\X^{-1}\V}\\
	\tr{\V\F_{5,s}(\X)\V\X^{-1}}
	&= 2h_{5,s}(\X) \Big[\tr{\X^{-1}\F_s\X^{-1}\G_s\X^{-1}\V\X^{-1}\V}\Big]\\
	\tr{\V\F_{6,m}(\X)\V\X^{-1}}
	&=2h_{6,m}(\X)\left[\begin{array}{ll}
		\tr{\Q_m\X^{-1}\P_m\V\X^{-1}\V}\\-\tr{\X^{-1}\P_m\X\Q_m\X^{-1}\V\X^{-1}\V}
	\end{array} 
	\right]\label{vgrdvxin_6}
\end{align}
combining \eqref{vgrdvxin_1}-\eqref{vgrdvxin_6}, we have that
\begin{align}
	\tr{\V\grd \;f(\X)\V\X^{-1}}&= \left[\begin{array}{ll}
		-\sum_{p=1}^{P}\tr{\V\F_{1,p}(\X)\V\X^{-1}}+ \sum_{q=1}^{Q}\tr{\V\F_{2,q}(\X)\V\X^{-1}}\\+\tr{\V\F_3(\X)\V\X^{-1}}
		+\sum_{r=1}^{R}\tr{\V\F_{4,r}(\X)\V\X^{-1}}\\-\sum_{s=1}^{S}\tr{\V\F_{5,s}(\X)\V\X^{-1}}+\frac{1}{2}\sum_{m=1}^{M}\tr{\V\F_{6,m}(\X)\V\X^{-1}}
	\end{array} \right]
	\nonumber
	\\
	&= \left[\begin{array}{ll}
		-\sum_{p=1}^{P}\left[h_{1,p}(\X)\tr{\X^{-1}\C_p\X^{-1}\V\X^{-1}\V}\right]\\
		+ \sum_{q=1}^{Q}\left[h_{2,q}(\X)\tr{\D_q\V\X^{-1}\V }\right]
		+\left[h_3(\X) \tr{\V \X^{-1}\V\X^{-1}}\right]\\
		+\sum_{r=1}^{R}\left[2h_{4,r}(\X) \tr{\A_r\X\H_r\V\X^{-1}\V}\right]\\
		-\sum_{s=1}^{S}\left[2h_{5,s}(\X) \Big[\tr{\X^{-1}\F_s\X^{-1}\G_s\X^{-1}\V\X^{-1}\V}\Big]\right]\\
		+\sum_{m=1}^{M}h_{6,m}(\X)\left[\begin{array}{ll}
			\tr{\Q_m\X^{-1}\P_m\V\X^{-1}\V}\\
			-\tr{\X^{-1}\P_m\X\Q_m\X^{-1}\V\X^{-1}\V} 
		\end{array}
		\right]
	\end{array}\right] 
	\nonumber
	\\
	&= \left[\begin{array}{ll}
		-\sum_{p=1}^{P}\left[h_{1,p}(\X)T_{12p}\right]	+ \sum_{q=1}^{Q}\left[h_{2,q}(\X)T_{22q}\right] 
		+\left[h_3(\X) T_{32}\right]\\
		+\sum_{r=1}^{R}\left[2h_{4,r}(\X) T_{43r}\right]
		-\sum_{s=1}^{S}\left[2h_{5,s}(\X) T_{52s}\right]\\
		+\sum_{m=1}^{M}h_{6,m}(\X)\left[
		T_{65m}	-T_{64m}	\right]
	\end{array}\right] 
\end{align}
finally, we calculate
\begin{align}
	\tr{\F(\X)\V} &= \tr{\grd \; f(\X)\V}
	\nonumber
	\\
	&=\left[\begin{array}{ll}
		-\sum_{p=1}^{P}h_{1,p}(\X)\tr{\X^{-1}\C_p\X^{-1}\V}+ \sum_{q=1}^{Q}h_{2,q}(\X)\tr{\D_q\V}+h_3(\X)\tr{\X^{-1}\V}\\
		+\sum_{r=1}^{R}2h_{4,r}(\X)\tr{\A_r\X\H_r\V}
		-\sum_{s=1}^{S}2h_{5,s}(\X)\tr{\X^{-1}\F_s\X^{-1}\G_s\X^{-1}\V}\\
		+\sum_{m=1}^{M}h_{6,m}(\X)\left(\begin{array}{ll}
			\tr{\Q_m\X^{-1}\P_m\V} -\tr{\P_m\X\Q_m\X^{-1}\V\X^{-1}} 
		\end{array} \right)
	\end{array}\right]
	\nonumber
	\\
	&=\left[\begin{array}{ll}
		-\sum_{p=1}^{P}h_{1,p}(\X)T_{11p}+ \sum_{q=1}^{Q}h_{2,q}(\X)T_{21q}+h_3(\X)T_{31}
		+\sum_{r=1}^{R}2h_{4,r}(\X)T_{41r}\\
		-\sum_{s=1}^{S}2h_{5,s}(\X)T_{51s}
		+\sum_{m=1}^{M}h_{6,m}(\X)\left[
		T_{61m} -T_{62m} \right]
	\end{array}\right]
\end{align}

All the above calculations can be further simplified when $\V$ is a basis vector as shown below. Let us divide it into two cases:

\textbf{Case $i \neq j$. }
\begin{align}
	\V&=\frac{1}{\sqrt{2}}\B\left(\e_i\e_j^{\T}+\e_j\e_i^{\T}\right)\B^{\T}
\end{align}

\begin{align}
	T_{11p}&=	\tr{\X^{-1}\C_p\X^{-1}\V}
	=\sqrt{2}\Big[\M_{1,p}(\X)\Big]_{ij}
	\\
	T_{12p}&=\tr{\X^{-1}\C_p\X^{-1}\V\X^{-1}\V}
	=\frac{1}{2}\Big(\left[\M_{1,p}(\X)\right]_{ii}+\left[\M_{1,p}(\X)\right]_{jj}\Big)
	\\
	T_{21q}&=	\tr{\D_q\V}
	=\sqrt{2}\Big[\M_{2,q}(\X)\Big]_{ij}
	\\
	T_{22q}&=	\tr{\D_q\V\X^{-1}\V }
	=\frac{1}{2}\Big(\left[\M_{2,q}(\X)\right]_{ii}+\left[\M_{2,q}(\X)\right]_{jj}\Big)
	\\
	T_{31}&=	\tr{\X^{-1}\V}
	=0
	\\
	T_{32}&	 = \tr{\X^{-1}\V\X^{-1}\V}
	=1
	\\
	T_{41r}&=	\tr{\A_r\X\H_r\V}
	=\frac{1}{\sqrt{2}}\Big(\Big[\M_{4,1,r}(\X)\Big]_{j:}\Big[\M_{4,2,r}(\X)\Big]_{:i}+\Big[\M_{4,1,r}(\X)\Big]_{i:}\Big[\M_{4,2,r}(\X)\Big]_{:j}\Big)
	\\
	T_{42r}&=	\tr{\A_r\V\H_r\V} 	\nonumber
	\\
	&
	=\frac{1}{2}\left(\begin{array}{ll}2\Big[\M_{4,1,r}(\X)\Big]_{ij}\Big[\M_{4,2,r}(\X)\Big]_{ij}+ \Big[\M_{4,1,r}(\X)\Big]_{ii}\Big[\M_{4,2,r}(\X)\Big]_{jj}\\
		+\Big[\M_{4,1,r}(\X)\Big]_{jj}\Big[\M_{4,2,r}(\X)\Big]_{ii}
	\end{array}\right)
	\\
	T_{43r}&=	\tr{\A_r\X\H_r\V\X^{-1}\V}
	= \frac{1}{2}\Big(\Big[\M_{4,1,r}(\X)\Big]_{i:}\Big[\M_{4,2,r}(\X)\Big]_{:i} + \Big[\M_{4,1,r}(\X)\Big]_{j:}\Big[\M_{4,2,r}(\X)\Big]_{:j}\Big)
	\\
	T_{51s}&=	\tr{\X^{-1}\F_s\X^{-1}\G_s\X^{-1}\V}	\nonumber
	\\
	&
	=\frac{1}{\sqrt{2}}\Big(\Big[\M_{5,1,s}(\X)\Big]_{i:}\Big[\M_{5,2,s}(\X)\Big]_{:j}+\Big[\M_{5,1,s}(\X)\Big]_{j:}\Big[\M_{5,2,s}(\X)\Big]_{:i}\Big)
	\\
	T_{52s}&=	\tr{\X^{-1}\F_s\X^{-1}\G_s\X^{-1}\V\X^{-1}\V}	\nonumber
	\\
	&
	=\frac{1}{2}\Big(\Big[\M_{5,1,s}(\X)\Big]_{i:}\Big[\M_{5,2,s}(\X)\Big]_{:i}+\Big[\M_{5,1,s}(\X)\Big]_{j:}\Big[\M_{5,2,s}(\X)\Big]_{:j}\Big)
	\\
	T_{53s}&=	\tr{\X^{-1}\F_s\X^{-1}\V\X^{-1}\G_s\X^{-1}\V}	\nonumber
	\\
	&
	=\frac{1}{2}\left(\begin{array}{ll}2\Big[\M_{5,1,s}(\X)\Big]_{ij}\Big[\M_{5,2,s}(\X)\Big]_{ij}+ \Big[\M_{5,1,s}(\X)\Big]_{ii}\Big[\M_{5,2,s}(\X)\Big]_{jj}\\
		+\Big[\M_{5,1,s}(\X)\Big]_{jj}\Big[\M_{5,2,s}(\X)\Big]_{ii}
	\end{array}\right)
	\\
	T_{61m}&=	\tr{\Q_m\X^{-1}\P_m\V}	\nonumber
	\\
	& =\frac{1}{\sqrt{2}}\left(\left[\left[\M_{6,2,m} (\X_t)\right]_{:j}\right]^{\T}\left[\M_{6,1,m}(\X_t)\right]_{:i}+ \left[\left[\M_{6,2,m} (\X_t)\right]_{:i}\right]^{\T}\left[\M_{6,1,m}(\X_t)\right]_{:j}\right)
	\\
	T_{62m}&=	\tr{\P_m\X\Q_m\X^{-1}\V\X^{-1}}	\nonumber
	\\
	&= \frac{1}{\sqrt{2}} \left(\left[\M_{6,1,m}(\X_t)\right]_{j:}\left[\left[\M_{6,2,m} (\X_t)\right]_{i:}\right]^{\T}+\left[\M_{6,1,m}(\X_t)\right]_{i:}\left[\left[\M_{6,2,m} (\X_t)\right]_{j:}\right]^{\T}\right)
	\\
	T_{63m}&=	\tr{\P_m\V\Q_m\X^{-1}\V\X^{-1}}	\nonumber
	\\
	&= \frac{1}{2}\tr{\begin{array}{ll}
				\left[\M_{6,1,m} (\X)\right]_{ji}  \left[\M_{6,2,m} (\X)\right]_{ij}+ 
				\left[\M_{6,1,m} (\X)\right]_{ii}  \left[\M_{6,2,m} (\X)\right]_{jj}\\
				+
				\left[\M_{6,1,m} (\X)\right]_{jj}  \left[\M_{6,2,m} (\X)\right]_{ii}+
				\left[\M_{6,1,m} (\X)\right]_{ij}  \left[\M_{6,2,m} (\X)\right]_{ji}
	\end{array}}  \\
	T_{64m}&=	\tr{\X^{-1}\P_m\X\Q_m\X^{-1}\V\X^{-1}\V}	\nonumber
	\\
	&=\frac{1}{2}\left(\left[\M_{6,1,m}(\X_t)\right]_{i:}\left[\left[\M_{6,2,m} (\X_t)\right]_{i:}\right]^{\T}+\left[\M_{6,1,m}(\X_t)\right]_{j:}\left[\left[\M_{6,2,m} (\X_t)\right]_{j:}\right]^{\T}\right)\\
	T_{65m}&=	\tr{\Q_m\X^{-1}\P_m\V\X^{-1}\V}	\nonumber
	\\
	&=\frac{1}{2}\left(\left[\left[\M_{6,2,m}(\X_t)\right]_{:i}\right]^{\T}\left[\left[\M_{6,1,m} (\X_t)\right]_{:i}\right]+\left[\left[\M_{6,2,m}(\X_t)\right]_{:j}\right]^{\T}\left[\left[\M_{6,1,m} (\X_t)\right]_{:j}\right]\right)
\end{align}

\textbf{Case $i = j$. }
\begin{align}
	\V&=\B\e_i\e_i^{\T}\B^{\T}
\end{align}

\begin{align}
	T_{11p}&=	\tr{\X^{-1}\C_p\X^{-1}\V} 
	=\Big[\M_{1,p}(\X)\Big]_{ii}
	\\
	T_{12p}&=	\tr{\X^{-1}\C_p\X^{-1}\V\X^{-1}\V}
	=\Big[\M_{1,p}(\X)\Big]_{ii}
	\\
	T_{21q}&=	\tr{\D_q\V}
	=\Big[\M_{2,q}(\X)\Big]_{ii}
	\\
	T_{22q}&=	\tr{\D_q\V\X^{-1}\V}
	=\Big[\M_{2,q}(\X)\Big]_{ii}
	\\
	T_{31}&=	\tr{\X^{-1}\V}
	=1
	\\
	T_{32}&=	\tr{\X^{-1}\V\X^{-1}\V}
	=1
	\\
	T_{41r}&=	\tr{\A_r\X\H_r\V}
	=\Big[\M_{4,1,r}(\X)\Big]_{i:}\Big[\M_{4,2,r}(\X)\Big]_{:i}
	\\
	T_{42r}&=	\tr{\A_r\V\H_r\V}
	=\Big[\M_{4,1,r}(\X)\Big]_{ii}\Big[\M_{4,2,r}(\X)\Big]_{ii}
	\\
	T_{43r}&=	\tr{\A_r\X\H_r\V\X^{-1}\V}
	=\Big[\M_{4,1,r}(\X)\Big]_{i:}\Big[\M_{4,2,r}(\X)\Big]_{:i}
	\\
	T_{51s}&=	\tr{\X^{-1}\F_s\X^{-1}\G_s\X^{-1}\V}
	=\Big[\M_{5,1,s}(\X)\Big]_{i:}\Big[\M_{5,2,s}(\X)\Big]_{:i}
	\\
	T_{52s}&=	\tr{\X^{-1}\F_s\X^{-1}\G_s\X^{-1}\V\X^{-1}\V}
	=\Big[\M_{5,1,s}(\X)\Big]_{i:}\Big[\M_{5,2,s}(\X)\Big]_{:i}
	\\
	T_{53s}&=	\tr{\X^{-1}\F_s\X^{-1}\V\X^{-1}\G_s\X^{-1}\V}
	=\Big[\M_{5,1,s}(\X)\Big]_{ii}\Big[\M_{5,2,s}(\X)\Big]_{ii}
	\\
	T_{61m}&=	\tr{\Q_m\X^{-1}\P_m\V}
	=\left[\left[\M_{6,2,m} (\X_t)\right]_{:i}\right]^{\T}\left[\M_{6,1,m}(\X_t)\right]_{:i}
	\\
	T_{62m}&=	\tr{\P_m\X\Q_m\X^{-1}\V\X^{-1}}
	=  \left[\M_{6,1,m}(\X_t)\right]_{i:}\left[\left[\M_{6,2,m} (\X_t)\right]_{i:}\right]^{\T}
	\\
	T_{63m}&=	\tr{\P_m\V\Q_m\X^{-1}\V\X^{-1}}
	= \left[\M_{6,2,m} (\X)\right]_{ii}  \left[\M_{6,1,m} (\X)\right]_{ii}
	\\
	T_{64m}&=	\tr{\X^{-1}\P_m\X\Q_m\X^{-1}\V\X^{-1}\V}
	=\Big[\M_{6,1,m}(\X)\Big]_{i:}\left[\left[\M_{6,2,m} (\X)\right]_{i:}\right]^{\T}
	\\
	T_{65m}&=	\tr{\Q_m\X^{-1}\P_m\V\X^{-1}\V}
	=\left[\Big[\M_{6,2,m}(\X)\Big]_{:i}\right]^{\T}\Big[\M_{6,1,m} (\X)\Big]_{:i}
\end{align}

\section{Matrix geometric mean of  two SPD matrices}\label{MGM_two}
The solution of matrix geometric mean problem $
	\min_{\X \in \Pn}\sum_{i=1}^{2}\left\|\log\left(\X^{-1/2}\mathbf{W}_i\X^{-1/2}\right)\right\|_F^2$ is 
\begin{align}
	\X_{g}^{\star}&=\W_1^{\frac{1}{2}}\left(\W_1^{-\frac{1}{2}}\W_2\W_1^{-\frac{1}{2}}\right)^{\frac{1}{2}} \W_1^{\frac{1}{2}}
\end{align}
also note that 
\begin{align}
	\W_2&=\W_1^{\frac{1}{2}}\left(\W_1^{-\frac{1}{2}}\W_2\W_1^{-\frac{1}{2}}\right) \W_1^{\frac{1}{2}}
\end{align}
The function $f(\X)=\sum_{i=1}^{2}\tr{\W_i\X^{-1}+\W_i^{-1}\X}$ is geodesically strongly convex for $\W_i\succ 0$ and its gradient is given as
	\begin{align}
		\grd f(\X)&=\sum_{i=1}^{2}\left(-\X^{-1}\W_i\X^{-1}+\W_i^{-1}\right)
	\end{align}
At the minimizer $\X^*$ of $f(\X)$, $\grd f(\X^*)=0$. Instead, let us evaluate $\grd f(\X)$ at $\X_{g}^{\star}$
	\begin{align}
		\grd f(\X_{g}^{\star})&=\sum_{i=1}^{2}\left(-(\X_{g}^{\star})^{-1}\W_i(\X_{g}^{\star})^{-1}+\W_i^{-1}\right)\\
		&=-(\X_{g}^{\star})^{-1}\W_1(\X_{g}^{\star})^{-1}+\W_1^{-1}-(\X_{g}^{\star})^{-1}\W_2(\X_{g}^{\star})^{-1}+\W_2^{-1}\\
		&=-\W_1^{-\frac{1}{2}}\left[\W_1^{-\frac{1}{2}}\W_2\W_1^{-\frac{1}{2}}\right]^{-\frac{1}{2}} \W_1^{-\frac{1}{2}}\W_1\W_1^{-\frac{1}{2}}\left[\W_1^{-\frac{1}{2}}\W_2\W_1^{-\frac{1}{2}}\right]^{-\frac{1}{2}} \W_1^{-\frac{1}{2}}+\W_1^{-1}\nonumber\\
		&-\left[\begin{array}{ll}
		\W_1^{-\frac{1}{2}}\left(\W_1^{-\frac{1}{2}}\W_2\W_1^{-\frac{1}{2}}\right)^{-\frac{1}{2}} \W_1^{-\frac{1}{2}}\W_1^{\frac{1}{2}}\left(\W_1^{-\frac{1}{2}}\W_2\W_1^{-\frac{1}{2}}\right) \W_1^{\frac{1}{2}}
		\\
	 \times
	 \W_1^{-\frac{1}{2}}\left(\W_1^{-\frac{1}{2}}\W_2\W_1^{-\frac{1}{2}}\right)^{-\frac{1}{2}} \W_1^{-\frac{1}{2}}
		\end{array}\right]+\W_2^{-1}\\
	&=-\W_2^{-1}+\W_1^{-1}-\W_1^{-1}+\W_2^{-1}=0
	\end{align}
Therefore, because of the geodesically strong convexity of $f(\X)$, the matrix geometric mean of two symmetric positive definite (SPD) matrices is the unique minimizer of the function $f(\X)$.

\vskip 0.2in
\bibliography{maniopt_ref_Arxiv}

\end{document}